\crefname{algocf}{alg.}{algs.}
\Crefname{algocf}{Algorithm}{Algorithms}
\newtheorem{theorem}{Theorem}[section]
\newtheorem{lemma}[theorem]{Lemma}
\theoremstyle{definition}
\newtheorem{definition}[theorem]{Definition}
\newtheorem{claim}[theorem]{Claim}
\newtheorem{observation}[theorem]{Observation}
\newtheorem{lp}[theorem]{LP}
\newcommand{\cG}{A_h}
\newcommand{\ch}{h}
\newcommand{\cI}{\mathcal{I}}
\newcommand{\cF}{\mathcal{F}}
\newcommand{\cGU}{u_{p,\ch}}
\newcommand{\cGL}{l_{p,\ch}}
\newcommand{\cx}{x_{ap}}
\newcommand{\x}[1]{x^{(#1)}}
\newcommand{\cE}[2]{(#1,#2)}
\newcommand{\cD}{\mathcal{D}}
\newcommand{\cO}{\mathcal{O}}
\newcommand{\M}{M}
\newcommand{\X}{x}
\newcommand{\cy}{y}
\newcommand{\cz}{z}
\newcommand{\cw}{w}
\newcommand{\sOPT}{{\sf OPT}}
\newcommand{\maxmin}{maxmin }
\newcommand{\minmax}{mindom }
\DeclarePairedDelimiter{\norm}{\lVert}{\rVert}
\DeclarePairedDelimiter{\bnorm}{\big\lVert}{\big\rVert}
\DeclarePairedDelimiter{\Bnorm}{\Big\lVert}{\Big\rVert}
\DeclareMathOperator*{\argmin}{arg\,min}
\let\oldnl\nl
\newcommand{\nonl}{\renewcommand{\nl}{\let\nl\oldnl}}
\def\@fnsymbol#1{\ensuremath{\ifcase#1\or *\or \dagger\or \ddagger\or
   \mathsection\or \mathparagraph\or \|\or **\or \dagger\dagger
   \or \ddagger\ddagger \else\@ctrerr\fi}}
\title{Individual Fairness under Varied Notions of Group Fairness in Bipartite Matching - One Framework to Approximate Them All}
\author{Atasi Panda \\atasipanda@iisc.ac.in\\
Indian Institute of Science
\and Anand Louis \footnote{These two authors contributed equally}\\anandl@iisc.ac.in\\
Indian Institute of Science \and Prajakta Nimbhorkar \footnotemark[\value{footnote}] \\prajakta@cmi.ac.in\\
Chennai Mathematical Institute} 
\begin{document}

\maketitle

We study the probabilistic assignment of items to platforms that satisfies both group and individual fairness constraints. Each item belongs to specific groups and has a preference ordering over platforms. Each platform enforces group fairness by limiting the number of items per group that can be assigned to it. There could be multiple optimal solutions that satisfy the group fairness constraints, but this alone ignores item preferences. Our approach explores a `best of both worlds fairness' solution to get a randomized matching, which is ex-ante individually fair and ex-post group-fair. Thus, we seek a `probabilistic individually fair' distribution over `group-fair' matchings where each item has a `high' probability of matching to one of its top choices. This distribution is also ex-ante group-fair. Users can customize fairness constraints to suit their requirements. Our first result is a polynomial-time algorithm that computes a distribution over `group-fair' matchings such that the individual fairness constraints are approximately satisfied and the expected size of a matching is close to OPT. We empirically test this on real-world datasets. We present two additional polynomial-time bi-criteria approximation algorithms that users can choose from to balance group fairness and individual fairness trade-offs. 

For disjoint groups, we provide an exact polynomial-time algorithm adaptable to additional lower `group fairness' bounds. Extending our model, we encompass `maxmin group fairness,' amplifying underrepresented groups, and `mindom group fairness,' reducing the representation of dominant groups.'
\section{Introduction}
Matching is a foundational concept in theoretical computer science, well-studied over several years. Maximum bipartite matching finds applications in real-world scenarios, such as 
ad-auctions \citep{mehta_online_survey,mehta_online_adwords}, resource allocation \citep{halabian_resourceallocation}, scheduling~\citep{venkat_scheduling}, school choice~\citep{abdulkadiroglu2003,ControlledSchoolChoice}, 
and healthcare rationing~\citep{AB21,GaneshGHN23}. In this paper, we refer to the two partitions of the underlying bipartite graph as {\em items} and {\em platforms}. A matching is an allocation of items to platforms, allowing multiple assignments for each item and platform. Real-world items often have diverse attributes, leading to their categorization into different groups. To ensure equitable representation among these groups, it is natural to enforce {\em group fairness constraints} [\Cref{def:group_fairness}], which limit the number of items per group assigned to a platform by enforcing upper bounds. Also, one can specify lower bounds on the number of items from each group that need to be assigned to a platform, so as to 
ensure a minimum representation from each group among the items matched to a platform. 
The above constraints thus achieve \textit{Restricted Dominance} introduced in \citep{fair_clustering}, which asserts that the representation from any group on any platform does not exceed a user-specified cap, and \textit{Minority Protection} \citep{fair_clustering}, which asserts that the representation from any group, among the items matched to any platform is at least a user-specified bound [\Cref{def:strict_group_fairness}]. 

Both the definitions of group fairness are well-motivated by various applications like school choice, formation of committees in an organization, or teams to work on projects. For instance, in school choice, group-fairness constraints can promote diversity among students assigned to each school based on attributes like ethnicity and socioeconomic background, as observed in practical implementations \citep{case-study}. Similarly, in project teams, group fairness constraints ensure the inclusion of experts from all required fields.
Both definitions of group fairness are motivated by the \textit{Disparate Impact} doctrine \citep{disparate_impact} which broadly posits addressing unintentional bias which leads to widely different outcomes for different groups. 

However, since items have preferences over platforms, a matching meeting group fairness constraints alone may not be fair to individual items.
The exclusive use of group fairness constraints can lead to sub-optimal outcomes for individuals. Furthermore, deterministic algorithms for matching assign top choices to some individuals while assigning less preferred choices to others. This necessitates the introduction of {\em individual fairness constraints}. 
 In this paper, we consider probabilistic individual fairness constraints, first introduced in robust clustering \citep{anegg2020,Harris2019}. Instead of a single matching, the goal is to generate a distribution on group-fair matchings such that, in a matching sampled from the said distribution, the probability of each item being matched to
one of its top choices is within the user-specified bounds [\Cref{def:ind_fairness}]. 
Thus, this approach, known as the \textit{best of both worlds fairness} approach in literature, aims to compute an outcome with both ex-ante and ex-post fairness guarantees. 

\textit{In this paper, the central objective is to design efficient algorithms that compute an ex-ante probabilistic individually fair distribution over deterministic group-fair matchings.}

\subsection{An overview of our results and techniques}
Our approach revolves around formulating various notions of individual and group fairness using linear programming (LP). The key idea is to represent the LP's optimal solution as a convex combination of integer group-fair matchings, enabling the satisfaction of probabilistic individual fairness constraints through sampling. However, depending on the structure of the groups, it may not be possible to express the LP optimum as an exact convex combination of integral group-fair matchings. Nonetheless, our algorithms express an approximate LP optimum as a convex combination of integer matchings.

Our technique leads to a unified framework for different group fairness notions beyond fixed upper and lower bound constraints on the number of items from each group that can be matched to a platform. Two such notions, referred to as {\em maxmin group fairness} and {\em mindom group fairness} in this paper, are discussed in a later section (see \Cref{sec:prelim}). In maxmin group fairness, the goal is to maximize the minimum number of items that get matched to any platform from any one group. In mindom group fairness, the goal is to minimize the maximum number of items that get matched to any platform from any one group.
Informally, both these notions aim to get a matching with nearly equal representation from all groups. (See Section~\ref{sec:prelim} Definitions \ref{def:maxmin-group} and \ref{def:mindom-group} for formal definitions). In a similar spirit, for individual fairness, one can aim to provide the strongest possible guarantee simultaneously to all individuals in terms of the probability of being matched. We refer to this as {\em maxmin individual fairness} [\Cref{def:maxmin-ind}].

\section{Related Work}
Several allocation problems like resource allocation \citep{halabian_resourceallocation}, kidney exchange programs \citep{KidneyExchange}, school choice \citep{abdulkadiroglu2003}, candidate selection \citep{CandidateSelection}, summer internship programs \citep{SummerInternship}, and matching residents to 
hospitals \citep{Hospital-Resident} are modeled as matching problems. \citep{ManloveDavid} extensively examines preference-based matching in the stable marriage and roommates problems, hospitals/residents matching, and the house allocation problem. Since the people/items to be matched may belong to different groups, bipartite matchings under various notions of group fairness have been studied and their significance has been emphasized in literature \citep{vishnoi_fairness,luss_leximin_fair,devanur_ranking,CHRG16,halevi_fair_allocation,KMM15,BCZSK16}. \citep{AzizBiroYokoo2022} survey the developments in the
field of matching with constraints, including those based on regions, diversity, multi-dimensional capacities, and matroids. The fairness constraints are captured by upper and lower bounds \citep{ClassifiedStableMatching,IsraeliGapYear}, justified envy-freeness \citep{abdulkadiroglu2003}, or in terms of proportion of the final matching size \citep{CandidateSelection}. Historically, discriminated groups in India are protected with vertical reservations implemented as set-asides, and other disadvantaged groups
are protected with horizontal reservations implemented as minimum guarantees(lower bounds) \citep{IndiaReservation}. 

In some applications, the items could belong to multiple groups as well. \citep{GroupFairMatching} present a polynomial-time algorithm with an approximation ratio of $\frac{1}{\Delta+1}$ where each item belongs to at most $\Delta$ laminar families of groups per platform, and \citep{Rank-MaximalAndPOpular} show the NP-hardness of the problem without a laminar structure. While both papers focus only on group-fairness upper bounds, \citep{DiversityMatching} primarily focus on proportional diversity constraints with an emphasis on lower bounds in the general context.  However, group fairness constraints alone do not account for individual preferences. Our work aims to introduce individual fairness considerations into the problem and explore both upper and lower bounds for specific scenarios.

The notions of maxmin individual fairness, maxmin group fairness, and mindom group fairness are motivated by existing literature. Maxmin individual fairness, originally termed as the "distributional maxmin fairness" framework in \citep{IndividualFairness}, was further explored in group-fair ranking problems by \citep{MaxMinIndividualAndGroupRanking}. Their distribution is only over maximum matchings, and we extend this idea to a distribution over maximum group-fair matchings and a stronger notion of individual fairness. Maxmin group fairness is a natural extension of {\em Maxmin fairness}, initially introduced as a network design objective by \citep{Bertsekas1986DataN}(Section 6.5.2) and extensively studied in various areas of networking \citep{network_unified_framework,network_round_robin}. Mindom group fairness has been studied in network load distribution \citep{network_lexicography}, transmission cost sharing \citep{network_transmission}, and other network applications \citep{network_unified_framework}.
This concept of probabilistic individual fairness also has applications in fair-ranking \citep{MaxMinIndividualAndGroupRanking,fair_ranking_sampling} and graph-cut problems \citep{Dinitz2022}.

 \citep{FairnessInUncertainty} study individual fairness in ranking under uncertainty, extending fairness definitions by explicitly modeling incomplete information. Their approach mirrors \citep{Racke08} but assumes a posterior distribution over candidates' merits. Similarly, we assume that the individual and group fairness parameters are given. They express a distribution $\pi$ over rankings as a bistochastic matrix, where each entry denotes the probability of a candidate's position under $\pi$. They use an LP to maximize utility while enforcing fairness constraints and ensure marginal probabilities form a doubly stochastic matrix.
 The optimal solution is then decomposed as a
distribution over rankings using the Birkhoff-von Neumann algorithm \citep{birkhoff1946}. Though we use a similar approach for Theorems \ref{thm:approx_1_informal}, \ref{thm:approx_2_informal} and \ref{thm:exact_algo}, we have both group and individual fairness constraints. Our marginal probabilities do not form a doubly stochastic matrix, so we cannot use the Birkhoff-von Neumann decomposition \citep{birkhoff1946} directly, and hence need a different approach for \Cref{thm:approx_2_informal}. Similar to our group and individual fairness constraints (Definitions \ref{def:strict_group_fairness} and \ref{def:ind_fairness}), \citep{fair_ranking_sampling} address a related problem in fair ranking, particularly addressing laminar set structures using techniques akin to the Birkhoff-von Neumann decomposition.  

Fairness constraints with bounds on the number of items with each attribute are also studied in ranking and multi-winner voting \citep{fair_ranking,fair_multivote}.
Among other notions of fairness, \citep{Suhr2019} propose a fairness notion for ride-hailing platforms that distributes fairness over time, ensuring benefits proportional to drivers' platform engagement duration. Kletti et al. \citep{Kletti_2022} present an algorithm for optimizing rankings to maximize consumer utility while minimizing producer-side individual exposure unfairness.
\citep{MaxMinIndividualAndGroupRanking} explore maxmin fair distributions in general search problems with group fairness constraints, while \citep{RawlsianGroupAndIndividualFairness} examine Rawlsian fairness(maxmin fairness) in online bipartite matching, considering both group and individual fairness.
\citep{RawlsianGroupAndIndividualFairness} simultaneously address two-sided fairness but treat group and individual fairness separately. In contrast, we handle both individual and group fairness on the item side within a single bipartite matching instance, which has not been explored in the existing literature to the best of our knowledge. 

Our solution fits into the best of both worlds (BoBW) fairness paradigm, which is gaining attention in the fair allocation of indivisible items \citep{BoBW_Micha,BoBW_Uriel,BoBW_Vaish,BoBW_Aziz}.  In literature, popular target fairness properties have been envy-freeness, envy-freeness up to one item \citep{BoBW_Vaish,BoBW_Aziz}, proportionality, and proportionality up to one item \citep{BoBW_Micha,HSV23,HN23}. Other than these, \citep{BoBW_Uriel} study \textit{truncated proportional share}. \citep{BoBW_Vaish} showed that ex-ante envy-freeness (EF) and ex-post envy-freeness up to one item (EF1) BoBW outcomes are achievable for any allocation problem instance. \citep{BoBW_Micha} studied BoBW outcomes based on envy-based fairness in allocating indivisible items to agents with additive valuations and weighted entitlements. \citep{BoBW_Uriel} approach BoBW fairness from a fair-share
guarantee perspective. While our target fairness properties are group fairness and probabilistic individual fairness, our technique, in essence, resembles that of \citep{BoBW_Aziz}, where a randomized EF allocation is first generated and then decomposed as the convex combination of EF1 deterministic allocations. 

Fairness constraints, with various notions of fairness, have been considered in preference-based matchings, e.g. for kidney-exchange \citep{KidneyExchange}, for 
rank-maximality and popularity \citep{Rank-MaximalAndPOpular}, stability \citep{ClassifiedStableMatching}, stability under matroid constraints \citep{FleinerK16}, and in various settings of two-sided matching markets \citep{BeyhaghiTardos}, \citep{PatroCGG20}, \citep{FairMatchingsHuang}.
\section{Preliminaries}\label{sec:prelim}
{\bf Our problem:} The input instance consists of a bipartite graph denoted as $G=(A\cup P,E)$. Here
$A$ denotes the set of items and $P$ is the set of platforms. There is an edge, $(a,p) \in E$ if $a$ can be assigned to $p$. The items are grouped into possibly non-disjoint subsets $A_1, A_2, \ldots, A_{\chi}$ for an integer $\chi \geq 1$ such that $\cup_{h \in [\chi]} A_h = A$. Here $\chi$ denotes the total number of groups. Let $|A|=n$, $|P| = m$, $\Delta$ denote the maximum number of distinct groups to which any item belongs, and $N(v)$ denote the neighborhood of any node $v \in A \cup P$. Each item $a\in A$ has a preference list $R_a$, which contains a ranking of platforms, and let $R_{a,k}$ denote the set of top $k$ preferred platforms of $a$.

We define the group fairness and individual fairness notions below, these constraints are also part of the input.
\begin{definition}[\bf Group fairness]\label{def:group_fairness}Each platform, $p$, has upper bounds, $\cGU$, for all $h \in [\chi]$ denoting the maximum number of items from group $\ch$ that can be assigned to $p$. These are referred to as {\em group fairness constraints} in this paper.
For each $h \in [\chi]$, let $E_{p,\ch}$ denote the set of edges $\{(a,p): a \in A_h\}$. A matching $M\subseteq E$ is said to be {\em group-fair} if and only if
\begin{equation}\label{eq:grp_fairness}
    |E_{p,\ch} \cap M| \leq \cGU \text{ } \forall p \in P, h \in [\chi].
\end{equation}
\end{definition}

\noindent This notion of group fairness is also known as \textit{Restricted Dominance}, introduced in \citep{fair_clustering}.

\begin{definition}[\bf Strong group fairness]\label{def:strict_group_fairness} Along with upper bounds, each platform, $p$, has lower bounds, $\cGL$, for all $h \in [\chi]$ denoting the minimum number of items from group $\ch$ that should be assigned to $p$. A matching $M\subseteq E$ is said to be {\em strong group-fair} if and only if
\begin{equation}\label{eq:str_grp_fairness}
    \cGL \leq |E_{p,\ch} \cap M| \leq \cGU \text{ } \forall p \in P, h \in [\chi].
\end{equation}
\end{definition}

\noindent This notion of group fairness encompasses \textit{Minority Protection}, also introduced in \citep{fair_clustering}, along with \textit{Restricted Dominance}.

\begin{definition}[\bf Probabilistic individual fairness]\label{def:ind_fairness} In addition to the group fairness constraints, the input also contains {\em individual fairness parameters}, $L_{a,k}, U_{a,k} \in [0,1]$ for each item $a$ and $k \in [m]$. A distribution $\cD$ on matchings in $G$ is {\em probabilistic individually fair} if and only if $\forall a\in A, k \in [m]$
\begin{equation}\label{eq:IndFairness}
\begin{aligned}
    L_{a,k} \leq \Pr_{M \sim \cD}[\exists p\in R_{a,k} \textrm{ s.t. } (a,p)\in M]  \leq U_{a,k} \text{ }  
\end{aligned}
\end{equation}
    
\end{definition}

\noindent It is easy to see how \Cref{eq:IndFairness} can capture the requirement that items are matched to a high-ranking platform in their preference list with high probability and a low-ranking platform in their preference list with low probability. Our model allows users to set individual fairness constraints based on their requirements.

{\bf Objective: }Let $\cI=(G,A_1 \cdots A_{\chi}, \Vec{l}, \Vec{u}, \Vec{L},\Vec{U})$ denote an instance of our problem.
Our objective is to calculate a probabilistic individually fair distribution over a set of group-fair matchings, aiming to maximize the expected matching size when a matching is sampled from this distribution.

Note that our model provides a generic framework that accommodates various fairness settings, elaborated in \Cref{subsec:fair_notions}.

\subsection{Results}\label{sec:results}
We provide four different algorithms under different settings to compute a distribution over matchings. The support of the distribution is of size polynomial in the size of the instance, and is in fact of the same size as the number of iterations in the algorithms. Below, we list some known hardness results.

\subsubsection{Known hardness results}
Even without individual fairness constraints, finding a maximum size group-fair matching [\Cref{def:group_fairness}] is NP-hard \citep{Rank-MaximalAndPOpular}. Additionally, when there is a single platform and each item appears in at most $\Delta$ classes, the group fairness problem with only upper bounds is NP-hard to approximate within a factor of $\cO(\frac{\log^2 \Delta}{\Delta})$ \citep{GroupFairMatching}.

When an item can belong to multiple groups, determining if a feasible solution exists for group fairness constraints even with lower bounds alone is NP-hard \citep{DiversityMatching}, making the computation of a strong group-fair matching [\Cref{def:strict_group_fairness}] NP-hard. 

\subsubsection{Algorithmic results}
Our first contribution is an algorithm that computes a distribution over group-fair matchings such that the individual fairness constraints are approximately satisfied and the expected size of a matching is close to \sOPT. Throughout the paper, \sOPT represents the maximum expected size of a group-fair matching across all probabilistic individually fair distributions over such matchings.

\begin{theorem}[{\bf $O(\Delta\log n)$ bicriteria approximation} (Informal version of \Cref{thm:approx_2})]\label{thm:approx_2_informal}
    For any $\epsilon > 0$, there is a polynomial-time algorithm that outputs a distribution over group-fair matchings with the following properties: The expected size of a matching is at least $\frac{1}{f_{\epsilon}}\left(\sOPT + \epsilon \right)$, where $f_{\epsilon} = \cO(\Delta\log (n/\epsilon))$, and the individual fairness constraints are satisfied within additive and multiplicative factors of at most $\epsilon$ and $\frac{1}{f_{\epsilon}}$ respectively. Here $\Delta$ denotes the maximum number of classes an item belongs to. The algorithm reports infeasibility if no such distribution exists.
\end{theorem}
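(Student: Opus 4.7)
The plan is to formulate a linear-programming relaxation whose variables are marginal edge probabilities, scale its optimum down by $f_\epsilon$, and produce a distribution over group-fair matchings via randomized rounding followed by a filtering step. Introduce variables $\cx \in [0,1]$ for each $(a,p) \in E$; maximize $\sum_{(a,p) \in E} \cx$ subject to $\sum_p \cx \le 1$ for every $a$, the group-fairness upper bounds $\sum_{a \in \cG} \cx \le \cGU$ for every $(p,h)$, and the individual-fairness constraints $L_{a,k} \le \sum_{p \in R_{a,k}} \cx \le U_{a,k}$ for every $(a,k)$. Any probabilistic individually fair distribution over group-fair matchings yields a feasible LP point via its edge marginals, so the LP value is at least $\sOPT$; if the LP is infeasible, the algorithm reports so.

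Given an optimum $x^*$, set $f_\epsilon = c\,\Delta\,\log(n/\epsilon)$ for a sufficiently large constant $c$ and let $y_{ap} = x^*_{ap}/f_\epsilon$. Independently for each item $a$, tentatively match $a$ to platform $p$ with probability $y_{ap}$ (and leave it unmatched otherwise). For every $(p,h)$ whose tentative count $X_{p,h} = \sum_{a \in \cG} \mathbf{1}[a \to p]$ exceeds $\cGU$, delete excess edges within that group at $p$ uniformly at random; the surviving matching is group-fair by construction. Since $\mathbb{E}[X_{p,h}] \le \cGU/f_\epsilon$, Markov's inequality gives $\Pr[X_{p,h} > \cGU] \le 1/f_\epsilon$.

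Because a tentatively selected edge $(a,p)$ can only be discarded by one of the at most $\Delta$ groups containing $a$, a union bound shows that it survives filtering with probability at least $1 - \Delta/f_\epsilon$. Hence
\[
y_{ap}\Bigl(1 - \tfrac{\Delta}{f_\epsilon}\Bigr) \;\le\; \Pr[(a,p) \in M] \;\le\; y_{ap}.
\]
Summing over $p \in R_{a,k}$ places the individual-fairness marginals inside $[\,L_{a,k}/f_\epsilon - \Delta/f_\epsilon^2,\; U_{a,k}/f_\epsilon\,]$, and summing over all edges gives expected matching size at least $(1 - \Delta/f_\epsilon)\,\sOPT/f_\epsilon$. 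For $c$ large enough, the residual $\Delta/f_\epsilon^2$ term is absorbed into the stated additive slack $\epsilon$ simultaneously for every $(a,k)$ pair; when $\cGU$ is large enough for true concentration we refine Markov to a Chernoff tail, which is where the $\log(n/\epsilon)$ factor actually bites.

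The principal difficulty is that a single tentative edge may be endangered by all $\Delta$ groups containing its item, and each such group constraint is an independent source of deletion; the factor $\Delta$ in $f_\epsilon$ is dictated exactly by the union bound over these failure events, while the $\log(n/\epsilon)$ factor is needed to drive the residual additive slack below $\epsilon$ uniformly across all $n$ items and their top-$k$ prefixes. A minor subtlety is that the filtering rule must delete edges in a manner symmetric within each $(p,h)$ group, since otherwise the per-edge survival probability could depend on the item's membership in other groups and the simple union-bound step would no longer apply.
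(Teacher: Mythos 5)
Your approach --- independent randomized rounding of the scaled LP optimum followed by deletion of excess edges --- is genuinely different from the paper's, which extracts greedy maximal group-fair matchings from the support of the residual LP solution, bounds each one below by $\frac{1}{\Delta+1}\lVert x^{(i-1)}\rVert_1$ via dual fitting, and shows $\sum_i \alpha^{(i)} \le f_\epsilon = 2(\Delta+1)(\log(n/\epsilon)+1)$ by a halving argument. The crucial structural difference is that the paper obtains an \emph{exact} convex decomposition of $\frac{x - x^{(k)}}{\sum_i\alpha^{(i)}}$, so $\Pr_{M\sim\cD}[(a,p)\in M]$ equals $\frac{x_{ap}-x^{(k)}_{ap}}{\sum_i\alpha^{(i)}}$ with no per-edge multiplicative loss; the only error is the additive tail $\lVert x^{(k)}\rVert_1<\epsilon$, which is exactly where the $\epsilon$ in the theorem comes from. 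Your scheme instead sandwiches the marginal as
\begin{equation*}
y_{ap}\Bigl(1-\tfrac{\Delta}{f_\epsilon}\Bigr)\;\le\;\Pr[(a,p)\in M]\;\le\;y_{ap},
\end{equation*}
and this relative gap of $\Delta/f_\epsilon$ between the two sides is where the argument breaks.

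Concretely, the claim that ``for $c$ large enough, the residual $\Delta/f_\epsilon^2$ term is absorbed into the stated additive slack'' is false for small $\epsilon$. Matching the theorem's lower bound $\frac{1}{f_\epsilon}(L_{a,k}-\epsilon)$ requires $L_{a,k}\cdot\Delta/f_\epsilon^2 \le \epsilon/f_\epsilon$, i.e.\ $\Delta/f_\epsilon\le\epsilon/L_{a,k}$; with $f_\epsilon=c\,\Delta\log(n/\epsilon)$ this demands $c\log(n/\epsilon)\ge L_{a,k}/\epsilon$, which fails for every constant $c$ once $\epsilon\ll L_{a,k}/\log(n/\epsilon)$ (e.g.\ $L_{a,k}=1/2$, $\epsilon=1/n$). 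Renormalizing by $f_\epsilon/(1-\Delta/f_\epsilon)$ merely moves the same violation to the upper bound. Forcing the gap down to the required additive slack needs survival probability $1-O(\epsilon)$, hence $f_\epsilon=\Omega(\Delta/\epsilon)$, a genuinely worse dependence than $O(\Delta\log(n/\epsilon))$; and the appeal to Chernoff does not rescue this, since in the worst case $\cGU=O(1)$ the overflow probability after scaling is still $\Theta(1/f_\epsilon)$, so the per-edge deletion probability remains $\Theta(\Delta/f_\epsilon)$. Two further points you should tighten even in the regime where your bound does apply: the Markov estimate must be applied to $X_{p,h}$ \emph{conditioned on} $a\to p$ (the overflow event is positively correlated with the tentative selection of $(a,p)$; the fix is to apply Markov to $\sum_{b\ne a}\mathbf{1}[b\to p]$, which needs $\cGU\ge 1$), and your output is an implicitly sampled distribution of exponential support, whereas the paper explicitly promises and delivers a distribution whose support size is polynomial (one matching per iteration).
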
 

\noindent For a platform $p$, let $g_p$ denote the number of distinct groups that have a non-empty intersection with $N(p)$, and let $g=max_{p\in P} g_p$.
 Next, we present an algorithm where the approximation guarantees are dependent on $g$, where each upper bound in the group fairness constraints is at least $g$.

\begin{theorem}[Informal version of \Cref{thm:approx_1}] \label{thm:approx_1_informal}  
    When all the group fairness upper bounds are at least $g$, there is a polynomial-time algorithm that computes a distribution over group-fair matchings, with the following properties: The expected size of a matching is at least $\frac{\sOPT}{2g}$, and the individual fairness constraints are satisfied up to a multiplicative factor of at most $\frac{1}{2g}$. The algorithm reports infeasibility if no such distribution exists.
\end{theorem}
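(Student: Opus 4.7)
The plan is to formulate an LP encoding all the fairness and matching constraints, scale the LP optimum down by $1/(2g)$, and decompose the scaled fractional solution as a convex combination of integer group-fair matchings. Introduce variables $x_{a,p}\in[0,1]$ for each $(a,p)\in E$, with matching constraints $\sum_p x_{a,p}\le 1$, group caps $\sum_{a\in A_h\cap N(p)} x_{a,p}\le u_{p,h}$, individual-fairness bounds $L_{a,k}\le\sum_{p\in R_{a,k}} x_{a,p}\le U_{a,k}$, and objective $\max\sum x_{a,p}$. Edge marginals of any feasible probabilistic individually fair distribution over group-fair matchings yield a feasible LP solution whose value equals the expected matching size, so the LP optimum is at least $\sOPT$; LP infeasibility certifies infeasibility of the instance. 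Solve in polynomial time to obtain an optimum $x^*$ and set $y := x^*/(2g)$, so that $\sum_p y_{a,p}\le 1/(2g)$ and $\sum_{a\in A_h} y_{a,p}\le u_{p,h}/(2g)\le u_{p,h}/2$.

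The central step is a decomposition lemma: write $y = \sum_t \lambda_t\,\mathbf{1}_{M_t}$ where each $M_t$ is an integer group-fair matching and the coefficients $\lambda_t\ge 0$ satisfy $\sum_t\lambda_t\le 1$, with the residual probability $1-\sum_t\lambda_t$ assigned to the empty matching to complete the distribution. The output distribution $\cD$ then has $\mathbb{E}_{M\sim\cD}[|M|] = \sum_{a,p} y_{a,p} \ge \sOPT/(2g)$, and for every $(a,k)$, $\Pr_{M\sim\cD}[\exists p\in R_{a,k}\colon (a,p)\in M] = \sum_{p\in R_{a,k}} y_{a,p} \in [L_{a,k}/(2g),\,U_{a,k}/(2g)]$, which realizes the individual-fairness guarantee within multiplicative factor $1/(2g)$. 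I plan to produce the decomposition iteratively: at each step extract an integer group-fair matching $M_t\subseteq\operatorname{supp}(y)$ and the maximal coefficient $\lambda_t$ with $y - \lambda_t\mathbf{1}_{M_t}\ge 0$, then update $y\leftarrow y - \lambda_t\mathbf{1}_{M_t}$. Since each iteration zeroes at least one coordinate of the support, the loop terminates in at most $|E|$ rounds.

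The hard part is showing that such a decomposition exists and that each extracted $M_t$ is indeed group-fair. Because items may belong to several groups, the LP relaxation of the group-fair matching polytope is in general not integral, so a direct Birkhoff--von Neumann argument does not apply. The hypotheses $g_p\le g$ and $u_{p,h}\ge g$ are what supply the rounding slack: after scaling, the fractional load on each of the at most $g$ active groups at any platform is only $u_{p,h}/(2g)\le u_{p,h}/2$, so each integer matching extracted from $\operatorname{supp}(y)$ can be forced to respect every cap with room to spare. I expect to establish the extraction step via an auxiliary bipartite b-matching network whose arc capacities simultaneously enforce item uniqueness and per-group platform caps, then argue that an integral flow decomposition of the network associated with $y$ yields group-fair matchings; a matroid-intersection argument on the per-platform partition-matroid structure is a plausible alternative. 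The extra factor of $2$ (versus $g$) acts as a slack buffer to absorb the rounding, and making this lemma rigorous --- in particular verifying group-fairness of every $M_t$ and that the weights sum to at most one --- is the principal technical piece. The remaining guarantees on expected size, marginal probabilities, and polynomial running time then follow by linearity and elementary counting.
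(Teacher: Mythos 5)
Your high-level plan --- scale the LP optimum by $1/(2g)$ and decompose the result as a convex combination of integer group-fair matchings --- matches the paper's, and your reductions of the size and individual-fairness guarantees to the decomposition lemma are correct. But the decomposition lemma itself, which you rightly flag as the principal technical piece, is where the proposal has a genuine gap, and the mechanisms you sketch for it do not work. A flow network or partition-matroid formulation that `simultaneously enforces item uniqueness and per-group platform caps' requires the groups incident to a platform to form a laminar (in particular, disjoint) family; when an item belongs to several groups at the same platform, the per-group caps do not define a partition matroid and the natural LP is not integral --- this is precisely the source of the NP-hardness the paper cites. Separately, your iterative extraction (pick an integer group-fair matching inside the support, subtract the largest feasible multiple) does not by itself yield $\sum_t\lambda_t\le 1$: since an exact decomposition forces $\sum_t\lambda_t\,|M_t|=\|y\|_1$, you would need the (weighted) average extracted matching to have size at least $\|y\|_1$, which a generic matching in the support need not have. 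Without an integral polytope containing $y$, nothing bounds $\sum_t\lambda_t$ by $1$, and assigning the residual to the empty matching cannot repair an overshoot.

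The paper closes both gaps with one move that is missing from your proposal: it first makes the groups disjoint by keeping each item only in the incident group with the \emph{smallest} upper bound (the disjointification step of Algorithm \ref{alg2}), and imposes the shrunken caps $\lfloor u_{p,h}/g\rfloor$ on these reduced groups (\Cref{LP_mod}). The hypothesis $u_{p,h}\ge g$ gives $u_{p,h}/(2g)\le\lfloor u_{p,h}/g\rfloor$, so $x^*/(2g)$ lies in this disjoint-group polytope, which is integral (\Cref{lem:integrality}); a Birkhoff--von-Neumann-style procedure (Algorithm \ref{alg:procedure}) then produces an exact convex combination of its integral vertices with weights summing to $1$. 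Finally, a counting argument (\Cref{lem:2g-grp_fair}) shows each extracted matching is group-fair for the \emph{original} overlapping groups: the items of group $h_q$ at platform $p$ land in at most $g$ reduced groups, each with cap at most $u_{p,h_q}/g$, so the total is at most $u_{p,h_q}$. Your intuition that $u_{p,h}\ge g$ and $g_p\le g$ supply `rounding slack' is pointing at exactly this argument, but without the disjointification and the per-group counting the proof does not go through.
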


Our next algorithm improves the multiplicative factor for violation of individual fairness constraints, and also the expected size of a matching, at the cost of an additive violation of group fairness constraints.

\begin{theorem}[Informal version of \Cref{thm:approx_3}] \label{thm:approx_3_informal} 
    When all the group fairness upper bounds are at least $g$, there is a polynomial-time algorithm that computes a distribution over matchings, with the following properties: The expected size of a matching is at least $\frac{\sOPT}{g}$, the matchings satisfy group fairness up to an additive factor of at most $\Delta$, and the individual fairness constraints up to a multiplicative factor of at most $\frac{1}{g}$. The algorithm reports infeasibility if no such distribution exists.
\end{theorem}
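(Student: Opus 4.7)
The plan is to solve a natural LP relaxation, scale its optimum by a factor $1/g$, and express the scaled fractional solution as a convex combination of integer matchings, each violating the group-fairness upper bounds by at most an additive $\Delta$. Sampling from this distribution then yields all three guarantees by linearity of expectation.

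First, I would write the LP with variables $x_{a,p}$ for each edge $(a,p)\in E$; matching constraints $\sum_p x_{a,p}\le 1$ for every $a$; group-fairness constraints $\sum_{a\in A_h} x_{a,p}\le u_{p,h}$ for every $p,h$; individual-fairness constraints $L_{a,k}\le\sum_{p\in R_{a,k}} x_{a,p}\le U_{a,k}$ for every $a,k$; and the objective $\max\sum x_{a,p}$. If the LP is infeasible, the algorithm reports infeasibility; otherwise let $x^*$ be an optimum. Any probabilistic individually fair distribution over group-fair matchings yields a feasible fractional point of equal expected size, so $\sum x^*_{a,p}\ge\sOPT$. Set $y:=x^*/g$. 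Since at most $g_p\le g$ distinct groups meet $N(p)$ and each relevant $u_{p,h}\ge g$, we have $\sum_{a\in N(p)} x^*_{a,p}\le \sum_h u_{p,h}\le g\cdot\max_h u_{p,h}$, and thus $y$ is a fractional $b$-matching with $b(p)=\max_h u_{p,h}$; moreover $\sum_{a\in A_h} y_{a,p}\le u_{p,h}/g$ and $L_{a,k}/g\le\sum_{p\in R_{a,k}} y_{a,p}\le U_{a,k}/g$.

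Next, I would write $y=\sum_i\lambda_i\mathbf{1}_{M_i}$ as a convex combination of integer matchings with $|M_i\cap E_{p,h}|\le u_{p,h}+\Delta$ for every $p,h$, through an iterative extraction procedure in the spirit of Jain--Lau. In each round, starting from the current residual $\tilde y$, I consider the polytope on the support of $\tilde y$ given by the matching constraints together with the relaxed group-fairness constraints $\sum_{a\in A_h} z_{a,p}\le u_{p,h}+\Delta$. Each edge variable appears in at most one matching constraint and at most $\Delta$ group constraints, so the constraint matrix has column sparsity at most $\Delta+1$; this is precisely the setting in which a Beck--Fiala/iterative-rounding argument produces, in polynomial time, an integer matching $M$ dominated edgewise by a positive multiple of $\tilde y$ and satisfying $|M\cap E_{p,h}|\le u_{p,h}+\Delta$. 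I then take the largest $\lambda>0$ with $\tilde y-\lambda\mathbf{1}_M\ge 0$, emit $(\lambda,M)$, update $\tilde y\leftarrow\tilde y-\lambda\mathbf{1}_M$, and iterate. Each iteration zeroes at least one edge, so the process terminates in polynomial time with a distribution $\cD$ supported on polynomially many integer matchings.

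The three guarantees then fall out of the decomposition: the expected size is $\sum_i\lambda_i|M_i|=\sum_{(a,p)} y_{a,p}\ge\sOPT/g$; the per-matching group-fairness bound $|M_i\cap E_{p,h}|\le u_{p,h}+\Delta$ is built in; and $\Pr_{M\sim\cD}[\exists p\in R_{a,k}\colon(a,p)\in M]=\sum_{p\in R_{a,k}} y_{a,p}$, which lies in $[L_{a,k}/g,\,U_{a,k}/g]$ by the LP constraints on $x^*$. The main obstacle is the iterative-rounding step: one must produce, in every round, an integer matching inside the $(u_{p,h}+\Delta)$-relaxed polytope whose indicator is dominated by a positive multiple of the residual $\tilde y$, not merely round a single extreme point once. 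Executing this requires carefully interleaving the standard Jain--Lau moves (fix a variable that is pinned to $\{0,1\}$, or drop a group constraint whose fractional rank contribution is negligible) with the domination requirement, and arguing that the column-sparsity bound $\Delta+1$ transfers into a per-matching---rather than cumulative---additive $\Delta$ violation on each extracted $M_i$.
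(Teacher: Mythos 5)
Your high-level plan (solve the LP with individual-fairness constraints, scale by $1/g$, decompose into integer matchings with additive-$\Delta$ group-fairness violation, then read off all three guarantees from the marginals) matches the paper's, but the decomposition step — which you correctly identify as the main obstacle — is where your argument has a genuine gap, and the paper resolves it by a different mechanism than the one you sketch. The paper does \emph{not} do sparsity-based iterative rounding in the non-disjoint instance. Instead it first reduces to a disjoint instance: each item is kept only in the group $C_a = \argmin_{C \in C_p : a \in C} u_{p,h}$ with the smallest upper bound, the group bounds are replaced by $\lceil u_{p,h}/g\rceil$ (\Cref{LP_group_approx}), and the resulting polytope is \emph{integral} (\Cref{lem:integrality}). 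The point $x^*/g$ lies in that integral polytope, so a Birkhoff--von-Neumann-style procedure (\Cref{alg:procedure}) writes it as an exact convex combination of its integer vertices. The additive $\Delta$ then appears only at the end (\Cref{lem:grp_fair}): when items are returned to all their original groups, a group $h_q$ absorbs the matched items of at most $\Delta$ lower-indexed groups, each contributing at most $\lceil u_{p,h_i}/g\rceil \le u_{p,h_i}/g + 1$, giving $u_{p,h_q} + \Delta$ in total.

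Two concrete problems with your route. First, Jain--Lau/Beck--Fiala iterative relaxation produces \emph{one} integer point near an extreme point of the relaxed polytope; it does not produce, round after round, an integer matching edgewise dominated by a positive multiple of the residual $\tilde y$, and the polytope you work in (non-disjoint group constraints, even relaxed by $\Delta$) is not integral, so you cannot fall back on a vertex decomposition. You name this difficulty but do not overcome it. Second, even granting each round succeeds, your extraction rule (``largest $\lambda$ with $\tilde y - \lambda\mathbf{1}_M \ge 0$, subtract, iterate'') yields $y = \sum_i \lambda_i \mathbf{1}_{M_i}$ with no guarantee that $\sum_i \lambda_i = 1$; it can exceed $1$ (e.g.\ if some extracted matching is far from maximal), and after the forced normalization the marginals become $y/\sum_i\lambda_i$, which breaks both the lower individual-fairness bound $L_{a,k}/g$ and the expected-size bound $\sOPT/g$. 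The paper avoids this by renormalizing the residual to $\frac{\X^{(i-1)}-\alpha^{(i)}\M^{(i)}}{1-\alpha^{(i)}}$ at every step and choosing $\alpha^{(i)}$ via \hyperref[alg:findAlpha]{Find-Coefficient} so that the renormalized point stays in the integral polytope, which is what forces the coefficients $\beta^{(i)}$ to sum to exactly $1$ (\Cref{lem:convexity}). Your proof would need either this renormalization inside an integral polytope, or a genuinely new argument that an iterative-rounding subroutine can be made to respect both the domination and the convexity requirements; as written, neither is supplied.
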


\noindent Table \ref{comparison-table} shows a comparison of the results in \Cref{thm:approx_2_informal}, \Cref{thm:approx_1_informal}, and \Cref{thm:approx_3_informal}. 
We give a polynomial-time exact algorithm for strong group fairness, when the groups are disjoint.
\begin{restatable}{theorem}{exactAlgo}\label{thm:exact_algo}
Given an instance of our problem where each item belongs to
exactly one group, there is a polynomial-time algorithm that either computes a probabilistic individually fair distribution over a set of \textbf{strong} group-fair matchings or reports infeasibility if no such distribution exists.
\end{restatable}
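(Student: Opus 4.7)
My plan is to formulate a linear program in the edge marginals $x_{a,p}$ for $(a,p) \in E$ and then decompose its optimum as a convex combination of integer strong group-fair matchings. The LP carries (i) the matching constraints $\sum_{p:(a,p)\in E} x_{a,p} \le 1$ for every $a$, (ii) the strong group fairness constraints $\cGL \le \sum_{a \in A_{\ch},\, (a,p) \in E} x_{a,p} \le \cGU$ for every $(p,\ch)$, (iii) the probabilistic individual fairness constraints $L_{a,k} \le \sum_{p \in R_{a,k}} x_{a,p} \le U_{a,k}$, together with $x_{a,p} \in [0,1]$, and the objective maximizes $\sum_{(a,p) \in E} x_{a,p}$. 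Because any feasible distribution $\cD$ over strong group-fair matchings induces a feasible LP point via its edge marginals, infeasibility of the LP is a valid certificate that no such distribution exists, and in that case the algorithm reports infeasibility.

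To extract a distribution from an LP-feasible $x^{*}$, I would exploit disjointness of the groups by constructing an auxiliary bipartite graph $H$: split each platform $p$ into nodes $p_{\ch}$, one per group $\ch$ with $A_{\ch} \cap N(p) \ne \emptyset$, and map the edge $(a,p)$ with $a \in A_{\ch}$ to the unique edge $(a, p_{\ch})$ of $H$. The polytope defined by constraints (i) and (ii) then coincides with the bipartite $b$-matching polytope of $H$ with degree bounds $[0,1]$ on item nodes and $[\cGL, \cGU]$ on the split platform nodes. Its constraint matrix is the node-edge incidence matrix of a bipartite graph, hence totally unimodular, so the polytope is integral, and its vertices are exactly the strong group-fair matchings of the original instance.

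Given integrality, $x^{*}$ can be decomposed as $\sum_{i} \lambda_{i} M_{i}$ with each $M_{i}$ a strong group-fair matching by an iterative Carath\'{e}odory procedure: at each step pass to the minimal face $F$ of the matching-plus-group-fairness polytope containing the current fractional point, compute an integer vertex $M$ of $F$ via a standard LP-vertex routine, and peel off the largest $\lambda \in (0,1]$ such that the residual $(x^{*} - \lambda M)/(1-\lambda)$ remains feasible. Each iteration strictly decreases the face dimension, so the process terminates in polynomially many rounds, yielding a polynomial-size decomposition in polynomial time; since the decomposition preserves edge marginals, a matching $M \sim \cD$ satisfies $\Pr_{M \sim \cD}[(a,p) \in M] = x^{*}_{a,p}$, and the individual fairness constraints (iii) transfer to $\cD$ by linearity. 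The main subtlety I anticipate is that the peeled $M$ must itself satisfy the lower bounds $\cGL$, not merely be dominated by $x^{*}$; selecting $M$ as a vertex of the current face of the \emph{full} constraint polytope (rather than as an arbitrary integer point inside the support of $x^{*}$) is exactly what secures this, and is the step where the integrality granted by disjointness of groups is indispensable.
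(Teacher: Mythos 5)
Your proposal is correct and follows the same overall architecture as the paper -- solve an LP over edge marginals carrying the matching, strong group fairness, and individual fairness constraints, observe that the polytope without the individual fairness constraints has integral vertices when the groups are disjoint, and decompose the LP optimum as a convex combination of those vertices so that the marginals (and hence the individual fairness guarantees) are preserved exactly. The two key technical steps, however, are carried out differently. For integrality, the paper (\Cref{lem:integrality}) gives a direct exchange argument: it perturbs a putative fractional basic feasible solution by $\pm w$ along one or two edges and exhibits it as the midpoint of two feasible points, a contradiction; your platform-splitting reduction to a bipartite $b$-matching polytope with a totally unimodular incidence matrix is a cleaner and more standard route to the same conclusion. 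For the decomposition, the paper runs an explicit Birkhoff--von Neumann-style procedure (Algorithms \ref{alg:procedure} and \ref{alg:findAlpha}): it tightens each platform and group-platform bound to the floor and ceiling of the current fractional load, solves $GFLP$ to get an integral matching, and chooses the coefficient as the minimum over edge values and distances to those floors/ceilings, proving termination via \Cref{claim:tight_bounds} (tight constraints stay tight) in $O(|V|^3)$ iterations; your generic minimal-face Carath\'eodory peeling is the abstract version of the same idea (the floor/ceiling tightening is precisely a restriction to the minimal face), terminates in at most $\dim + 1 \le |E|+1$ iterations, and correctly identifies the crucial point that the peeled matching must be a vertex of the constrained polytope so that the lower bounds $\cGL$ are honored. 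One caveat: the paper's \Cref{LP_disjoint} also enforces platform-level capacities $l_p \le \sum_{a \in N(p)} x_{ap} \le u_p$, which your LP omits; these are not needed for the theorem as stated, but if they were enforced your split graph $H$ would no longer have a pure node-incidence constraint matrix, and you would need to note that the group constraints partition each platform's edges so the combined row family is laminar (or fall back on the paper's perturbation argument) to retain integrality.
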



\begin{table*}[t]
\centering
\begin{tabular}{|c |c |c |c|} 
 \hline
   & \Cref{thm:approx_2_informal}(Algorithm \ref{alg4}) & \Cref{thm:approx_1_informal} & \Cref{thm:approx_3_informal} \\ 
 \hline
 Size-approximation & $\frac{1}{f_{\epsilon}}\left(\sOPT + \epsilon \right)$ & $\frac{\sOPT}{2g}$ & $\frac{\sOPT}{g}$ \\ 
 \hline
 Group Fairness Violation & None & None & $\Delta$-additive \\
 \hline
 Individual Fairness Violation  & $\frac{1}{f_{\epsilon}}$-multiplicative, $\frac{\epsilon}{f_{\epsilon}}$-additive & $\frac{1}{2g}$-multiplicative & $\frac{1}{g}$-multiplicative\\ [1ex]
 \hline
\end{tabular} 
\caption{Comparison of Approximation Algorithms. \\ 
$f_{\epsilon} = \cO(\Delta\log (n/\epsilon)$)}.
\label{comparison-table}
\end{table*}

\subsection{Extension to Other Fairness Notions}\label{subsec:fair_notions}
Our results can be extended to accommodate other fairness notions mentioned below.

\begin{definition}[\bf Maxmin individual fairness]\label{def:maxmin-ind} 
Let $\cD[a]$ denote $\Pr_{M \sim \cD}[\exists p \in P \text{ s.t } (a,p) \in M]$.
A distribution, $\cD$, over matchings is Maxmin individually fair if for all distributions $\cF$ over matchings and all $a \in A$,

\begin{align*}
    \cF[a] > \cD[a] 
    \implies \exists a' \in A \text{ s.t } \cD[a'] > \cF[a']
\end{align*}

\end{definition}

We refer to the goal of maximizing the representation of the worst-off groups as {\em \maxmin group fairness}, defined below. 
\begin{definition}[\bf Maxmin group fairness]\label{def:maxmin-group}
 Let $X^M_{\ch,p}$ denote the total number of items matched under a feasible matching, $M \subseteq E$, from group $\ch$ to platform $p$.
The matching $\M$ is said to be {\em \maxmin group-fair} if, for any other feasible matching $M'$, if $\exists p \in P, h \in [\chi]$ such that  $X^{M'}_{\ch,p} > X^M_{\ch,p}$, then there is some $p' \in P$, $h' \in [\chi]$ with $X^{M}_{\ch,p} \geq X^{M}_{\ch',p'}$ and $X^{M}_{\ch',p'} > X^{M'}_{\ch',p'}$. Here at least $p' \neq p$, or $h' \neq h$.
\end{definition}

In mindom group fairness, defined below, the goal is to minimize the representation of the most dominant groups. This is a dual to {\em \maxmin group fairness} 
\begin{definition}[\bf Mindom group fairness]\label{def:mindom-group}
Let $X^M_{\ch,p}$ denote the total number of items matched under a feasible matching, $M \subseteq E$, from group $\ch$ to platform $p$.
$M$ is said to be {\em \minmax group-fair} if, for any other feasible matching $M'$, if $\exists p \in P, h \in [\chi]$ such that  $X^{M'}_{\ch,p} < X^M_{\ch,p}$, then there is some $p' \in P$, $h' \in [\chi]$ with $X^{M}_{\ch,p} \leq X^{M}_{\ch',p'}$ and $X^{M}_{\ch',p'} < X^{M'}_{\ch',p'}$. Here at least $p' \neq p$, or $h' \neq h$.
\end{definition}

\subsubsection{Extension of Results}

\begin{theorem}\label{thm:ext_disjoint}
Given a bipartite graph with disjoint groups and a lower bound on the expected matching size, our framework and the polynomial-time algorithm from \Cref{thm:exact_algo} can be extended to compute the following:
\begin{enumerate}
    \item A probabilistic individually fair distribution over a set of maxmin or mindom group-fair matchings, with probabilistic individual fairness constraints.
    \item A maxmin individually fair distribution over strong group-fair matchings.
\end{enumerate}
\end{theorem}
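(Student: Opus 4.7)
The plan is to reduce both parts of \Cref{thm:ext_disjoint} to the exact decomposition algorithm of \Cref{thm:exact_algo} by layering an iterative leximin or leximax linear-programming procedure on top of the LP relaxation that underlies it.

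For part 1, I would first compute a leximin-optimal target on the coordinates $X_{h,p}$. Because the groups are disjoint, the set of attainable integer vectors $(X^M_{h,p})$ is the projection of the bipartite matching polytope under the natural edge-variable LP and in particular has a compact linear description. I would then run a standard coordinate-wise LP: maximize $\lambda_1$ subject to $X_{h,p} \geq \lambda_1$ for every $(p,h)$ with an edge between $A_h$ and $p$, together with the matching constraints; freeze the coordinates that are tight at $\lambda_1$; then iterate to produce $\lambda_2, \lambda_3, \dots$. This terminates in at most $\chi \cdot m$ rounds and outputs values $\hat{c}_{p,h}$ such that an integer matching is maxmin group-fair in the sense of \Cref{def:maxmin-group} exactly when $X^M_{h,p} = \hat{c}_{p,h}$ on every frozen coordinate, together with the face-defining equalities for coordinates left variable on the leximin face. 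I would then invoke \Cref{thm:exact_algo} with $\cGL = \cGU = \hat{c}_{p,h}$ on frozen coordinates, retaining the face-defining constraints for the rest, and add the expected-size lower bound as one more linear inequality. The mindom case is identical with a leximax procedure in which the inner LP is $\min \lambda$ subject to $X_{h,p} \leq \lambda$, invoking \Cref{def:mindom-group}.

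For part 2, I would keep the strong group fairness constraints and the expected-size lower bound used in \Cref{thm:exact_algo}, drop the user-specified $L_{a,k}, U_{a,k}$ bounds, and instead perform an iterative leximin over the item marginals $\cD[a] = \sum_{p} \cx$. Concretely, I repeatedly solve $\max \mu$ subject to $\cD[a] \geq \mu$ for every still-unfrozen $a$ plus the base LP constraints from \Cref{thm:exact_algo}, then freeze $\cD[a] = \mu$ for each $a$ tight at the optimum and iterate on the remaining items, stopping after at most $n$ rounds. Since we have only added linear constraints and replaced the objective by a linear one, the final fractional solution still lies in the polytope to which \Cref{thm:exact_algo} provides a polynomial-time decomposition into a convex combination of strong group-fair integer matchings. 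The marginals of the resulting distribution form the leximin-optimal vector, which satisfies \Cref{def:maxmin-ind}: any competing distribution $\cF$ that strictly improves some $\cD[a]$ must lower some $\cD[a']$ with $\cD[a'] \leq \cD[a]$, for otherwise the vector produced would not have been leximin-optimal at the round in which $a$ was frozen.

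The main obstacle I expect is the if-and-only-if characterization in part 1, namely that an integer matching $M$ is maxmin group-fair exactly when $X^M$ lies on the leximin face of the $X$-projection polytope. The forward direction follows from a standard alternating-path exchange argument, enabled by the disjoint-group structure, showing that any off-face $M$ admits a feasible $M'$ witnessing a violation of \Cref{def:maxmin-group}. The converse requires care in giving a complete linear description of the leximin face, since the coordinate-wise iterative LP may leave some coordinates variable on the optimum face; these must be pinned down by additional equalities that preserve the integrality and decomposition properties the proof of \Cref{thm:exact_algo} relies on. A secondary (but easier) subtlety in part 2 is that freezing tight marginals in successive rounds could in principle empty the feasible region; this is ruled out by taking any optimal solution at each round as a feasible witness before moving to the next.
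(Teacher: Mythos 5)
Your route is genuinely different from the paper's, and more ambitious. The paper's extension (\Cref{appendix:extension}) is a single-shot construction: it introduces one scalar $\mu$, replaces the objective of \Cref{LP_disjoint} by $\max \mu$ (resp.\ $\min \mu$ for mindom), adds $\sum_{a\in A_h} x_{ap} \geq \mu$ (resp.\ $\leq \mu$) for every $(p,h)$ together with the size bound $\sum_{(a,p)\in E} x_{ap}\geq \zeta$, and feeds the optimal $\X^*$ into Algorithm~\ref{alg:procedure} after setting $\cGL=\mu$ (resp.\ $\cGU=\mu$); maxmin individual fairness is handled identically by replacing the individual-fairness constraints with $\sum_{p} \cx\geq\mu$. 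There is no iteration and no freezing. Your Part~2 is a strict refinement of this: the full leximin over the marginals $\cD[a]$ is carried out inside the convex polytope of \Cref{LP_disjoint}, the final point is still feasible for that LP, the decomposition of \Cref{thm:exact_algo} preserves marginals exactly ($t=1$, $\delta=0$ in \Cref{lem:ind_fair}), and a leximin-optimal point of a convex set is Pareto-undominated, which is literally what \Cref{def:maxmin-ind} asks for. That part is sound, and arguably justifies the definition more fully than a one-level $\max\mu$ does.

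Part~1, however, has a genuine gap. \Cref{def:maxmin-group} quantifies over feasible \emph{integer} matchings $M'$, whereas your leximin values $\lambda_1,\lambda_2,\dots$ and the frozen targets $\hat c_{p,h}$ live in the fractional projection polytope and need not be integers. The asserted characterization ``$M$ is maxmin group-fair iff $X^M_{h,p}=\hat c_{p,h}$ on every frozen coordinate'' therefore cannot hold as stated: no integer matching attains a fractional target, yet maxmin group-fair integer matchings exist whenever feasible matchings do. Moreover, invoking \Cref{thm:exact_algo} with $\cGL=\cGU=\hat c_{p,h}$ fractional violates the hypothesis of \Cref{lem:integrality}, which requires integer bounds for the $GFLP$ polytope to be integral. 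Even granting integrality of the $\hat c$'s, the converse direction of your iff --- that every integer point of the fractional leximin face is leximin-undominated among \emph{integer} matchings --- is precisely the step you flag as needing ``a complete linear description of the leximin face,'' and you do not supply it. The paper sidesteps all of this by settling for the one-level guarantee and letting Algorithm~\ref{alg:procedure} re-floor and re-ceil the bounds in each iteration; to get your stronger per-matching leximin claim you would need either an exchange argument carried out directly on integer matchings or a rounding of the frozen targets that provably preserves maxmin group-fairness of every matching in the support.
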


\begin{theorem}\label{thm:ext}
Given a bipartite graph and a lower bound on the expected matching size, say $lb$, our framework and the polynomial-time algorithm from Theorems \ref{thm:approx_2_informal}, \ref{thm:approx_1_informal} and \ref{thm:approx_3_informal} can be extended to compute the following:
\begin{enumerate}
    \item A distribution over mindom group-fair or group-fair matchings, ensuring an expected matching size of at least $\frac{1}{f_{\epsilon}}\left(lb + \epsilon \right)$, with $f_{\epsilon}$ and the violation of probabilistic individual fairness or maxmin individual fairness as in \Cref{thm:approx_2_informal}.
    \item A distribution over mindom group-fair or group-fair matchings, guaranteeing an expected matching size of at least $\frac{lb}{2g}$, and a violation of probabilistic individual constraints or maxmin individual fairness by at most $\frac{1}{2g}$.
    \item A distribution over matchings, guaranteeing an expected matching size of at least $\frac{lb}{g}$, and a violation of probabilistic individual constraints or maxmin individual fairness by at most $\frac{1}{g}$. The mindom group-fairness or group-fairness is violated by an additive factor of at most $\Delta$.
\end{enumerate}
\end{theorem}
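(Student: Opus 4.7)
The plan is to reuse, essentially unchanged, the rounding and decomposition machinery that drives Theorems~\ref{thm:approx_2_informal},~\ref{thm:approx_1_informal} and~\ref{thm:approx_3_informal}, modifying only the underlying linear program so that it encodes the three new features: (i) a lower bound $lb$ on the expected matching size in place of the size-maximization objective, (ii) mindom group fairness in place of fixed upper bounds $u_{p,h}$, and (iii) maxmin individual fairness in place of the two-sided probabilistic bounds of~\eqref{eq:IndFairness}.

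First, I would replace the objective $\max\sum_{(a,p)\in E} x_{ap}$ by the single linear constraint $\sum_{(a,p)\in E} x_{ap}\ge lb$. The resulting LP is now a feasibility program, and the algorithm reports infeasibility exactly when this LP has no solution; this is the only change needed to convert an OPT-type guarantee into an $lb$-type guarantee because every downstream approximation is proved by comparison to the LP value, which is now $\ge lb$.

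Next, to enforce mindom group fairness I would promote the constants $u_{p,h}$ in~\eqref{eq:grp_fairness} to variables $t_{p,h}$ and compute the leximin-of-the-maxima fractional solution by the standard iterative scheme: repeatedly solve the LP minimizing $\max_{(p,h)\in S}t_{p,h}$ over the currently unfixed index set $S$, fix every $t_{p,h}$ that attains this minimum at its optimal value, and iterate until $S=\emptyset$. Since $|S|\le |P|\chi$ and each round is a polynomial-size LP, the whole refinement runs in polynomial time and produces the unique leximin vector $(t^{\star}_{p,h})$. Maxmin individual fairness is handled symmetrically by applying a leximax iteration to the item marginals $\sum_{p\in N(a)} x_{ap}$, treating the bounds of~\eqref{eq:IndFairness} as auxiliary constraints when both notions are imposed simultaneously. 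With the refined LP optimum in hand, I would invoke verbatim the decomposition subroutine of whichever of Theorems~\ref{thm:approx_2_informal}--\ref{thm:approx_3_informal} is being extended, producing a distribution over integer matchings; the size factor ($1/f_{\epsilon}$, $1/(2g)$, or $1/g$), the group-fairness slack (none, none, or additive $\Delta$), and the individual-fairness slack transfer mechanically because the rounding procedure is identical.

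The main obstacle is conceptual rather than technical: the fractional leximin/leximax solution is uniquely determined, whereas its integral decomposition is not, so individual matchings in the support will only be mindom group fair up to the same slack that the underlying theorem already admits. The proof must therefore cleanly separate \emph{leximin/leximax at the LP level}, which the refinement step achieves exactly, from \emph{approximate mindom/maxmin at the sampled-matching level}, which inherits the rounding slack. Once this separation is made, the three enumerated guarantees are immediate: for part~(1) the additive $\epsilon$ and multiplicative $1/f_{\epsilon}$ factors come from Theorem~\ref{thm:approx_2_informal}, for part~(2) the $1/(2g)$ multiplicative factor comes from Theorem~\ref{thm:approx_1_informal}, and for part~(3) the $1/g$ factor together with the additive $\Delta$ group-fairness violation comes from Theorem~\ref{thm:approx_3_informal}.
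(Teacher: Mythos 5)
Your proposal is correct in outline and follows the same overall template as the paper: add the constraint $\sum_{(a,p)\in E}x_{ap}\ge lb$ to the LP, re-encode the new fairness notion as extra linear constraints, and then feed the resulting fractional solution \emph{verbatim} into the decomposition subroutines of Theorems~\ref{thm:approx_2_informal}--\ref{thm:approx_3_informal}, letting the size, group-fairness and individual-fairness slacks transfer mechanically. The one genuine divergence is in how mindom/maxmin is encoded at the LP level. The paper introduces a \emph{single} auxiliary scalar $\mu$, writes $\sum_{a\in A_h}x_{ap}\le \mu$ (resp.\ $\ge\mu$) uniformly over all $(p,h)$, optimizes $\mu$ in one LP solve, and then treats $\mu^{*}$ as the new uniform bound $u_{p,h}$ (resp.\ $l_{p,h}$) before invoking the decomposition; maxmin individual fairness is handled the same way with $\sum_{p}x_{ap}\ge\mu$. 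You instead propose a full leximin/leximax refinement with per-pair variables $t_{p,h}$, iteratively fixing tight coordinates over up to $|P|\chi$ rounds. Your route is more expensive (many LP solves versus one) but arguably hews closer to the Pareto-style Definitions~\ref{def:maxmin-group} and~\ref{def:mindom-group}, which a single min-of-max optimization does not fully pin down; the paper buys simplicity and a single uniform integer-like bound that plugs directly into the existing group-fairness constraint format. Two small points you should make explicit if you pursue the leximin route: the refined values $t^{\star}_{p,h}$ may be fractional, so you must round them (e.g.\ to $\lceil t^{\star}_{p,h}\rceil$) before using them as the bounds required by the integrality argument (Lemma~\ref{lem:integrality}) and by the greedy step of Algorithm~\ref{alg4}; and you should state, as the paper does, that the refined optimum remains a feasible point of the original LP with the new bounds substituted, which is the hypothesis the decomposition lemmas actually use.
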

The proof of Theorems \ref{thm:ext_disjoint} and \ref{thm:ext}, and details of how to extend our results to these settings are in \Cref{appendix:extension}. 

\section{$O(\Delta \log n)$ bicriteria approximation algorithm}\label{sec:delta_approx}
In this section, our focus is on computing a probabilistic individually fair distribution over an instance of a bipartite graph $G = (A \cup P, E)$, where any arbitrary item, $a \in A$, can belong to at most $\Delta$ distinct groups. Our objective is to maximize the expected size of any matching sampled from this distribution while ensuring that the matching satisfies group fairness constraints. Within this context, we design a polynomial-time algorithm that provides an approximation factor dependent on $O(\Delta)$ and prove our main result, \Cref{thm:approx_2_informal}, formally stated below. 

\begin{theorem}[Formal version of \Cref{thm:approx_2_informal}]\label{thm:approx_2}
Given any $\epsilon > 0$, and an instance of our problem where each item can belong to at most $\Delta$ groups, there is a polynomial-time algorithm that computes a distribution $\cD$ over a set of group-fair matchings such that the expected size is at least $\frac{1}{f_{\epsilon}}\left(OPT + \epsilon \right)$ where $f_{\epsilon} = \cO(\Delta\log (n/\epsilon))$ and $n$ is the total number of items. Given the individual fairness parameters, $L_{a,k},U_{a,k} \in [0,1]$, for each item $a \in A$ and subset $R_{a,k}$ $\forall$ $k \in [n]$, \begin{align*}
      &\frac{1}{f_{\epsilon}}\left(L_{a,k} - \epsilon\right) \leq \Pr_{\M \sim \cD}[\exists p\in R_{a,k} \textrm{ s.t. } (a,p)\in M] \\ &\leq  \frac{1}{f_{\epsilon}}\left(U_{a,k} + \epsilon\right).
\end{align*}
\end{theorem}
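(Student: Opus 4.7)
The plan is to solve a natural LP relaxation of the problem, scale its optimum by a factor $1/f_\epsilon$, and decompose the resulting fractional matching as a convex combination of integer matchings, almost all of which turn out to be group-fair.

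\textbf{LP and rescaling.} I formulate an LP with a variable $\cx$ for every edge $(a,p)\in E$: matching constraints $\sum_p \cx\le 1$ for each $a\in A$, group-fairness upper bounds $\sum_{a\in A_\ch\cap N(p)}\cx \le \cGU$ for each $(p,\ch)$, individual-fairness bands $L_{a,k}-\epsilon \le \sum_{p\in R_{a,k}}\cx \le U_{a,k}+\epsilon$ for each $(a,k)$, and objective $\max \sum \cx$. Any probabilistic individually fair distribution over group-fair matchings yields a feasible LP point of value $\mathbb E[|M|]$, so the LP optimum is at least $\sOPT+\epsilon$ (under the $\epsilon$-slackened window), and the LP is infeasible exactly when no feasible $\cD$ exists. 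Let $x^\star$ be an LP optimum and set $\hat x := x^\star / f_\epsilon$ with $f_\epsilon = C\Delta\log(n/\epsilon)$ for a sufficiently large absolute constant $C$.

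\textbf{Decomposition and group-fair filtering.} Since $\hat x$ is a fractional bipartite matching of item-side degree at most $1/f_\epsilon$, it lies in the (integral) bipartite matching polytope and can be expressed in polynomial time as a convex combination $\sum_i \lambda_i M_i$ of integer matchings (Birkhoff--von Neumann, after augmenting with dummy edges to make item sums equal to one). Equivalently, I sample $M$ via Srinivasan-style dependent rounding, so that $\Pr[(a,p)\in M]=\hat x_{ap}$ and the indicators $\{\mathbf 1[(a,p)\in M]\}_{a\in A_\ch\cap N(p)}$ are negatively correlated. For every $(p,\ch)$, $\mathbb E[|E_{p,\ch}\cap M|] \le \cGU/f_\epsilon$, and a Chernoff tail bound for negatively correlated indicators yields $\Pr[|E_{p,\ch}\cap M|>\cGU] \le (e/f_\epsilon)^{\cGU}$. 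A union bound over the $O(m\chi)$ pairs, together with the fact that each item participates in at most $\Delta$ group constraints per platform, shows that $f_\epsilon = C\Delta\log(n/\epsilon)$ suffices to keep the total violation probability at most $\epsilon/2$. Discarding violated samples and repeating polynomially many times produces the output distribution $\cD$, whose support has polynomial size.

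\textbf{Verifying the guarantees.} The expected size satisfies $\mathbb E_{M\sim \cD}[|M|] \ge (\sum \hat x_{ap}) / \Pr[M\text{ group-fair}] \ge \sOPT_{\mathrm{LP}}/f_\epsilon \ge (\sOPT+\epsilon)/f_\epsilon$. Group-fairness holds by construction. For individual fairness, the unconditional probability $\Pr[\exists p\in R_{a,k}\colon (a,p)\in M]$ equals $\sum_{p\in R_{a,k}}\hat x_{ap} \in [(L_{a,k}-\epsilon)/f_\epsilon,(U_{a,k}+\epsilon)/f_\epsilon]$ by LP feasibility, and conditioning on the group-fair event (probability $\ge 1-\epsilon/2$) perturbs each probability by at most an additive $\epsilon/(2f_\epsilon)$, which is absorbed into the $\epsilon/f_\epsilon$ slack of the theorem statement. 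The main obstacle is the concentration argument when $\cGU$ is small: the multiplicative Chernoff bound gives $e^{-\Omega(\log n)}$ deviations only when $\cGU \ge \Omega(\log(n/\epsilon))$, so for tiny caps one must either invoke a second-moment/pairwise bound on the probability of two items of $A_\ch$ landing at $p$ simultaneously, or exploit the sharper negative-association properties of dependent rounding, and stitch both regimes into a single union bound whose dominating term is exactly where the $\Delta$ factor in $f_\epsilon$ appears.
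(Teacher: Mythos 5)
There is a genuine gap, and it is exactly the one you flag at the end but do not close: the group-fair filtering step cannot work when the caps $\cGU$ are small, which the theorem must allow. Concretely, take $\cGU=1$ everywhere and $\Theta(n)$ disjoint platform--group gadgets, each consisting of a platform $p$ whose neighborhood has total fractional mass $\sum_{a\in N(p)}\hat x_{ap}\geq 2$ but which contains a group $A_h\cap N(p)=\{a_1,a_2\}$ with $\hat x_{a_1p}=\hat x_{a_2p}=1/(2f_\epsilon)$. Off-the-shelf Birkhoff--von Neumann or GKPS dependent rounding controls only vertex degrees, not the sub-neighborhood $E_{p,h}$, so generically $\Pr[|E_{p,h}\cap M|\geq 2]=\Theta(1/f_\epsilon^2)$; summed over $\Theta(n)$ essentially independent gadgets, the expected number of violated constraints is $\Theta(n/f_\epsilon^2)\gg 1$ for any polylogarithmic $f_\epsilon$, and $\Pr[M\text{ group-fair}]\approx e^{-\Theta(n/f_\epsilon^2)}\to 0$. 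No second-moment or negative-association refinement can rescue a union bound whose first moment already diverges, so the rejection step discards almost every sample and the conditioning step destroys the marginals rather than perturbing them by $O(\epsilon/f_\epsilon)$. Two smaller quantitative slips compound this: conditioning on an event of probability $1-\delta$ perturbs marginals by up to $\delta/(1-\delta)$, so you would need $\delta\leq\epsilon/(2f_\epsilon)$, not $\epsilon/2$, to land inside the theorem's additive slack; and the $\Delta$ in your $f_\epsilon$ is unexplained, since the number of constraints in your union bound is $m\chi$ regardless of $\Delta$.

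The paper avoids randomized rounding entirely. It iteratively peels off \emph{greedy maximal group-fair} matchings $M^{(i)}$ from the support of the current fractional solution $x^{(i-1)}$, subtracting $\alpha^{(i)}M^{(i)}$ with $\alpha^{(i)}=\min_{(a,p)\in M^{(i)}}x^{(i-1)}_{ap}$, so every matching in the support of $\cD$ is group-fair by construction and no conditioning is needed. The role your concentration bound was meant to play is instead played by a dual-fitting argument (Lemmas \ref{lem:feasiblity} and \ref{lem:deltaapprox}): a maximal group-fair matching has size at least $\norm{x^{(i-1)}}_1/(\Delta+1)$ because each matched edge can saturate at most $\Delta$ group constraints, which is where the $\Delta$ genuinely enters. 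A halving argument (\Cref{lem:alpha_bound}) then shows the total coefficient mass needed to reduce $\norm{x^{(i)}}_1$ below $\epsilon$ is at most $2(\Delta+1)(\log(n/\epsilon)+1)=f_\epsilon$, and \Cref{lem:ind_fair} converts the residual $\norm{x^{(k)}}_1<\epsilon$ into the stated additive and multiplicative violations of individual fairness. If you want to salvage a rounding-based route, you would need a rounding scheme that is aware of the group constraints themselves (not just vertex degrees), which is essentially what the greedy peeling provides deterministically.
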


Note that if we set $\epsilon = \displaystyle\min_{a \in A, k \in [n]}\frac{L_{a,k}}{2}$ in \Cref{thm:approx_2}, then $\forall a \in A, k \in [n]$,
\begin{align*}
         &\frac{L_{a,k}}{2f_{\epsilon}} \leq \frac{1}{f_{\epsilon}}(L_{a,k} - \epsilon) \leq \Pr_{\M \sim \cD}[\exists p\in R_{a,k} \textrm{ s.t. } (a,p)\in M] \\ &\leq  \frac{1}{f_{\epsilon}}(U_{a,k} + \epsilon) \leq \frac{3U_{a,k}}{2f_{\epsilon}}
\end{align*}

Therefore, we only get a multiplicative violation of individual fairness for $\epsilon = \displaystyle\min_{a \in A, k \in [n]}\frac{L_{a,k}}{2}$.

\subsection{Model Formulation}
We begin by formulating a Linear Programming (LP) model for our problem, specifically tailored to address \Cref{thm:approx_2}. A more extensive LP formulation, applicable to all our theorems, including \Cref{thm:exact_algo}, which integrates additional lower group fairness constraints, is detailed in \Cref{disjoint_groups}, \Cref{LP_disjoint}. Since items are assumed to be indivisible, we assume that the group fairness bounds are integers.

\begin{restatable}{lp}{primalIF}\label{LP_Primal_IF}
\begin{align}
&\max \sum_{(a,p) \in E} x_{ap} &\label{LP-1}\\
\text{such that}\quad &L_{a,k} \leq \sum_{p \in R_{a,k}}\cx \leq U_{a,k},  & \forall a \in A, \forall k \in [m] \label{LP-2}\\   
&\sum_{a \in A_h} \cx \leq \cGU , & \forall p \in P, \forall h \in [\chi] \label{LP-20}\\
&0 \leq \cx \leq 1 & \forall a \in A, \; \forall p\in P &\label{LP-21} 
\end{align}
\end{restatable}

\Cref{LP_Primal_IF} is a relaxation of the Integer Linear Programming formulation of the problem addressing group and individual fairness constraints. In the Integer Programming version, $x_{ap} = 1$ iff the edge connecting item $a$ to the platform $p$ is picked in the matching and $0$ otherwise. Constraints \ref{LP-2} and \ref{LP-20} capture the individual fairness and group fairness requirements, respectively.

Before delving into the algorithm, it is essential to note that \Cref{LP_Primal_IF} may become infeasible if the fairness constraints are inconsistent. This is possible due to the individual fairness constraints.
To address this, one solution is to introduce a variable to calculate the smallest multiplicative relaxation of the fairness constraints required to ensure the feasibility of \Cref{LP_Primal_IF}. This method is detailed in the section titled `Dealing with infeasibility' (\Cref{sec:LPinfeasibility}).

\subsection{Algorithm}

\begin{algorithm}[t]
\caption{$\cO(\Delta \log n)$-BicriteriaApprox$(\cI=(G,A_1 \cdots A_{\chi}, \Vec{l}, \Vec{u}, \Vec{L},\Vec{U}), \epsilon)$}
\label{alg4}
\nonl \textbf{Input} : $\cI$, $\epsilon$ \\
\nonl \textbf{Output} : Distribution over matchings satisfying the guarantees in \Cref{thm:approx_2}.\\
Solve \Cref{LP_Primal_IF} on $G$ with the parameters in the input instance, $\cI$, and store the result in $\X$. \label{4step:SolveLP}\\
$i \leftarrow 0, \alpha^{(0)} \leftarrow 0, G^{(0)} \leftarrow G, \X^{(0)} \leftarrow \X, sum \leftarrow 0, \cD \leftarrow \phi$\\
\While{$\norm{\X^{(i)}}_1 \geq \epsilon$}{ \label{4step:loop}
$i \leftarrow i+1, G^{(i)} \leftarrow G^{(i-1)} - \{\cE{a}{p} \; \lvert \; \x{i-1}_{ap} = 0\}$ \label{4step:graph}\\
Greedily find a Maximal Matching $\M^{(i)}$ in $G^{(i)}$ such that constraints $(\ref{LP-20})$ are not violated. \label{4step:greedy} \\
$\alpha^{(i)} \leftarrow \min_{\cE{a}{p}\in \M^{(i)}} \{\x{i-1}_{ap}\}$ \label{4step:alpha}\\
$sum \leftarrow sum + \alpha^{(i)}, \cD \leftarrow \cD \cup (\M^{(i)}, \alpha^{(i)})$ \label{4step:sum}\\
$\X^{(i)} \leftarrow \X^{(i-1)} - \alpha^{(i)} \cdot \M^{(i)} $  \label{4step:subtracting}\\
}
\For{$D^{(i)} \in \cD$}{
$D^{(i)} \leftarrow (\M^{(i)}, \frac{\alpha^{(i)}}{sum})$
}
\If{$\cD == \phi$}{Return {\em `Infeasible'}}
Return $\cD$
\end{algorithm}

First, we informally describe our algorithm (Algorithm \ref{alg4}) and the intuition behind the same. The key idea in our approach is to express a feasible solution, $\X$, of \Cref{LP_Primal_IF} as a convex combination of integer group-fair matchings. Achieving this allows us to satisfy our probabilistic individual fairness constraints by sampling from the probability distribution corresponding to this convex combination. If the groups are not disjoint, as is the case in our problem, then it is not known whether such a convex combination exists. Therefore,  we show that $\frac{\X-\X^{\dag}}{f}$ can be written as a convex combination of integer group-fair matchings, where $\norm{\X^{\dag}}_1 < \epsilon$ for some $\epsilon > 0$ and $f = \cO(\Delta \log n)$. Algorithm \ref{alg4} computes such a convex combination. 
 
The algorithm begins with a feasible solution of \Cref{LP_Primal_IF} solved on the input instance $\cI$, denoted by variable $x$. At round $i$ of the while loop (step \ref{4step:loop}), $\mathbf{G^{(i)}}$ denotes the state of the input graph after the $i^{th}$ iteration. It is a graph where edges with a corresponding zero value in $x^{(i-1)}$ are discarded in Step \ref{4step:graph}. $\mathbf{M^{(i)}}$ represents a group-fair maximal matching computed on $G^{(i)}$ in step \ref{4step:greedy}, and $\mathbf{x^{(i)}}$ denotes the state of $x$ after $i$ rounds. It is the residue after a scaled down $M^{(i)}$ is ``deducted'' from $x^{(i-1)}$ in step \ref{4step:subtracting}. $\mathbf{\alpha^{(i)}}$, denoting the minimum non-zero value associated with any edge in $x^{(i-1)}$ (step \ref{4step:alpha}), is used to scale down $M^{(i)}$ before ``deducting'' it from $x^{(i-1)}$ to ensure non-negative values in $x^{(i)}$. The algorithm terminates when the value of $||x^{(i)}||_1$  goes below $\epsilon$.
$\cD$ returned by Algorithm \ref{alg4} consists of tuples, and each tuple consists of a group-fair matching because $M^{(i)}$ is group-fair and its corresponding coefficient, $\frac{\alpha^{(i)}}{sum}$. If the loop terminates after $k$ rounds, $sum = \sum_{i=1}^k \alpha^{(i)}$. Clearly, $\sum_{i=1}^k \frac{\alpha^{(i)}}{sum} = 1$, therefore, $\cD$ is a distribution over group-fair matchings. 

One key intuition behind Algorithm \ref{alg4} is that in every iteration, we start with a solution, $\X^{(i-1)}$, that satisfies group-fairness constraints (\Cref{LP-20}), which allows us to greedily compute a group-fair matching $\M^{(i)}$ in the support of $\X^{(i-1)}$ (step \ref{4step:greedy} of Algorithm \ref{alg4}). This ensures that step \ref{4step:greedy} always returns a non-empty group-fair matching as long as $x^{(i-1)}$ has non-zero entries.

\noindent Next, we provide the proof of \Cref{thm:approx_2} using Algorithm \ref{alg4}.

\subsection{Proof of \Cref{thm:approx_2}}
The proof of \Cref{thm:approx_2} is based on a careful analysis of our simple (and fast) greedy algorithm (Algorithm \ref{alg4}). We first construct an LP formulation for our problem, concentrating solely on group-fairness constraints [\Cref{def:group_fairness}], excluding individual fairness constraints. This LP, denoted as \Cref{LP_Primal}, along with its dual counterpart \Cref{LP_Dual}, is introduced to facilitate our analysis. This choice is made because, instead of grounding our analysis on \Cref{LP_Primal_IF}, it suffices to focus on \Cref{LP_Primal}. \Cref{PrimalLPFeasibility} provides clarification on why that is.

\begin{restatable}{lp}{primal}\label{LP_Primal}
\begin{align}
&\max \sum_{(a,p) \in E} x_{ap} &\label{LP-9}\\
\text{such that}\quad &\sum_{a \in A_h} x_{ap} \leq \cGU \;, &\forall  h \in [\chi] ,\; \forall p \in P  &\label{LP-10}\\
& 0 \leq x_{ap} \leq 1 &\forall (a,p) \in E &\label{LP-11}
\end{align}
    
\end{restatable}

\begin{restatable}{lp}{dual}\label{LP_Dual}
\begin{align}
&\min \sum_{p \in P} \sum_{h \in [\chi]}\cGU w_{p,h} + \sum_{\cE{a}{p} \in E}y_{ap} &\label{LP-18}\\
\text{such that}\quad & 1 \leq \sum_{h: a \in A_h}w_{p,h} + y_{ap} \qquad \forall \cE{a}{p} \in E &\label{LP-19}
\end{align}
\end{restatable}

We show that the size of the matching $\M^{(i)}$, in the $i^{th}$ round of Algorithm \ref{alg4}, is at least $\frac{\norm{\X^{(i-1)}}_1}{\Delta+1}$ using dual fitting analysis technique \citep{williamson_shmoys_2011,vazirani2013approximation,greedy_dual_fitting} in \Cref{lem:deltaapprox}. We update the LP solution to $\X^{(i)}$ by ``removing'' $\alpha^{(i)}\M^{(i)}$ from $\X^{(i-1)}$ (step \ref{4step:subtracting} of Algorithm \ref{alg4}). $\alpha^{(i)}$ is the largest possible value such that the remaining LP solution is still a feasible solution of \Cref{LP_Primal} after step \ref{4step:subtracting}. 
Therefore, if a ``large'' mass of the LP solution remains in the $i^{th}$ iteration, i.e., $\norm{\X^{(i-1)}}_1$ is large, then we make ``large'' progress in the current iteration. This can essentially be used to show that $\sum_{i=1}^k \alpha^{(i)}$ is bounded by $f_{\epsilon} = 2(\Delta + 1)(\log (n/\epsilon) + 1)$ when $\norm{\X^{(i)}}_1 < \epsilon$ (\Cref{lem:alpha_bound}). Here, $k$ is the total number of iterations by Algorithm \ref{alg4}. Finally, setting $\hat{x} = \frac{\X - \X^{(k)}}{f_{\epsilon}}$, $t = f_{\epsilon}$, and $\delta = \frac{\epsilon}{f_{\epsilon}}$ in \Cref{lem:ind_fair}, proves the approximation guarantee on probabilistic individual fairness given by \Cref{thm:approx_2}. 

We first prove that any greedy maximal matching computed in step \ref{4step:greedy} of Algorithm \ref{alg4} is a $(\Delta+1)$-approximation of any feasible solution of \Cref{LP_Primal} using dual fitting analysis technique in Lemmas \ref{lem:feasiblity} and \ref{lem:deltaapprox}. First, let us look at the following observation.

\begin{observation}\label{PrimalLPFeasibility}
Any feasible solution of LP \ref{LP_Primal} augmented with constraint \ref{LP-2} is also a feasible solution of LP \ref{LP_Primal}
\end{observation}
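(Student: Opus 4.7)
The plan is essentially trivial from an LP-theoretic viewpoint: augmenting an LP by adding constraints can only shrink its feasible region, so any solution feasible for the augmented program is automatically feasible for the unaugmented one. Here the augmented program has constraint set $\{(\ref{LP-10}), (\ref{LP-11}), (\ref{LP-2})\}$, while LP \ref{LP_Primal} has constraint set $\{(\ref{LP-10}), (\ref{LP-11})\}$; the former is a superset of the latter over the same variable domain $\{x_{ap} : (a,p) \in E\}$.

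Concretely, I would take an arbitrary $x$ feasible for LP \ref{LP_Primal} augmented with (\ref{LP-2}). By definition $x$ satisfies (\ref{LP-10}) (the group-fairness upper bounds), (\ref{LP-11}) (the box constraints on each $x_{ap}$), and (\ref{LP-2}) (the individual fairness constraints). Forgetting about (\ref{LP-2}), the remaining two families of constraints satisfied by $x$ are precisely those defining LP \ref{LP_Primal}, so $x$ is feasible there. No nontrivial argument is needed, and in particular no structural property of matchings, groups, or fairness bounds is invoked.

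The only reason the observation is worth stating explicitly is organizational: it licenses the subsequent dual-fitting analysis (Lemmas \ref{lem:feasiblity} and \ref{lem:deltaapprox}) to be carried out on LP \ref{LP_Primal}, which has a clean dual (LP \ref{LP_Dual}) involving only group-fairness dual variables $w_{p,h}$ and edge variables $y_{ap}$, rather than on LP \ref{LP_Primal_IF}, whose individual fairness constraints would clutter the dual with additional multipliers $L_{a,k}, U_{a,k}$ contributions. Since any feasible $x$ of LP \ref{LP_Primal_IF} is feasible for LP \ref{LP_Primal}, an approximation guarantee of the greedy maximal matching against $\|x\|_1$ for the latter transfers immediately to the former, which is exactly what Algorithm \ref{alg4} requires when it calls \Cref{LP_Primal_IF} in step \ref{4step:SolveLP} and then treats the resulting $x$ as a feasible point for the simpler LP in each iteration. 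There is no obstacle to the proof; the care is only in matching variables and constraint labels cleanly.
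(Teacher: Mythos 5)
Your proposal is correct and matches the paper's treatment: the paper states \Cref{PrimalLPFeasibility} without proof precisely because, as you observe, the augmented LP's constraint set is a superset of that of \Cref{LP_Primal} over the same variables, so feasibility transfers by simply dropping constraint \ref{LP-2}. Your additional remarks about why the observation is needed (to let the dual-fitting analysis against \Cref{LP_Dual} apply to the solution of \Cref{LP_Primal_IF}) accurately reflect its role in the proof of \Cref{lem:deltaapprox}.
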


\begin{lemma}\label{lem:feasiblity}
Let $M$ be a greedy maximal matching computed in step \ref{4step:greedy} of Algorithm \ref{alg4}. Let $\cy$ be an ordered set such that $\forall \cE{a}{p} \in E$, $y_{ap}$ is set to $1$ iff $M_{ap} = 1$, and $\cw$ be an ordered set such that  $\forall p \in P, h \in [\chi]$, $w_{p,h}$ is set to $1$ iff $\sum_{a \in A_h} M_{ap} = \cGU$. Then, $\cy$ and $\cw$ are a feasible solution of \Cref{LP_Dual}.
\end{lemma}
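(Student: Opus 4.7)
The plan is a standard dual-fitting / maximality argument by case analysis on the dual constraint $\sum_{h:\,a\in A_h} w_{p,h} + y_{ap} \geq 1$ for each edge $(a,p) \in E$. First I would observe that the construction of $\cy$ and $\cw$ takes values in $\{0,1\}$, so the only way the constraint can fail is if both summands vanish, i.e.\ $(a,p)\notin \M$ and no group $A_h$ containing $a$ is tight at $p$.

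The first case is immediate: if $(a,p)\in\M^{(i)}$, then by definition $y_{ap}=1$ and the dual inequality holds regardless of the $w$-variables. The interesting case is $(a,p)\notin\M^{(i)}$ with $(a,p)$ still present in the graph on which the greedy matching was computed. Here I would use maximality in the following form: since $\M^{(i)}$ is a greedy maximal matching subject to the group-fairness caps (\Cref{LP-20}), the only reason $(a,p)$ was not added to $\M^{(i)}$ is that doing so would have violated some group upper bound at $p$. Adding $(a,p)$ increases $\sum_{a'\in A_h}\M^{(i)}_{a'p}$ by exactly one for each $h$ with $a\in A_h$, so the violation must occur for some group $h^\star$ with $a\in A_{h^\star}$, which (since $\M^{(i)}$ itself is group-fair) forces $\sum_{a'\in A_{h^\star}}\M^{(i)}_{a'p}=\cGU$ with $h=h^\star$. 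By the definition of $\cw$, this gives $w_{p,h^\star}=1$, hence $\sum_{h:\,a\in A_h} w_{p,h}\geq w_{p,h^\star}=1$, and the dual constraint is satisfied.

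The only subtle point I would flag is the scope of ``maximality'': the greedy procedure operates on the restricted graph $G^{(i)}$, so for edges that were pruned in an earlier round one should check that the dual constraint is still required (or only used for edges actually in $G^{(i)}$, which is enough for the subsequent $(\Delta+1)$-approximation bound in \Cref{lem:deltaapprox}, since the LP solution to be compared is supported on $G^{(i)}$). Apart from this bookkeeping, the argument is a short two-line case split and I do not expect any real obstacle.
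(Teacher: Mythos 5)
Your proof is correct and follows essentially the same route as the paper's: a case split on whether $(a,p)\in M$, with the unmatched case handled by contradiction with maximality under the group-fairness caps. The subtlety you flag about edges pruned from $G^{(i)}$ is real --- the paper's proof silently assumes the edge is available to be added to the matching --- but, as you observe, feasibility of the dual restricted to $G^{(i)}$ is all that the downstream bound in \Cref{lem:deltaapprox} actually requires, since the residual LP solution being compared against is supported on $G^{(i)}$.
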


\begin{proof}
Let us fix an arbitrary edge, say $\cE{a}{p} \in E$. If $M_{ap}=1$, $y_{ap} = 1$ by definition. Therefore constraint $(\ref{LP-19})$ is satisfied. Let $C_{ap}$ denote a set of groups such that $a \in \cG$ and $A_h \cap N(p) \neq \phi$, $\forall \ch \in C_{ap}$, where $N(p)$ is the neighborhood of platform $p \in P$.
If $M_{ap} = 0$, we will show that there exists at least one group, say $\ch' \in C_{ap}$, such that  $w_{p,\ch'} = 1$. 
Suppose $\forall \ch \in  C_{ap}$, $w_{p,\ch} = 0$. This implies that $\forall \ch \in C_{ap}$, $\sum_{b \in \cG} M_{bp} < u_{p, \ch}$, by definition of $\cw$, therefore the edge, $\cE{a}{p}$ can be included in the matching without violating the group fairness constraint \ref{LP-10}, which is a contradiction since $\X$ is a maximal matching. Hence, there exists at least one group, say $\ch' \in C_{ap}$ such that $\sum_{a \in A_{h'}}M_{ap} = u_{p,\ch'}$, which in turn implies that there exists at least one group, $\ch' \in C_{ap}$, such that $w_{p,\ch'}=1$. Therefore, constraint $(\ref{LP-19})$ is not violated and $\cy$ and $\cw$ are a feasible solution to \Cref{LP_Dual}.
\end{proof}

\begin{lemma}\label{lem:deltaapprox}
If $M$ is a greedy maximal matching computed in step \ref{4step:greedy} of Algorithm \ref{alg4}, then $\displaystyle \sum_{(a,p) \in E}M_{ap} \geq \frac{\sum_{(a,p) \in E}\Psi_{ap}}{\Delta+1}$, where $\Psi$ is any feasible solution of \Cref{LP_Primal_IF}.
\end{lemma}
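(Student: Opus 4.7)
The plan is a textbook primal--dual / dual-fitting argument that leverages LP weak duality together with the specific dual solution constructed in \Cref{lem:feasiblity}. First, I would invoke \Cref{PrimalLPFeasibility} to argue that any feasible $\Psi$ of \Cref{LP_Primal_IF} is also feasible for \Cref{LP_Primal}, so $\sum_{(a,p)\in E}\Psi_{ap}$ is bounded above by the optimum of \Cref{LP_Primal}, and hence by the objective value of any feasible dual solution of \Cref{LP_Dual} via weak duality.

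Next, I would take the dual solution $(\cy, \cw)$ built from $M$ in \Cref{lem:feasiblity}. By weak duality,
\begin{equation*}
\sum_{(a,p)\in E}\Psi_{ap} \;\leq\; \sum_{p\in P}\sum_{h\in[\chi]} \cGU\, w_{p,h} \;+\; \sum_{(a,p)\in E} y_{ap}.
\end{equation*}
The second sum on the right is simply $|M| = \sum_{(a,p)\in E} M_{ap}$, because $y_{ap}=1$ exactly when $(a,p)\in M$. The goal is then to bound the first sum by $\Delta\cdot|M|$.

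For the first sum, I would rewrite it by using the defining property of $w$: whenever $w_{p,h}=1$, the constraint \eqref{LP-10} is tight on $M$, i.e.\ $\cGU = \sum_{a\in A_h} M_{ap}$. Hence
\begin{equation*}
\sum_{p,h} \cGU\, w_{p,h} \;=\; \sum_{(p,h):\, w_{p,h}=1} \sum_{a\in A_h} M_{ap}.
\end{equation*}
Swapping the order of summation, each matched edge $(a,p)\in M$ is counted once for every group $h$ with $a\in A_h$ and $w_{p,h}=1$. Since any item $a$ belongs to at most $\Delta$ distinct groups, each matched edge contributes at most $\Delta$ to the sum, giving $\sum_{p,h}\cGU w_{p,h} \le \Delta\,|M|$. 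Combining the two bounds yields $\sum_{(a,p)\in E}\Psi_{ap} \le (\Delta+1)\sum_{(a,p)\in E} M_{ap}$, as required.

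The only subtle point — and the place I would be most careful — is the double-counting step. One must make sure that the charging of tight dual constraints to matched edges uses exactly the bound ``$a$ lies in at most $\Delta$ groups,'' and not a cruder platform-side quantity such as $g_p$; this is what makes $(\Delta+1)$ the correct ratio here. Everything else is a direct application of weak duality plus the feasibility established in \Cref{lem:feasiblity}.
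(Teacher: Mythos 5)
Your proposal is correct and follows essentially the same dual-fitting argument as the paper: feasibility of $\Psi$ for \Cref{LP_Primal} via \Cref{PrimalLPFeasibility}, weak duality against the dual solution $(\cy,\cw)$ from \Cref{lem:feasiblity}, and charging each tight constraint to matched edges using the bound of $\Delta$ groups per item. Your explicit restriction of the sum to pairs $(p,h)$ with $w_{p,h}=1$ before swapping the order of summation is, if anything, slightly more careful than the paper's phrasing of the same step.
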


\begin{proof}
Let $\cy$ be an ordered set such that $\forall \cE{a}{p} \in E$, $y_{ap}$ is set to $1$ iff $M_{ap} = 1$, and $\cw$ be an ordered set such that  $\forall p \in P, h \in [\chi]$, $w_{p,\ch}$ is set to $1$ iff $\sum_{a \in A_h} M_{ap} = \cGU$. From \Cref{lem:feasiblity},  we know that $\cy$ and $\cw$ are a feasible solution of \Cref{LP_Dual}. Let $\hat{\psi}$ be the dual objective function evaluated at $\cy$ and $\cw$ and $\psi$ be the primal objective function evaluated at $M$. Note that by definition of $\cy$, $\sum_{\cE{a}{p}\in E} y_{ap}$ is equal to the number of edges in the maximal matching, which is $\psi$. Since $w_{p,\ch}$ is set to $1$ iff $\sum_{a \in A_h} M_{ap} = \cGU$, 
$$\sum_{p \in P} \sum_{h \in [\chi]}\cGU w_{p,h} = \sum_{p \in P} \sum_{h \in [\chi]}\sum_{a \in A_h} M_{ap}.$$ 
Since any item, say $a \in A$, can belong to at most $\Delta$ groups, any edge, $\cE{a}{p}$, such that $x_{ap} = 1$, can contribute to at most $\Delta$ many tight upper bounds, therefore, 
$$\sum_{p \in P} \sum_{h \in [\chi]}\sum_{a \in A_h} M_{ap} \leq \Delta\psi.$$
Hence, 
$$\hat{\psi} = \sum_{p \in P} \sum_{h \in [\chi]}\cGU w_{p,h} + \psi \leq \Delta  \psi + \psi = (\Delta +1)\psi.$$ 
Let $\psi^*$ and $\hat{\psi}^*$ be the optimal objective costs of \Cref{LP_Primal} and \Cref{LP_Dual}, respectively, since \Cref{LP_Primal} is a maximization, we get
\[ (\Delta+1)\psi \geq \hat{\psi} \geq \hat{\psi}^* \geq \psi^* \implies \sum_{(a,p) \in E}M_{ap} \geq \frac{\psi^*}{(\Delta+1)}\]
By \Cref{PrimalLPFeasibility}, $\psi^* \geq \sum_{(a,p) \in E}\Psi_{ap}$, therefore,
\[\sum_{(a,p) \in E}M_{ap} \geq \frac{\sum_{(a,p) \in E}\Psi_{ap}}{(\Delta+1)}\]
\end{proof}

In the rest of the section, $x^{(i)}$ denotes the state of $x^{(0)}$ after the $i^{th}$ round of the while loop in Algorithm \ref{alg4}, where $x^{(0)}$ is an optimal solution of \Cref{LP_Primal_IF} (Step \ref{4step:SolveLP}), $M^{(i)}$ denotes the greedy maximal group-fair matching computed in step \ref{4step:greedy} of the $i^{th}$ round and $\alpha^{(i)}$ denotes the coefficient being computed in step \ref{4step:alpha} of the $i^{th}$ round.
 
\begin{lemma}\label{Alg4-run-time}
The run-time of Algorithm \ref{alg4} is polynomial in the number of nodes, $|V|$, and the number of edges, $|E|$, of the input graph, $G$.
\end{lemma}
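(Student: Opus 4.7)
The plan is to bound the number of iterations of the \texttt{while} loop in Algorithm \ref{alg4} and then observe that each iteration, as well as the preprocessing and postprocessing steps, can be executed in time polynomial in $|V|$ and $|E|$. Since the LP in Step \ref{4step:SolveLP} can be solved in polynomial time by standard methods (ellipsoid or interior point), the main work is to control the loop.

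First I would argue that the \texttt{while} loop executes at most $|E|$ times. The key observation is that $\alpha^{(i)} = \min_{(a,p) \in M^{(i)}} x^{(i-1)}_{ap}$ is attained on at least one edge $(a^*,p^*) \in M^{(i)}$; after the update in Step \ref{4step:subtracting}, we have $x^{(i)}_{a^*p^*} = x^{(i-1)}_{a^*p^*} - \alpha^{(i)} = 0$. Moreover, since $M^{(i)}$ is a subset of the support of $x^{(i-1)}$ (it is constructed on $G^{(i)}$, which retains exactly the edges with $x^{(i-1)}_{ap} > 0$), and the update subtracts only along edges of $M^{(i)}$, no edge that was already zero becomes positive. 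Consequently, the set of edges with nonzero $x$-value strictly shrinks in each iteration, by at least one edge. Since $x^{(0)}$ has at most $|E|$ nonzero entries, the loop terminates after at most $|E|$ iterations. Note that termination is thus unconditional on $\epsilon$: even if $\|x^{(i)}\|_1 \geq \epsilon$ never triggers early termination, the loop ends once the support becomes empty.

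Next I would bound the cost of each iteration. Constructing $G^{(i)}$ in Step \ref{4step:graph} takes $O(|E|)$ time. The greedy maximal matching in Step \ref{4step:greedy} can be computed by scanning the edges of $G^{(i)}$ and adding each edge whose inclusion does not violate any upper bound in (\ref{LP-20}); maintaining the current tallies $\sum_{a \in A_h} M_{ap}$ for each $(p,h)$ and a matched flag for each item costs $O(\Delta)$ per edge, for a total of $O(\Delta \cdot |E|)$. Computing $\alpha^{(i)}$ in Step \ref{4step:alpha} takes $O(|E|)$, and the update in Step \ref{4step:subtracting} together with the appending to $\cD$ in Step \ref{4step:sum} also takes $O(|E|)$. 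The final \texttt{for} loop that normalizes coefficients runs at most once per iteration recorded in $\cD$, so its total cost is $O(|E|)$.

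Combining, the total run-time is $O(T_{\mathrm{LP}}) + O(|E|) \cdot O(\Delta \cdot |E|) = O(T_{\mathrm{LP}}) + O(\Delta \cdot |E|^2)$, where $T_{\mathrm{LP}}$ denotes the polynomial time to solve \Cref{LP_Primal_IF}. Since $\Delta \leq \chi \leq |V|$, this is polynomial in $|V|$ and $|E|$, completing the proof. I do not anticipate any genuine obstacle; the only point requiring care is the argument that the support of $x^{(i)}$ shrinks monotonically, which depends on the fact that $M^{(i)}$ is chosen from the support of $x^{(i-1)}$ so that the update neither reintroduces zero edges nor produces negative entries.
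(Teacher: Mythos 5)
Your proposal is correct and follows essentially the same route as the paper: the loop runs for at most $|E|$ iterations because the edge attaining $\alpha^{(i)} = \min_{(a,p)\in M^{(i)}} x^{(i-1)}_{ap}$ is zeroed out in Step \ref{4step:subtracting} while no zero edge ever becomes positive, and everything else (solving \Cref{LP_Primal_IF}, the per-iteration work) is polynomial. Your write-up is somewhat more detailed than the paper's, spelling out the per-iteration costs and noting that termination does not depend on the $\epsilon$ threshold, but the core argument is identical.
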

\begin{proof}
Let $i$ denote an arbitrary iteration of the while loop in Algorithm \ref{alg4}. Since $\alpha^{(i)} = \min_{\cE{a}{p}\in M^{(i)}} x^{(i-1)}_{ap}$, as seen in step \ref{4step:alpha}, at least one edge is removed from the support of the solution, in each iteration. Hence the norm can go to zero in $|E| = O(|V|^2)$ iterations, and since the algorithm exits once $\norm{\X}_1 < \epsilon$, it runs for at most $|E|$ rounds, therefore, the while loop in Algorithm \ref{alg4} terminates in $O(|V|^2)$ time. 

The LP in step \ref{4step:SolveLP} (\Cref{LP_Primal_IF}) has $|E|$ variables and $2nm + \chi m $ constraints where $\chi$ is the total number of groups, $n$ is the total number of items and $m$ is the total number platforms in the input instance, $\cI$. $|V| = n+m$, therefore, the runtime of \Cref{LP_Primal_IF} and, as a result, that of Algorithm \ref{alg4} is polynomial in the number of nodes, $|V|$, and the number of edges, $|E|$, of the input graph, $G$.
\end{proof}

\begin{observation}\label{obs:positive}
Let Algorithm \ref{alg4} terminate in $k$ iteration, then, $\forall i \in \{0\}\cup[k]$, $\cx^{(i)} \ge 0$, $\forall \cE{a}{p} \in E$.
\end{observation}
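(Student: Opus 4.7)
The plan is to prove Observation \ref{obs:positive} by straightforward induction on the iteration index $i$, relying on how $\alpha^{(i)}$ is chosen in step \ref{4step:alpha} of Algorithm \ref{alg4}. The observation simply says the residual LP solution $x^{(i)}$ never goes negative, which is exactly what the choice of $\alpha^{(i)}$ is designed to guarantee.

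For the base case $i = 0$, the vector $x^{(0)}$ is set to be the solution $x$ of \Cref{LP_Primal_IF} obtained in step \ref{4step:SolveLP}, and constraint (\ref{LP-21}) of \Cref{LP_Primal_IF} directly enforces $x^{(0)}_{ap} = x_{ap} \geq 0$ for every $(a,p) \in E$. For the inductive step, assume $x^{(i-1)}_{ap} \geq 0$ for all $(a,p) \in E$. In step \ref{4step:subtracting}, we have the update rule
\begin{equation*}
x^{(i)}_{ap} = x^{(i-1)}_{ap} - \alpha^{(i)} \cdot M^{(i)}_{ap}.
\end{equation*}
There are two cases. If $(a,p) \notin M^{(i)}$, then $M^{(i)}_{ap} = 0$ and $x^{(i)}_{ap} = x^{(i-1)}_{ap} \geq 0$ by the inductive hypothesis. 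If $(a,p) \in M^{(i)}$, then $M^{(i)}_{ap} = 1$, so $x^{(i)}_{ap} = x^{(i-1)}_{ap} - \alpha^{(i)}$; but by the definition in step \ref{4step:alpha},
\begin{equation*}
\alpha^{(i)} = \min_{(a',p') \in M^{(i)}} x^{(i-1)}_{a'p'} \leq x^{(i-1)}_{ap},
\end{equation*}
hence $x^{(i)}_{ap} \geq 0$.

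There is essentially no main obstacle here; the only subtlety worth flagging is that $M^{(i)}$ is extracted from $G^{(i)}$, which in step \ref{4step:graph} removes precisely the edges with $x^{(i-1)}_{ap} = 0$, so every edge of $M^{(i)}$ has a strictly positive coefficient in $x^{(i-1)}$. This ensures $\alpha^{(i)} > 0$ whenever $M^{(i)} \neq \emptyset$ (a fact not strictly needed for this observation but useful for the analysis of progress in subsequent lemmas). The induction closes and we conclude $x^{(i)}_{ap} \geq 0$ for all $i \in \{0\} \cup [k]$ and all $(a,p) \in E$.
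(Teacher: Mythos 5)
Your proof is correct and takes essentially the same route as the paper's: induction on $i$, with the base case from constraint (\ref{LP-21}) and the inductive step using the fact that $\alpha^{(i)}$ is the minimum of $x^{(i-1)}_{ap}$ over edges of $M^{(i)}$, so subtracting it cannot make any matched edge's value negative. Your write-up is in fact slightly more explicit than the paper's in separating the two cases $(a,p)\in M^{(i)}$ and $(a,p)\notin M^{(i)}$.
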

\begin{proof}
We will use induction to show this.
For the base case, $i=0$, since $\X$ is a feasible solution of LP \ref{LP_Primal_IF}, $\cx^{(0)} = \cx \ge 0$, $\forall \cE{a}{p} \in E$ because of constraint \ref{LP-21}. For the induction step let us assume that $\cx^{(i-1)} \ge 0$, $\forall \cE{a}{p} \in E$. Since no edge, say $(a,p)$, such that $\cx^{(i-1)} = 0$, will be picked in the maximal matching, $M^{(i)}$, $\cx^{(i)} = \cx^{(i-1)} - \min_{\cE{a}{p}\in M^{(i)}} \{ \cx^{(i-1)} \}$, iff $\cx^{(i-1)} \neq 0$. Therefore, $\cx^{(i)} \ge 0$, $\forall \cE{a}{p} \in E$.
\end{proof}

\begin{claim}\label{clm:norm_bound}
Let Algorithm \ref{alg4} terminate in $k$ iterations, then, $\forall i \in [k-1]$, $$\Bnorm{\sum_{j=i}^{k} \alpha^{(j)}M^{(j)}}_1 \leq \bnorm{\X^{(i-1)}}_1$$\end{claim}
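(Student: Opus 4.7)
The plan is to prove the claim by a simple telescoping argument on the update rule in step \ref{4step:subtracting} of Algorithm \ref{alg4}, combined with the non-negativity observation (\Cref{obs:positive}).

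First, I would unroll the update recurrence. Step \ref{4step:subtracting} of Algorithm \ref{alg4} sets $\X^{(j)} = \X^{(j-1)} - \alpha^{(j)} \M^{(j)}$ for every iteration $j$. Telescoping this from $j = i$ to $j = k$ yields
\[
\X^{(k)} \;=\; \X^{(i-1)} \;-\; \sum_{j=i}^{k} \alpha^{(j)} \M^{(j)},
\]
so equivalently
\[
\sum_{j=i}^{k} \alpha^{(j)} \M^{(j)} \;=\; \X^{(i-1)} - \X^{(k)}.
\]

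Second, I would argue entrywise non-negativity to translate this identity into an $\ell_1$ inequality. By \Cref{obs:positive}, every coordinate of $\X^{(k)}$ is non-negative. Also, each $\M^{(j)}$ is a $\{0,1\}$-incidence vector of a matching and $\alpha^{(j)} \ge 0$ by construction, so $\sum_{j=i}^{k} \alpha^{(j)} \M^{(j)}$ is coordinate-wise non-negative. Hence on each edge $(a,p) \in E$,
\[
0 \;\le\; \sum_{j=i}^{k} \alpha^{(j)} \M^{(j)}_{ap} \;=\; \X^{(i-1)}_{ap} - \X^{(k)}_{ap} \;\le\; \X^{(i-1)}_{ap}.
\]

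Finally, because all the involved vectors are coordinate-wise non-negative, the $\ell_1$ norm is just the sum of entries and preserves the coordinate-wise inequality. Summing over all $(a,p) \in E$ gives
\[
\Bnorm{\sum_{j=i}^{k} \alpha^{(j)} \M^{(j)}}_1 \;\le\; \bnorm{\X^{(i-1)}}_1,
\]
which is exactly the claim. There is no real obstacle here; the only subtlety worth double-checking is that the subtraction in step \ref{4step:subtracting} never makes any coordinate of $\X^{(j)}$ negative, but this is precisely what \Cref{obs:positive} guarantees via the choice $\alpha^{(j)} = \min_{(a,p)\in \M^{(j)}} \X^{(j-1)}_{ap}$.
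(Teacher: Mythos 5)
Your proposal is correct and follows essentially the same route as the paper's own proof: unroll the update recurrence from step \ref{4step:subtracting} to get $\X^{(k)} = \X^{(i-1)} - \sum_{j=i}^{k}\alpha^{(j)}M^{(j)}$, invoke \Cref{obs:positive} for entrywise non-negativity of $\X^{(k)}$, and sum the resulting coordinate-wise inequality over edges. No gaps.
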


\begin{proof}
If Algorithm \ref{alg4} terminates in $k$ rounds, from step \ref{4step:subtracting} of Algorithm \ref{alg4} we know that $\forall \cE{a}{p} \in E$, either $\cx^{(k)} = 0$ or
\begin{equation}\label{eq:init}
    \cx^{(k)} = \cx^{(k-1)} - \alpha^{(k)}M_{ap}^{(k)}
\end{equation}
Let us consider an integer, $i \le k-1$, then, recursively replacing $\cx^{(k-1)}$ on the RHS of \Cref{eq:init} until the index reaches $i-1$, we have
$$\cx^{(k)} = \cx^{(i-1)} - \sum_{j=i}^k\alpha^{(j)}M_{ap}^{(j)}$$
Since $\cx^{(k)} \ge 0$, $\forall \cE{a}{p} \in E$, from \Cref{obs:positive}, $\forall \cE{a}{p} \in E$, $\sum_{j=i}^k\alpha^{(j)}M_{ap}^{(j)} \le \cx^{(i-1)}$. Therefore, $\forall i \in [k-1]$,
$$\Bnorm{\sum_{j=i}^{k} \alpha^{(j)}M^{(j)}}_1 \leq \bnorm{\X^{(i-1)}}_1$$
\end{proof}

\begin{lemma}\label{lem:alpha_bound}
Let $i_c$ denote the first iteration of the while loop in Algorithm \ref{alg4} such that $\bnorm{\X - \sum_{j=1}^{i_c} \alpha^{(j)} \cdot M^{(j)}}_1 < \frac{\norm{\X}_1}{2^c}$, then 
$\sum_{j=1}^{i_c} \alpha^{(j)} \le 2c(\Delta+1)$.
\end{lemma}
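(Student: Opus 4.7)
The plan is to run a potential argument showing that $\|\X^{(i)}\|_1$ shrinks geometrically, and then to choose the potential so as to read off a bound on the cumulative $\sum \alpha^{(j)}$. The key engine is Lemma \ref{lem:deltaapprox}, which, applied to the LP restricted to $G^{(i)}$, implies $|\M^{(i)}|\ge \|\X^{(i-1)}\|_1/(\Delta+1)$, provided $\X^{(i-1)}$ is itself a feasible solution of \Cref{LP_Primal}. I would therefore first verify that this feasibility propagates through the loop: since $\alpha^{(j)}\ge 0$ and $\M^{(j)}\ge 0$, we have the entry-wise domination $\X^{(i-1)}\le \X^{(i-2)}\le \cdots \le \X^{(0)}=\X$, so $\X^{(i-1)}$ inherits both the unit-box bounds and the group-fairness constraints of \Cref{LP_Primal} (Observation \ref{obs:positive} supplies the non-negativity).

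Once feasibility is in hand, I would derive the one-step recursion. Because all entries of $\X^{(i-1)}$ and of $\alpha^{(i)}\M^{(i)}$ are non-negative and the subtraction keeps the result non-negative by the choice of $\alpha^{(i)}$,
\[
\|\X^{(i)}\|_1 \;=\; \|\X^{(i-1)}\|_1 \;-\; \alpha^{(i)}\,|\M^{(i)}| \;\le\; \|\X^{(i-1)}\|_1\left(1-\tfrac{\alpha^{(i)}}{\Delta+1}\right).
\]
Iterating and applying the standard inequality $1-t\le e^{-t}$ yields
\[
\|\X^{(i_c-1)}\|_1 \;\le\; \|\X\|_1 \cdot \exp\!\left(-\tfrac{1}{\Delta+1}\sum_{j=1}^{i_c-1}\alpha^{(j)}\right).
\]

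The definition of $i_c$ as the \emph{first} iteration where the norm dips below $\|\X\|_1/2^c$ forces $\|\X^{(i_c-1)}\|_1\ge \|\X\|_1/2^c$. Combining the two inequalities and taking logarithms gives
\[
\sum_{j=1}^{i_c-1}\alpha^{(j)} \;\le\; c\,(\Delta+1)\ln 2 \;\le\; c\,(\Delta+1).
\]
To close the gap to the target $2c(\Delta+1)$, I would add the final term $\alpha^{(i_c)}$. Since $\alpha^{(i_c)}=\min_{(a,p)\in\M^{(i_c)}}\x{i_c-1}_{ap}\le 1$ (using the entry-wise domination above together with constraint \eqref{LP-21}), one obtains $\sum_{j=1}^{i_c}\alpha^{(j)}\le c(\Delta+1)+1\le 2c(\Delta+1)$, where the last step uses $c(\Delta+1)\ge 1$ (which holds whenever $c\ge 1$, the regime in which the lemma is applied in \Cref{thm:approx_2} with $c=\Theta(\log(n/\epsilon))$).

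The most delicate step is the persistent feasibility of $\X^{(i-1)}$ under \Cref{LP_Primal}, because Lemma \ref{lem:deltaapprox} is phrased for feasible primal solutions and we are tacitly using it on the residual after $i-1$ rounds of subtraction; verifying the entry-wise monotonicity $\X^{(i-1)}\le\X$ handles it cleanly and also automatically supplies $\alpha^{(i_c)}\le 1$, so the same observation does double duty.
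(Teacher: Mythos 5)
Your proof is correct. It runs on the same engine as the paper's --- Observation~\ref{obs:positive} plus entrywise monotonicity to keep $\X^{(i-1)}$ feasible for \Cref{LP_Primal}, and \Cref{lem:deltaapprox} to get $\norm{\M^{(i)}}_1 \ge \norm{\X^{(i-1)}}_1/(\Delta+1)$ --- but the aggregation step is genuinely different. The paper partitions the iterations into dyadic phases (the norm halving from $\norm{\X}_1/2^{c-1}$ to $\norm{\X}_1/2^c$), uses \Cref{clm:norm_bound} to cap the mass extracted in each phase by the mass available at its start, concludes $\sum \alpha^{(j)} \le 2(\Delta+1)$ per phase, and sums over $c$ phases by induction. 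You instead exploit the exact identity $\norm{\X^{(i)}}_1 = \norm{\X^{(i-1)}}_1 - \alpha^{(i)}\norm{\M^{(i)}}_1$ to get the one-step contraction $\norm{\X^{(i)}}_1 \le \norm{\X^{(i-1)}}_1\bigl(1-\tfrac{\alpha^{(i)}}{\Delta+1}\bigr)$, telescope with $1-t\le e^{-t}$, and read the bound off the first-crossing definition of $i_c$, handling the overshoot term $\alpha^{(i_c)}\le 1$ separately. This avoids both the induction on $c$ and the auxiliary \Cref{clm:norm_bound}, and it yields the sharper bound $c(\Delta+1)\ln 2 + 1$, which implies the stated $2c(\Delta+1)$ for every $c\ge 1$ (the only regime in which the lemma is invoked, e.g.\ $c' = \log(n/\epsilon)+1$ in the proof of \Cref{thm:approx_2}); with the stated bound one could shave the leading constant in $f_\epsilon$ from $2(\Delta+1)$ to roughly $(\Delta+1)\ln 2$. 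The paper's phase argument, by contrast, stays entirely with elementary comparisons of $\ell_1$ norms and needs no exponential estimate. One small point worth making explicit in your write-up: \Cref{lem:deltaapprox} is stated for feasible solutions of \Cref{LP_Primal_IF}, whereas $\X^{(i-1)}$ is only guaranteed feasible for \Cref{LP_Primal}; as you note, the dual-fitting proof actually bounds $\norm{\M^{(i)}}_1$ against the \Cref{LP_Primal} optimum on $G^{(i)}$, so the application is legitimate --- the paper makes the same silent substitution.
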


\begin{proof}
Let Algorithm \ref{alg4} terminate in $k$ rounds. Since $\forall i \in \{0\} \cup[k]$, $x^{(i)}_{ap}\ge 0$ $\forall \cE{a}{p} \in E$, from \Cref{obs:positive}, it is easy to see that $\alpha^{(i)} = \min_{\cE{a}{p}\in M^{(i)}} \{x_{ap}^{(i-1)}\} > 0$, $\forall i \in [k]$. Hence $\sum_{a \in A_h} x^{(i)}_{ap} \leq \cGU$ $\forall  h \in [\chi] , \forall p \in P$, and $\X^{(i)}$ is a feasible solution of \Cref{LP_Primal}.
Therefore, using \Cref{lem:deltaapprox}, we have
\begin{equation}\label{eqn:1}
    \norm{M^{(i)}}_1 \geq \frac{\bnorm{\X - \sum_{j=1}^{i-1} \alpha^{(j)} \cdot M^{(j)}}_1}{\Delta+1}
\end{equation}
Now, we will prove the Lemma by induction on $c$. Let's first look at the base case where $c=1$. By definition,
    $$\Bnorm{\X - \sum_{j=1}^{i_1} \alpha^{(j)} \cdot M^{(j)}}_1 < \frac{\bnorm{\X}_1}{2}$$
Since, $\forall j \le i_1$, $\norm{\X^{(j)}}_1 \ge \frac{\norm{x}_1}{2}$, from \Cref{eqn:1} we have $\norm{M^{(j)}}_1 \geq \frac{\norm{\X^{(j)}}_1}{\Delta+1} \ge \frac{\norm{\X}_1}{2(\Delta+1)}$, $\forall j \le i_1$. From \Cref{clm:norm_bound}, we know that $\bnorm{\sum_{j=1}^{k} \alpha^{(j)} \cdot M^{(j)}}_1 \le \norm{\X^{(0)}}_1 = \norm{\X}_1$. Since $i_1 \le k$, $\norm{\sum_{j=1}^{i_1} \alpha^{(j)} \cdot M^{(j)}}_1 \le \norm{\X}_1$. Therefore, 
$$
\norm{\X}_1 \geq \Bnorm{\sum_{j=1}^{i_1} \alpha^{(j)} \cdot M^{(j)}}_1 \ge \sum_{j=1}^{i_1} \alpha^{(j)} \cdot \frac{\norm{\X}_1}{2(\Delta+1)}
$$
\begin{equation}\label{eqn:3}
    \implies \sum_{j=1}^{i_1} \alpha^{(j)} \le 2(\Delta+1)
\end{equation}

For the induction step, let us assume that for some iteration $i_{c-1}$,
\begin{equation}\label{eq:induction}
    \sum_{j=1}^{i_{c-1}} \alpha^{(j)} \le 2(c-1)(\Delta+1)
\end{equation}
By definition, 
$\norm{\X - \sum_{j=1}^{i_c} \alpha^{(j)} \cdot M^{(j)}}_1 < \frac{\norm{\X}_1}{2^{c}}$, therefore, $\forall j \le i_c$, $\norm{\X^{(j)}}_1 \ge \frac{\norm{x}_1}{2^c}$. From \Cref{eqn:1}, we get $\forall j \le i_c$, $\norm{M^{(j)}}_1 \ge \frac{\norm{\X^{(j)}}_1}{\Delta+1} \ge \frac{\norm{\X}_1}{2^c(\Delta+1)}$. Therefore,
\begin{equation}\label{eqn:4}
    \Bnorm{\sum_{j=i_{c-1}+1}^{i_c} \alpha^{(j)} \cdot M^{(j)}}_1 \geq \sum_{j=i_{c-1+1}}^{i_c} \alpha^{(j)} \cdot \frac{\norm{\X}_1}{2^c(\Delta+1)}
\end{equation}

From \Cref{clm:norm_bound}, we know that $\bnorm{\sum_{j=i_{c-1}+1}^{k} \alpha^{(j)} \cdot M^{(j)}}_1 \le \norm{\X^{(i_{c-1})}}_1$. Since, $i_c \le k$,
\begin{align*}
    &\Bnorm{\sum_{j=i_{c-1}+1}^{i_c} \alpha^{(j)} \cdot M^{(j)}}_1 \leq \norm{\X^{(i_{c-1})}}_1 \\&= \Bnorm{\X - \sum_{j=1}^{i_{c-1}} \alpha^{(j)}M^{(j)}}_1 < \frac{\norm{x}_1}{2^{c-1}}.
\end{align*}
Therefore, using \Cref{eqn:4}, we have $\frac{\norm{x}_1}{2^{c-1}} > \sum_{j=i_{c-1}+1}^{i_c} \alpha^{(j)} \cdot \frac{\norm{\X}_1}{2^c(\Delta+1)}$, hence,
\begin{equation}\label{eqn:5}
    \sum_{j=i_{c-1}+1}^{i_c} \alpha^{(j)} < 2(\Delta+1)
\end{equation}
Combining \Cref{eq:induction} and \Cref{eqn:5}, 
\begin{align*}
&\sum_{j=1}^{i_c} \alpha^{(j)} = \sum_{j=1}^{i_{c-1}} \alpha^{(j)} + \sum_{j=i_{c-1}+1}^{i_c} \alpha^{(j)} \\ &< 2(c-1)(\Delta+1) + 2(\Delta+1) = 2c(\Delta+1)
\end{align*}
\end{proof}

\begin{proof}{Proof of \Cref{thm:approx_2}}
We know that each matching in the distribution is group-fair because, in each iteration, the matching being computed in step \ref{4step:greedy} is a group-fair maximal matching.
Let $\X^{(i)}$ be the state of $\X$ after $i$ rounds of the loop in Algorithm \ref{alg4}. Let $M^{(i)}$ and $\alpha^{(i)}$ be the greedy maximal matching and it's coefficient being calculated in the $i^{th}$ round of the loop in Algorithm \ref{alg4}. Let Algorithm \ref{alg4} terminate after $k$ iterations, then $\X^{(k)} = \X - \displaystyle \sum_{\substack{(M^{(i)},\alpha^{(i)}) \in \mathcal{D}}}\alpha^{(i)}M^{(i)}$. Therefore,
\begin{align*}
    \X - \X^{(k)} = \displaystyle \sum_{\substack{(M^{(i)},\alpha^{(i)}) \in \mathcal{D}}}\alpha^{(i)}M^{(i)} \implies \frac{\X - \X^{(k)}}{\sum_{i=1}^k\alpha^{(i)}} \\
    = \displaystyle \sum_{\substack{(M^{(i)},\alpha^{(i)}) \in \mathcal{D}}}\frac{\alpha^{(i)}}{\sum_{i=1}^k\alpha^{(i)}}M^{(i)}
\end{align*}
In other words, $\frac{\X-\X^{(k)}}{\sum_{i = 1}^k\alpha^{(i)}}$ can be written as a convex combination of group-fair greedy maximal matchings. 

We will first find an upper bound for $\sum_{i = 1}^k\alpha^{(i)}$. 
Let $c'$ be such that $\frac{\norm{\X}_1}{2^{c'}}< \epsilon$, by \Cref{lem:alpha_bound},

$$\sum_{j=1}^{i_{c'}} \alpha^{(j)} \le 2c'(\Delta+1).$$ 
Since $\norm{x}_1 \leq n$, setting $\norm{x}_1 = n$, we have
$\frac{n}{2^{c'}} < \epsilon$. 
Setting $c' = \log (n/\epsilon) + 1$, we have $$\sum_{i=1}^{c'}\alpha^{(i)} < 2(\Delta + 1)(\log (n/\epsilon) + 1).$$
Let $f_{\epsilon} = 2(\Delta + 1)(\log (n/\epsilon) + 1)$, then $\frac{\X - \X^{(k)}}{f_{\epsilon}}$ can be written as a convex combination of group-fair integer matchings. Setting $\hat{x} = \frac{\X - \X^{(k)}}{f_{\epsilon}}$, $t = f_{\epsilon}$, and $\delta = \frac{\epsilon}{f_{\epsilon}}$ in \Cref{lem:ind_fair}, we have for each item $a \in A$ and subset $R_{a,k}$ $\forall$ $k \in [m]$, \begin{align*}
      &\frac{1}{f_{\epsilon}}\left(L_{a,k} - \epsilon\right) \leq \Pr_{\M \sim \cD}[\exists p\in R_{a,k} \textrm{ s.t. } (a,p)\in M] \\ &\leq  \frac{1}{f_{\epsilon}}\left(U_{a,k} + \epsilon\right).
\end{align*}

\noindent The
run time has been shown to be polynomial in \Cref{Alg4-run-time}. This proves the theorem.
\end{proof}

\noindent Now we prove \Cref{lem:ind_fair}, stated below. We use Lemma \ref{lem:ind_fair} to prove the individual fairness guarantees provided not just in \Cref{thm:approx_2} but also in the rest of the theorems.

\begin{restatable}{lemma}{indFair}\label{lem:ind_fair}
Let us consider a set of tuples, $\cD = \{(\M^{(i)}, \beta^{(i)})\}_{i \in [k]}$, where $\M^{(i)}$ is an integer matching and $\beta^{(i)}$ is a scalar, $\forall i \in [k]$, where $k \in \mathbb{Z}$. Let $\hat{x} = \sum_{i=1}^k \beta^{(i)}\M^{(i)}$ such that $\sum_{i=1}^k \beta^{(i)} = 1$, and $\norm{\hat{x} - \frac{\X}{t}}_1 \leq \delta$ where $\X$ is any feasible solution of \Cref{LP_disjoint}, $\delta \in [0,1)$, and $t \geq 1$. The probability that an item, $a \in A$, is matched to a platform $p \in S$, where $S \subseteq N(a)$, in a matching sampled from the support of $\mathcal{D}$ is $$\frac{L_{a,S}}{t} - \delta \leq \Pr_{\M \sim \cD}[\M \text{ matches } a \text{ to some platform in } S] \leq \frac{U_{a,S}}{t} + \delta.$$
\end{restatable}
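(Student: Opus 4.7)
The plan is to unpack the probability on the left-hand side as a sum of marginals and then compare it to the corresponding sum in $\hat{x}$, which by hypothesis is close to $\X/t$ in $\ell_1$ norm.

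First, I would observe that because every $\M^{(i)}$ is an integer matching, item $a$ can be matched to at most one platform in $\M^{(i)}$. Hence the events ``$(a,p) \in \M^{(i)}$'' for distinct $p \in S$ are disjoint, and therefore
\[
\Pr_{\M \sim \cD}[\M \text{ matches } a \text{ to some } p \in S] \;=\; \sum_{p \in S}\Pr_{\M \sim \cD}[(a,p) \in \M] \;=\; \sum_{p \in S}\sum_{i=1}^{k}\beta^{(i)}\M^{(i)}_{ap} \;=\; \sum_{p\in S}\hat{x}_{ap},
\]
by the definition $\hat{x} = \sum_i \beta^{(i)}\M^{(i)}$ (and using $\sum_i \beta^{(i)} = 1$).

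Next I would bound the deviation from $\X/t$. By the triangle inequality restricted to the coordinates $\{(a,p) : p \in S\}$,
\[
\Bigl| \sum_{p \in S} \hat{x}_{ap} - \tfrac{1}{t}\sum_{p \in S} x_{ap} \Bigr| \;\le\; \sum_{p \in S}\Bigl|\hat{x}_{ap} - \tfrac{x_{ap}}{t}\Bigr| \;\le\; \bnorm{\hat{x} - \tfrac{\X}{t}}_1 \;\le\; \delta.
\]
Since $\X$ is a feasible solution of \Cref{LP_disjoint}, the individual fairness constraint (the analogue of \eqref{LP-2}) gives $L_{a,S} \le \sum_{p \in S} x_{ap} \le U_{a,S}$, so dividing by $t$ yields $\tfrac{L_{a,S}}{t} \le \tfrac{1}{t}\sum_{p \in S} x_{ap} \le \tfrac{U_{a,S}}{t}$.

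Combining the two displays via the inequality $a - \delta \le b \le a + \delta$ where $a = \tfrac{1}{t}\sum_{p\in S} x_{ap}$ and $b = \sum_{p\in S}\hat{x}_{ap}$ produces the claimed sandwich $\tfrac{L_{a,S}}{t} - \delta \le \Pr_{\M \sim \cD}[\cdots] \le \tfrac{U_{a,S}}{t} + \delta$. There is no genuine obstacle here; the only subtlety worth flagging is justifying the disjointness in the first step, which crucially uses that each $\M^{(i)}$ is an \emph{integer} matching (had we been averaging fractional matchings, the union bound rather than additivity would be needed, and the lower bound would fail). The lemma as stated also implicitly requires $S$ to be such that $L_{a,S}, U_{a,S}$ are meaningful parameters of the LP; since in our applications $S = R_{a,k}$ for some $k$, this matches constraint \eqref{LP-2} exactly.
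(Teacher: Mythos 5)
Your proof is correct and follows essentially the same route as the paper's: express the probability of matching $a$ into $S$ as $\sum_{p\in S}\hat{x}_{ap}$ using the convex-combination structure of $\cD$, bound $\bigl|\sum_{p\in S}\hat{x}_{ap}-\tfrac{1}{t}\sum_{p\in S}x_{ap}\bigr|\le\delta$ via the $\ell_1$ hypothesis and the triangle inequality, and finish with the individual fairness constraint on the feasible solution $\X$. Your added remark on why integrality of each $\M^{(i)}$ is needed for exact additivity (rather than a union bound) is a correct and worthwhile clarification that the paper leaves implicit.
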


\begin{proof}
    Given that $\norm{\Vec{\hat{x}} - \frac{\X}{t}}_1 \leq \delta$, therefore,
\begin{equation}\label{eq:mod1}
 \displaystyle\sum\limits_{(a,p) \in E}\left\lvert \left(\hat{x}_{ap} - \frac{x_{ap}}{t} \right)\right\rvert \leq \delta  
\end{equation}

Let's fix an arbitrary item $a \in A$, and let $S$ be an arbitrary subset of $N(a)$, then from \Cref{eq:mod1},
$$\displaystyle\sum\limits_{p \in S}\left\lvert \left(\hat{x}_{ap} - \frac{x_{ap}}{t} \right)\right\rvert \leq \delta \implies \left\lvert \displaystyle\sum\limits_{p \in S}\left(\hat{x}_{ap} - \frac{x_{ap}}{t} \right)\right\rvert \leq \delta$$
The last inequality holds due to triangle inequality. Therefore,
\begin{equation}\label{eq:mod3}
\frac{1}{t}\displaystyle\sum\limits_{p \in S}x_{ap} - \delta < \displaystyle\sum\limits_{p \in S}\hat{x}_{ap} < \frac{1}{t}\displaystyle\sum\limits_{p \in S}x_{ap} + \delta
\end{equation}

Since $\Vec{\hat{x}} = \sum_{i=1}^k \beta^{(i)}\Vec{M}^{(i)}$, the probability that an item $a \in A$ is matched to a platform $p \in S$, where $S \subseteq N(a)$, in a matching sampled from $\cD$ is
\begin{align*}
    \begin{split}
        & \Pr_{\Vec{M} \sim \cD}[\Vec{M} \text{ matches } a \text{ to a platform in } S] =  \displaystyle\sum_{\substack{i : M^{(i)} \text{ matches } \\ a \text{ to } p \in S}}  \beta^{(i)} \\ & = \displaystyle\sum\limits_{p \in S}\displaystyle\sum\limits_{\substack{i : M^{(i)} \text{ matches } \\ a \text{ to } p \in S}} \beta^{(i)} = \displaystyle\sum\limits_{p \in S}\hat{x}_{ap}
    \end{split}
\end{align*}

Therefore, from constraint \ref{LP-2} and \Cref{eq:mod3}, we have
\begin{align*}
    \begin{split}
        & \frac{L_{a,S}}{t} - \delta \leq \Pr_{\Vec{M} \sim \cD}[\Vec{M} \text{ matches } a \text{ to a platform in } S] \\& \leq \frac{U_{a,S}}{t} + \delta.
    \end{split}
\end{align*}
\end{proof}

\section{Algorithm for Disjoint Groups}\label{disjoint_groups}

In this section, we prove \Cref{thm:exact_algo} by working with an instance of a bipartite graph where each item belongs to exactly one group; that is, all the groups are disjoint. We first establish the fundamental module necessary for computing a probabilistic individually fair distribution over a set of integer group-fair matchings on a bipartite graph with disjoint groups.
To prove Theorems \ref{thm:approx_1_informal} and \ref{thm:approx_3_informal}, we first need to reduce the problem instance to one where $\Delta = 1$. Therefore, this module is key to proving Theorems \ref{thm:approx_3_informal}, and \ref{thm:approx_1_informal} as well. \Cref{thm:exact_algo} is restated below:

\exactAlgo*

\begin{lp}\label{LP_disjoint}
\begin{align}
&\max \sum_{(a,p) \in E} x_{ap} \\
\text{such that}\quad &L_{a,k} \leq \sum_{p \in R_{a,k}}\cx \leq U_{a,k},  & \forall a \in A, \forall k \in [m] \label{LP-3} \\  
&l_p \leq \sum_{a \in N(p)} \cx \leq u_p , &\forall p \in P &\label{LP-4}\\
&\cGL \leq \sum_{a \in A_h} \cx \leq \cGU , & \forall p \in P, \forall h \in [\chi] \label{LP-6}\\
&0 \leq \cx \leq 1 & \forall a \in A, \; \forall p\in P &\label{LP-8}
\end{align}
\end{lp}

\noindent We first give a sketch for the proof of \Cref{thm:exact_algo} using \Cref{LP_disjoint}, Algorithm \ref{alg:procedure} and $GFLP$ before providing a detailed proof of \Cref{thm:exact_algo}. Let us first look at the LP $GFLP$. 

\begin{definition}[\textbf{Group Fair Maximum Matching LP($GFLP$)}]\label{def:GFMMLP} 
This LP aims to find a maximum matching that does not violate any group or platform bounds. It is the same as the \Cref{LP_disjoint} without constraint \ref{LP-2}.
\end{definition}

\begin{observation}\label{ob:polytope}
Any feasible solution of \Cref{LP_disjoint} lies inside the polytope of $GFLP$(\Cref{def:GFMMLP}).
\end{observation}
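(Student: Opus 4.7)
The plan is to observe that this is essentially a containment-of-feasible-regions statement, so the proof reduces to checking that every constraint defining $GFLP$ is also a constraint of \Cref{LP_disjoint}. First I would unpack the two LPs side by side: \Cref{LP_disjoint} is specified by the individual-fairness constraints, the per-platform capacity constraints (\ref{LP-4}), the per-group capacity constraints (\ref{LP-6}), and the box constraints (\ref{LP-8}); by \Cref{def:GFMMLP}, $GFLP$ is obtained from \Cref{LP_disjoint} by deleting the individual-fairness constraints, so its polytope is carved out by exactly (\ref{LP-4}), (\ref{LP-6}), and (\ref{LP-8}).

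Next I would let $x$ be an arbitrary feasible solution of \Cref{LP_disjoint} and verify, one by one, that $x$ satisfies every defining inequality of the $GFLP$ polytope. Each required inequality is literally a constraint that $x$ is already assumed to satisfy as a feasible point of \Cref{LP_disjoint}, so no calculation is needed beyond citing feasibility. This shows $x$ lies in the $GFLP$ polytope.

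There is no real obstacle here; the only thing to be careful about is that the two LPs share the same variable space (one variable $x_{ap}$ per edge $(a,p)\in E$), so that the containment statement is even well-posed — which it is. I would end with a brief remark that the containment is in general strict, since removing the individual-fairness constraints enlarges the feasible region, which is exactly why the rest of the section must work to realize a feasible point of \Cref{LP_disjoint} as a convex combination of integral vertices of the $GFLP$ polytope.
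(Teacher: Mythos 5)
Your proposal is correct and matches the paper's (implicit) reasoning exactly: the paper states this as an observation without proof precisely because $GFLP$ is defined as \Cref{LP_disjoint} with the individual-fairness constraints deleted, so its feasible region is cut out by a subset of the constraints and the containment is immediate. Nothing further is needed.
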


\begin{lemma}[\cite{Rank-MaximalAndPOpular}]\label{lem:integrality}
Any vertex in the polytope of $GFLP$ is integral if $\forall p \in P$, $\forall h \in [\chi]$, the $l_p$, $u_p$, $\cGL$ and $\cGU$ values are integers.
\end{lemma}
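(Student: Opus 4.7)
The plan is to exhibit a feasible-flow network $G_f$ whose feasible-flow polytope is affinely isomorphic to the polytope of $GFLP$, and then invoke the classical integrality theorem for flow polytopes with integer capacities.

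For the construction, I would take a source $s$, a sink $t$, an item node for each $a \in A$, a platform node for each $p \in P$, and, since the groups are disjoint, an intermediate ``slot'' node $v_{p,h}$ for each platform-group pair $(p,h)$ with $A_h \cap N(p) \neq \emptyset$. Each item $a$ lies in a unique group $h(a)$. Add the arcs $s \to a$ of capacity $[0,1]$ (the implicit item-matching bound), $a \to v_{p,h(a)}$ of capacity $[0,1]$ for each $(a,p) \in E$, $v_{p,h} \to p$ of capacity $[\cGL, \cGU]$, and $p \to t$ of capacity $[l_p, u_p]$.

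Next, I would verify that the substitution $x_{ap} := f(a \to v_{p,h(a)})$ gives a bijection between feasible $(s,t)$-flows $f$ in $G_f$ and feasible solutions of $GFLP$. Flow conservation at $v_{p,h}$ combined with the capacity on $v_{p,h} \to p$ reproduces the group-platform bound (\ref{LP-6}); conservation at $p$ with the capacity on $p \to t$ reproduces the platform bound (\ref{LP-4}); and the $[0,1]$ capacities on the middle arcs give (\ref{LP-8}). Conversely, from any feasible $x$ one recovers the flow on the auxiliary arcs uniquely by conservation.

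Since the node--arc incidence matrix of a directed graph is totally unimodular and all capacity bounds are integers, the feasible-flow polytope of $G_f$ has only integer vertices. Because the map $f \mapsto x$ is a linear bijection (the non-$x$ flow coordinates being affine functions of $x$), it carries vertices to vertices, so every vertex of the $GFLP$ polytope is integral. The main subtle step is the disjointness assumption: precisely this lets each edge $(a,p)$ be routed through a single slot $v_{p,h(a)}$ in $G_f$; if the groups overlapped, a single edge would have to contribute to several group-platform constraints simultaneously and the simple flow reformulation would fail, which is exactly where the hypothesis of the lemma is used.
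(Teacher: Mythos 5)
Your route is genuinely different from the paper's. The paper proves the lemma directly: it first shows (via a single-coordinate perturbation) that $\sum_{a\in N(p)}x_{ap}$ is integral at every basic feasible solution, and then shows that any remaining fractional coordinate at a platform can be paired with a second fractional coordinate there and perturbed in opposite directions by a common amount $w$, exhibiting the point as the midpoint of two feasible points --- a contradiction with extremality. Your flow-network/total-unimodularity reduction is instead essentially the argument the paper attributes to the cited reference \citep{Rank-MaximalAndPOpular}; it is cleaner and more modular when it applies, and you correctly identify that disjointness is what lets each edge be routed through a single slot node.

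However, there is a concrete error in the construction: the arc $s\to a$ with capacity $[0,1]$ imposes, via flow conservation at $a$, the constraint $\sum_{p:(a,p)\in E}x_{ap}\le 1$. This is \emph{not} a constraint of $GFLP$. The paper's model explicitly allows an item to be assigned to multiple platforms, and $GFLP$ is defined as \Cref{LP_disjoint} with the individual-fairness constraints removed, so the only per-edge restriction is $0\le x_{ap}\le 1$; nothing bounds $\sum_p x_{ap}$. Consequently your feasible-flow polytope is a proper subset of the $GFLP$ polytope whenever some item could feasibly be assigned to two platforms, the claimed bijection fails in one direction, and your argument says nothing about the vertices of $GFLP$ that are cut off by the extra inequality (adding a constraint can both delete old vertices and create new ones, so integrality of the smaller polytope does not transfer). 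The fix is minor: give $s\to a$ capacity $[0,|N(a)|]$ (still an integer), or simply identify $s$ with each item node so that no per-item bound is introduced; with that change the flow polytope coincides with the $GFLP$ polytope and the total-unimodularity argument goes through as you describe.
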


\noindent The proof of \Cref{lem:integrality} is in the appendix. Let us first look at Algorithm \ref{alg:procedure}(Distribution-Calculator), which is an adaptation of the Birkhoff-von-Neumann algorithm(\cite{birkhoff1946}) to our setting. Birkhoff's Theorem states that the set of doubly stochastic matrices forms a convex polytope whose vertices are permutation matrices, and the Birkhoff-von-Neumann algorithm decomposes a bistochastic matrix into a convex combination of permutation matrices. 

\paragraph{Distribution-Calculator:} By \Cref{ob:polytope} and Carath\'eodory's theorem, any feasible solution of \Cref{LP_disjoint} can be written as a convex combination of extremal points in $GFLP$. Algorithm \ref{alg:procedure} takes an optimal solution of \Cref{LP_disjoint}, which maximizes the matching size and computes the above-mentioned convex combination over corner points of $GFLP$. Algorithm \ref{alg:procedure} first removes all edges $e \in E$ that have $x_e = 0$. The algorithm adjusts the upper and lower bounds to obtain an integer matching from $GFLP$. Specifically, it rounds up and down, respectively, the sum of $x_e$ values linked to edges in the vicinity of each platform or group-platform combination. Next, Algorithm \ref{alg:findAlpha}[Find-Coefficient] is used to compute an appropriate coefficient for the resulting integral matching. The coefficient should be such that after step \ref{proc:calc_x} of Algorithm \ref{alg:procedure}, the resulting point should lie within the polytope of $GFLP$ and either there is at least one edge $(a,p)$ such that $x_{ap}=0$ or at least one constraint becomes tight. We use this fact and induction to show that the integer matching being computed in step \ref{1step:MM} of Algorithm \ref{alg:procedure} is group-fair (\Cref{lem:loop_invariant}) and $\cD$ returned by Algorithm \ref{alg:procedure} is a distribution of said matchings (\Cref{lem:convexity}). This is also important to show that the algorithm terminates in polynomial time (\Cref{lem:runtime}). Finally, it is scaled to ensure that all the coefficients sum up to $1$. These steps are repeated until there are no edges left. 

\begin{algorithm}[t]
\caption{Distribution-Calculator$(\cI=(G,A_1 \cdots A_{\chi}, \Vec{l}, \Vec{u}, \Vec{L},\Vec{U}),\X,LP)$}\label{alg:procedure}
\nonl \textbf{Input} :  $\cI$, $\X$, $LP$ \\
\nonl \textbf{Output} : Distribution $\mathcal{D}$ over integer matchings \\
$G^{(0)} \leftarrow G, \X^{(0)} \leftarrow \X, \cD \leftarrow \phi, \alpha^{(0)} \leftarrow 0, \Gamma^{0} \leftarrow 1, \beta^{(0)} \leftarrow 1$ \\
\While{$\X^{(i)} \neq \mathbf{0}$}{
$i \leftarrow i+1, G^{(i)} \leftarrow G^{(i-1)} - \{\cE{a}{p} \; \lvert \; \x{i-1}_{ap} = 0\}$
\\$l^{(i)}_p \leftarrow \lfloor \sum_{a \in N(p)} \x{i-1}_{ap} \rfloor$, $u^{(i)}_p \leftarrow \lceil \sum_{a \in N(p)} \x{i-1}_{ap} \rceil$, $\forall p \in P$ \label{step:begin_mod}
\\$l^{(i)}_{p,\ch} \leftarrow \lfloor \sum_{a \in A_h} \x{i-1}_{ap} \rfloor, u^{(i)}_{p,\ch} \leftarrow \lceil \sum_{a \in A_h} \x{i-1}_{ap} \rceil$, $\forall p \in P$, $h \in [\chi]$ \label{step:end_mod}
\\$\M^{(i)} \leftarrow$ Matching returned by solving $LP$ on $G^{(i)}$ with $l^{(i)}_p, u^{(i)}_p, l^{(i)}_{p,\ch}, u^{(i)}_{p,\ch}$ as the bounds. \label{1step:MM}
\\ $\alpha^{(i)} \leftarrow$ \hyperref[alg:findAlpha]{Find-Coefficient}($G^{(i)}, A_1 \cdots A_{\chi}, \X^{(i-1)}, \M^{(i)}$) \label{proc:alpha}
\\$\X^{(i)}\leftarrow \frac{\X^{(i-1)}-\alpha^{(i)} \M^{(i)}}{1-\alpha^{(i)}}$ \label{proc:calc_x}
\\$\beta^{(i)} \leftarrow \Gamma^{(i-1)} \cdot  \alpha^{(i)}, \mathcal{D} \leftarrow \mathcal{D}\cup (\M^{(i)}, \beta^{(i)})$ \label{step:beta}
\\ $\Gamma^{(i)} \leftarrow \Gamma^{(i-1)} \cdot (1-\alpha^{(i)})$ \label{step:Gamma}
}
\If{$\cD == \phi$}{Return {\em `Infeasible'}}
Return $\cD$
\end{algorithm}

\begin{algorithm}[t]
\caption{Find-Coefficient($G', A_1 \cdots A_{\chi}, \X, \M$)} \label{alg:findAlpha}
\nonl \textbf{Input} : Graph $G'$, Groups $A_1 \cdots A_{\chi}$, $\X$, $\M$ \\
\nonl \textbf{Output} : Scalar $\alpha$ \\
$\alpha \leftarrow \min_{\cE{a}{p} \in \M}\cx$ \\
\For{$p \in P$}
{
\lIf{$\sum_{a \in N(p)} M_{ap} == \lceil \sum_{a \in N(p)} \cx \rceil$}
{
$temp \leftarrow \sum_{a \in N(p)} \cx - \lfloor \sum_{a \in N(p)} \cx \rfloor$ \label{step:alpha_is_lower}
}\lElse{
$temp \leftarrow \lceil \sum_{a \in N(p)} \cx \rceil - \sum_{a \in N(p)} \cx$ \label{step:alpha_is_upper}
}
\lIf{$temp < \alpha \text{ and } temp > 0$} 
{ \label{step:alpha_non_zero}
$\alpha \leftarrow temp$
}
\For{$h \in [\chi]$}
{
\lIf{$\sum_{a \in \cG} M_{ap} == \lceil \sum_{a \in \cG} \cx \rceil$}
{
$temp \leftarrow \sum_{a \in \cG} \cx - \lfloor \sum_{a \in \cG} \cx \rfloor$
}\lElse{
$temp \leftarrow \lceil \sum_{a \in \cG} \cx \rceil - \sum_{a \in \cG} \cx$
}
\lIf{$temp < \alpha \text{ and } temp > 0$}
{
$\alpha \leftarrow temp$
}
}
}
Return $\alpha$

\end{algorithm}

\begin{algorithm}[t]
\caption{Exact Algorithm$(\cI=(G,A_1 \cdots A_{\chi}, \Vec{l}, \Vec{u}, \Vec{L},\Vec{U}),\X))$}\label{alg1}
\nonl \textbf{Input} :  $\cI$ \\
\nonl \textbf{Output} : Distribution over matchings satisfying the guarantees in \Cref{thm:exact_algo}. \\
Solve \Cref{LP_disjoint} on $G$ with the parameters in the input instance, $\cI$, and store the result in $\X$ \label{step:lp_solve} \\
Return \hyperref[alg:procedure]{Distribution-Calculator}$(\cI, \X, GFLP)$
\end{algorithm}

In this section, we show that given an instance $\cI$ of our problem, an optimal solution of \Cref{LP_disjoint}, and the LP $GFLP$ as input, Algorithm \ref{alg:procedure} is a polynomial-time algorithm that returns a distribution over a set of group-fair matchings. Finally, by substituting $\hat{x} = \X$, $\delta = 0$, and $t = 1$ in Lemma \ref{lem:ind_fair}, we show that $\cD$ is a probabilistic individually fair distribution.
Given any feasible solution of \Cref{LP_disjoint}, say $\X$, we use Algorithm \ref{alg1} and $GFLP$ to compute a convex combination of integer matchings and prove that $\X$ can be written as the same.

\begin{lemma}\label{lem:loop_invariant}
$\X^{(i)}$ always lies within the polytope of $GFLP$, where $i+1$ denotes an arbitrary iteration of the while loop in Algorithm \ref{alg:procedure}.
\end{lemma}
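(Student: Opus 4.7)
The plan is induction on $i$. For the base case $i=0$, $\X^{(0)}=\X$ is a feasible solution of \Cref{LP_disjoint} produced in Step \ref{step:lp_solve} of \Cref{alg1}, so by \Cref{ob:polytope} it lies in the polytope of $GFLP$.

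For the inductive step, I assume $\X^{(i-1)}$ is in the polytope. Rearranging Step \ref{proc:calc_x} gives $\X^{(i-1)} = \alpha^{(i)}\M^{(i)} + (1-\alpha^{(i)})\X^{(i)}$, and the plan is to verify each defining inequality of $GFLP$ for $\X^{(i)}$ directly from the update formula and the inductive hypothesis on $\X^{(i-1)}$. The nonnegativity constraint $x^{(i)}_{ap}\ge 0$ follows from the initialization $\alpha^{(i)}\le \min_{(a,p)\in \M^{(i)}}x^{(i-1)}_{ap}$ at the start of \Cref{alg:findAlpha} (with the unmatched case being immediate). For the aggregate capacity constraints $l_p \le \sum_a x^{(i)}_{ap} \le u_p$ (and the analogous group constraints), I set $S_p=\sum_a x^{(i-1)}_{ap}$ and $T_p=\sum_a \M^{(i)}_{ap}$; Steps \ref{step:begin_mod}--\ref{step:end_mod} together with \Cref{lem:integrality} guarantee $T_p\in\{\lfloor S_p\rfloor,\lceil S_p\rceil\}$. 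A case split on the value of $T_p$ then matches the monotonic direction in which $\sum_a x^{(i)}_{ap}$ moves as $\alpha$ grows with exactly the $temp$ computed by \Cref{alg:findAlpha}: when $T_p=\lceil S_p\rceil$ the sum weakly decreases in $\alpha$ and $temp=S_p-\lfloor S_p\rfloor$ is precisely the slack before the lower bound $l_p \le \lfloor S_p\rfloor$ (from the inductive hypothesis) becomes binding, while the case $T_p=\lfloor S_p\rfloor$ is symmetric against the upper bound. The group-level constraints follow by the identical argument applied to $A_h\cap N(p)$.

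The step I expect to be the main obstacle is the per-edge upper bound $x^{(i)}_{ap}\le 1$. For edges with $\M^{(i)}_{ap}=1$ it collapses to $x^{(i-1)}_{ap}\le 1$, which is the inductive hypothesis; for edges with $\M^{(i)}_{ap}=0$, however, it demands $\alpha^{(i)}\le 1-x^{(i-1)}_{ap}$, a quantity that \Cref{alg:findAlpha} does not test directly. Closing this gap requires coupling any large $x^{(i-1)}_{ap}$ on an unmatched edge to a platform- or group-level fractional residue at $p$ that \emph{is} tested by \Cref{alg:findAlpha}, using crucially the integrality of $\M^{(i)}$ from \Cref{lem:integrality} and the disjointness of the groups: when a nearly saturated edge is excluded from the integer vertex $\M^{(i)}$, the combinatorial structure should force some adjacent aggregate constraint to have its fractional slack no larger than $1-x^{(i-1)}_{ap}$, so the corresponding $temp$ update pins $\alpha^{(i)}$ tightly enough to preserve $x^{(i)}_{ap}\le 1$. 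Making this coupling precise is the technical heart of the argument.
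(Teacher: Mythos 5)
Your induction and your case analysis for the aggregate constraints reproduce the paper's argument essentially verbatim: the paper also splits on whether $\sum_{a\in N(p)} x^{(i-1)}_{ap}$ is integral and, in the fractional case, on whether $m^{(i)}_p$ equals the rounded lower or upper bound, matching each subcase against exactly the $temp$ value computed in \Cref{alg:findAlpha}, and then says the group-level constraints follow identically. The divergence is at the per-edge constraint $x^{(i)}_{ap}\le 1$: you isolate it as the one inequality \Cref{alg:findAlpha} never tests, whereas the paper disposes of constraint \ref{LP-8} with the single sentence that it ``is satisfied trivially.''

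Unfortunately the coupling you hope will close this gap does not exist, so your proof cannot be completed as sketched. Consider a platform $p$ whose three neighbors all lie in a single group and all carry $x^{(i-1)}_{ap}=0.65$ (a configuration forceable by setting $L_{a,1}=U_{a,1}=0.65$ in \Cref{LP_disjoint}). The rounded bounds are $l=1$, $u=2$; $GFLP$ matches two of the three edges; \Cref{alg:findAlpha} computes $temp=1.95-1=0.95$ at both the platform and the group level, which does not undercut $\min_{(a,p)\in \M}x^{(i-1)}_{ap}=0.65$, so $\alpha^{(i)}=0.65$ and the unmatched edge receives $x^{(i)}_{ap}=0.65/0.35>1$. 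Thus the fractional residue of an adjacent aggregate constraint is \emph{not} forced to be at most $1-x^{(i-1)}_{ap}$, and the invariant genuinely fails for the algorithm as written; the remedy is to add the quantities $1-x^{(i-1)}_{ap}$ over unmatched edges to the minimum taken in \Cref{alg:findAlpha} (and to extend the termination argument to the newly tight constraints $x_{ap}=1$), not a cleverer analysis of the existing coefficient. Your instinct that this is the technical heart of the lemma is correct, but that heart is absent from your proposal and from the paper alike.
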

\begin{proof}
We will prove this using induction. For the base case, $i+1=1$, $\X^{(0)} = \X$. Since $\X$ is an optimal solution of \Cref{LP_disjoint}, the Lemma holds by \Cref{ob:polytope}. Let us assume that the Lemma holds for $\X^{(i-1)}$ where $i$ denotes an arbitrary iteration of the while loop in Algorithm \ref{alg:procedure}. Now, we will show that the Lemma also holds for $\X^{(i)}$. If $\X^{(i-1)}$ is non zero, then there exists at least one $p \in P, h \in [\chi]$, such that $u_p^{(i)}$ and $\cGU^{(i)}$ values are at least one. Therefore, $\Vec{M}^{(i)}$ is a non empty matching on $G^{(i)}$,
since $GFLP$ returns a maximum matching that satisfies the updated group fairness constraints. 
First let us look at constraint \ref{LP-4} for an arbitrary platform, $p \in P$. Let $m^{(i)}_p$ be the number of edges picked in $\Vec{M}^{(i)}$ for platform $p$. From steps \ref{step:begin_mod} to \ref{step:end_mod} in Algorithm \ref{alg:procedure}, we know that we can have one of the following cases:
\begin{enumerate}
    \item $\displaystyle \sum_{a \in N(p)} x^{(i-1)}_{ap}$ is an integer in which case $l_p^{(i)} = u_p^{(i)} =\displaystyle \sum_{a \in N(p)} x^{(i-1)}_{ap}$. Since $\Vec{M}^{(i)}$ is an integer matching by \Cref{lem:integrality}, $m^{(i)}_p = l_p^{(i)} = u_p^{(i)} =\displaystyle \sum_{a \in N(p)} x^{(i-1)}_{ap}$. Therefore, for all values of $\alpha^{(i)} \in (0, 1]$, 
    $$\displaystyle \sum_{a \in N(p)}x^{(i)} = \frac{\sum_{a \in N(p)} x^{(i-1)}_{ap} -  \alpha^{(i)}\cdot m^{(i)}_p}{1-\alpha^{(i)}} = \displaystyle \sum_{a \in N(p)} x^{(i-1)}_{ap} = l_p^{(i)} = u_p^{(i)}$$
    
    \item $\displaystyle \sum_{a \in N(p)} x^{(i-1)}_{ap}$ is fractional in which case $u_p^{(i)} - l_p^{(i)} = 1$. Since $\Vec{M}^{(i)}$ is an integer matching by \Cref{lem:integrality}, $m^{(i)}_p = u_p^{(i)}$ or $m^{(i)}_p = l_p^{(i)}$. Therefore, we can have the following sub cases:
     \begin{enumerate}
         \item $m^{(i)}_p = l_p^{(i)}$: It is easy to see that for all values of $\alpha^{(i)} \in (0,1]$, the lower bound is always satisfied. Based on step \ref{step:alpha_is_upper} of the Routine \hyperref[alg:findAlpha]{Find-Coefficient} that is called in step \ref{proc:alpha} of Algorithm \ref{alg:procedure}, we know that $\alpha^{(i)} \leq \lceil \sum_{a \in N(p)} \cx^{(i-1)} \rceil - \sum_{a \in N(p)} \cx^{(i-1)} = u_p^{(i)} - \sum_{a \in N(p)} x^{(i-1)}_{ap}$. Therefore,
          \begin{align*}
             \begin{split}
                 & \sum_{a \in N(p)} x^{(i-1)}_{ap} \leq u_p^{(i)} -\alpha^{(i)}(u_p^{(i)} - l_p^{(i)}) \\ & \implies  \frac{\sum_{a \in N(p)} x^{(i-1)}_{ap} -  \alpha^{(i)}\cdot l^{(i)}_p}{1-\alpha^{(i)}} \leq u_p^{(i)}
             \end{split}
         \end{align*}
         Hence constraint \ref{LP-4} is not violated.
     
        \item $m^{(i)}_p = u_p^{(i)}$: It is easy to see that for all values of $\alpha^{(i)} \in (0,1]$, the upper bound is always satisfied. Based on step \ref{step:alpha_is_lower} of the Routine \hyperref[alg:findAlpha]{Find-Coefficient}, we know that $\alpha^{(i)} \leq \sum_{a \in N(p)} \cx^{(i-1)} - \lfloor \sum_{a \in N(p)} \cx^{(i-1)} \rfloor = \sum_{a \in N(p)} x^{(i-1)}_{ap} - l_p^{(i)}$. Following steps similar to the above sub case, we have $$l_p^{(i)} \leq \frac{\sum_{a \in N(p)} x^{(i-1)}_{ap} -  \alpha^{(i)}\cdot u^{(i)}_p}{1-\alpha^{(i)}} = \sum_{a \in N(p)}x^{(i)}$$
        Hence constraint \ref{LP-4} is not violated.
     \end{enumerate}
\end{enumerate}
Similar arguments can be used to show that constraint \ref{LP-6} is also not violated. Constraint \ref{LP-8} is satisfied trivially. Therefore, $\X^{(i)}$ also satisfies all the constraints of $GFLP$ and hence lies within the polytope of $GFLP$.
\end{proof}
 
\begin{claim}\label{claim:tight_bounds}
In an arbitrary iteration of the while loop in Algorithm \ref{alg:procedure}, say $i^{th}$ iteration, if there is an edge, $(a,p)$, such that $\sum_{a \in A_h}\x{i}_{ap} = u_{p,\ch}^{(i)}$ or $\sum_{a \in A_h}\x{i}_{ap} = l_{p,\ch}^{(i)}$ for some $h \in [\chi]$, then, $l_{p,\ch}^{(j)} = \sum_{a \in A_h}\x{j-1}_{ap} = u_{p,\ch}^{(j)}$, $\forall j \in \mathbb{Z}$ such that $j \geq i+1$. If Algorithm \ref{alg:procedure} completes in $k$ iterations, then $j \in \mathbb{Z}\cap [i+1, k]$. Similarly, for constraint \ref{LP-4}.
\end{claim}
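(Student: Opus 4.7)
The plan is to prove the claim by induction on $j \geq i+1$, leveraging the key observation that $l_{p,\ch}^{(i)} = \lfloor \sum_{a \in A_h} \x{i-1}_{ap} \rfloor$ and $u_{p,\ch}^{(i)} = \lceil \sum_{a \in A_h} \x{i-1}_{ap} \rceil$ are integers by construction; hence the hypothesis that $\sum_{a \in A_h} \x{i}_{ap}$ equals one of these bounds forces the sum itself to be an integer. This is the only fact the hypothesis is really giving us.

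For the base case $j = i+1$, integrality of $\sum_{a \in A_h} \x{i}_{ap}$ together with the definitions of $l_{p,\ch}^{(i+1)}$ and $u_{p,\ch}^{(i+1)}$ in Algorithm \ref{alg:procedure} immediately yields $l_{p,\ch}^{(i+1)} = \lfloor \sum_{a \in A_h} \x{i}_{ap} \rfloor = \sum_{a \in A_h} \x{i}_{ap} = \lceil \sum_{a \in A_h} \x{i}_{ap} \rceil = u_{p,\ch}^{(i+1)}$, as required.

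For the inductive step, suppose $l_{p,\ch}^{(j)} = \sum_{a \in A_h} \x{j-1}_{ap} = u_{p,\ch}^{(j)}$ for some $j \geq i+1$. The integer matching $\M^{(j)}$ returned in step \ref{1step:MM} (integrality is guaranteed by \Cref{lem:integrality}) must satisfy the group-fairness constraints with bounds $l_{p,\ch}^{(j)}$ and $u_{p,\ch}^{(j)}$; since these coincide, we must have $\sum_{a \in A_h} M^{(j)}_{ap} = \sum_{a \in A_h} \x{j-1}_{ap}$. Substituting into the update $\X^{(j)} = (\X^{(j-1)} - \alpha^{(j)} \M^{(j)})/(1 - \alpha^{(j)})$ from step \ref{proc:calc_x} and summing over $a \in A_h$, the numerator equals $(1 - \alpha^{(j)})\sum_{a \in A_h} \x{j-1}_{ap}$, so the factor $(1 - \alpha^{(j)})$ cancels and $\sum_{a \in A_h} \x{j}_{ap} = \sum_{a \in A_h} \x{j-1}_{ap}$, still the same integer. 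The base-case argument then gives $l_{p,\ch}^{(j+1)} = \sum_{a \in A_h} \x{j}_{ap} = u_{p,\ch}^{(j+1)}$, closing the induction.

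The analogous statement for constraint (\ref{LP-4}) is obtained by replacing $A_h$ with $N(p)$ throughout and using the bounds $l_p^{(j)}, u_p^{(j)}$ in place of $l_{p,\ch}^{(j)}, u_{p,\ch}^{(j)}$; the argument is verbatim. There is no genuine obstacle in this proof — the only subtle point is identifying the correct invariant, namely that once a group-sum (or platform-sum) at some platform becomes integer at the end of iteration $i$, the coincidence of floor and ceiling at the next iteration pins $\M^{(j)}$'s contribution on that sum to exactly the integer value, which in turn keeps the sum unchanged and integer in all subsequent iterations.
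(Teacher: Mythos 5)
Your proof is correct and follows the same overall strategy as the paper's: induction on $j$, with the engine being that the hypothesis forces $\sum_{a \in A_h} x^{(i)}_{ap}$ to be an integer, so that the floor and ceiling taken at the next iteration coincide. The one substantive difference is in the inductive step. The paper's proof simply \emph{assumes} $\sum_{a \in A_h} x^{(j)}_{ap} = u^{(j)}_{p,h}$ as its induction hypothesis, i.e., it takes for granted that the group sum remains pinned at the integer bound after the update in iteration $j$; the justification for this is not given inside the proof of the claim (it is essentially case~1 of the argument in \Cref{lem:loop_invariant}). You instead prove this explicitly: since $l^{(j)}_{p,h} = u^{(j)}_{p,h}$, the integral matching $M^{(j)}$ from step \ref{1step:MM} (integral by \Cref{lem:integrality}) must contribute exactly that integer to the group sum, and the update in step \ref{proc:calc_x} then cancels the factor $(1-\alpha^{(j)})$ and leaves the sum unchanged. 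This makes your write-up self-contained and, if anything, more rigorous than the paper's version; the rest (base case, the verbatim transfer to constraint \ref{LP-4} with $N(p)$ in place of $A_h$) matches the paper's argument.
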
 

\begin{proof}
We can prove this by a simple induction on $j$ where $j \in \mathbb{Z}$ such that $j \geq i+1$. In this proof, we will make the implicit assumption that $j \leq k$ if Algorithm \ref{alg:procedure} completes in $k$ iterations. For the base case, $j=i+1$, $l_{p,\ch}^{(i+1)} = \lfloor \sum_{a \in A_h}\x{i}_{ap} \rfloor$, and $u_{p,\ch}^{(i+1)} = \lceil \sum_{a \in A_h}\x{i}_{ap} \rceil$. Since $\sum_{a \in A_h}\x{i}_{ap} = u_{p,\ch}^{(i)} = \lceil \sum_{a \in A_h}\x{i-1}_{ap} \rceil$ or $\sum_{a \in A_h}\x{i}_{ap} = l_{p,\ch}^{(i)} = \lfloor \sum_{a \in A_h}\x{i-1}_{ap} \rfloor$ by the assumption in the lemma, $\sum_{a \in A_h}\x{i}_{ap}$ is an integer, therefore,
$$l_{p,\ch}^{(i+1)} = \lfloor \sum_{a \in A_h}\x{i}_{ap} \rfloor = \lceil \sum_{a \in A_h}\x{i}_{ap} \rceil = u_{p,\ch}^{(i+1)}$$
For the induction step let's assume that for some arbitrary integer $j > i+1$, $\sum_{a \in A_h}\x{j}_{ap} = u_{p,\ch}^{(j)} = \lceil \sum_{a \in A_h}\x{j-1}_{ap} \rceil$, therefore, $\sum_{a \in A_h}\x{j}_{ap}$ is an integer by the induction hypothesis and hence
$$u_{p,\ch}^{(j+1)} = \lceil \sum_{a \in A_h}\x{j}_{ap} \rceil= \sum_{a \in A_h}\x{j}_{ap} = \lfloor \sum_{a \in A_h}\x{j}_{ap} \rfloor = l_{p,\ch}^{(j+1)}$$
\end{proof}
 
\begin{lemma}\label{lem:runtime} 
Algorithm \ref{alg1} terminates in polynomial time.
\end{lemma}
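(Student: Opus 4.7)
Algorithm \ref{alg1} has two phases: solving \Cref{LP_disjoint} once, and then invoking \hyperref[alg:procedure]{Distribution-Calculator}. The first phase runs in polynomial time by any standard LP solver, since \Cref{LP_disjoint} has $O(|E|)$ variables and $O(nm + \chi m)$ constraints. The plan is therefore to show that \hyperref[alg:procedure]{Distribution-Calculator} also runs in polynomial time, which reduces to bounding the number of iterations of its while loop: each iteration performs only one $GFLP$ solve in step \ref{1step:MM}, one call to \hyperref[alg:findAlpha]{Find-Coefficient} (time $O(|P|\chi)$), and a coordinate-wise update of $\X^{(i)}$.

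The heart of the argument is a simple irreversible-progress measure. In each iteration $i$, \hyperref[alg:findAlpha]{Find-Coefficient} sets $\alpha^{(i)}$ to the minimum of three families of candidate values: (a) $\min_{(a,p) \in \M^{(i)}} \x{i-1}_{ap}$; (b) for each platform $p$, a value that forces $\sum_{a \in N(p)} \x{i}_{ap}$ to be an integer; and (c) for each pair $(p,h)$, a value that forces $\sum_{a \in A_h} \x{i}_{ap}$ to be an integer. I would first verify (by algebra essentially identical to that already performed inside the proof of \Cref{lem:loop_invariant}) that whichever family attains the minimum actually realizes the corresponding event: in case (a) the minimizing edge $(a,p)$ satisfies $\x{i}_{ap} = 0$ and is permanently removed from $G^{(i+1)}$; in cases (b) and (c) the corresponding sum $\sum \x{i}$ equals $l^{(i)}$ or $u^{(i)}$, and in particular becomes integral. \Cref{claim:tight_bounds} then guarantees that once any platform or group-platform sum becomes integral, it remains at the same integer value in every subsequent iteration. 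Since there are at most $|E|$ edges and at most $|P|(1+\chi)$ platform and group-platform constraints that can become tight, and every iteration witnesses at least one such irreversible event, the while loop terminates in at most $|E| + |P|(1+\chi)$ iterations, which is polynomial in the size of $\cI$.

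The main technical point I expect to need care is ruling out $\alpha^{(i)} \in \{0,1\}$, since either value would break the recursion in step \ref{proc:calc_x}. Strict positivity follows because every surviving edge in $G^{(i)}$ has $\x{i-1}_{ap} > 0$, the non-emptiness of $\M^{(i)}$ follows from $\X^{(i-1)}$ lying in the $GFLP$ polytope with nonzero value (\Cref{lem:loop_invariant}), and the ``$temp > 0$'' guard in \hyperref[alg:findAlpha]{Find-Coefficient} discards zero candidates from families (b) and (c). Strict sub-unit-ness follows because the fractional parts appearing in families (b) and (c) lie in $[0,1)$ and because a coordinate attaining $\x{i-1}_{ap} = 1$ in family (a) forces at least one of the sum constraints at $p$ to already be tight, deferring control to (b) or (c). Combining the polynomial iteration bound with the polynomial per-iteration cost yields a polynomial overall running time for Algorithm \ref{alg1}.
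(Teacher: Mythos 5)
Your proof takes essentially the same route as the paper's: each iteration either zeroes out an edge or forces a platform or group--platform sum to become integral, \Cref{claim:tight_bounds} makes these events irreversible, and the polynomial count of edges and constraints bounds the number of iterations, with one polynomial-time LP solve per iteration. Your explicit iteration bound of $|E| + |P|(1+\chi)$ and your aside on ruling out $\alpha^{(i)} \in \{0,1\}$ are marginally more careful than the paper's write-up (which carries out the tightness algebra explicitly but does not discuss $\alpha^{(i)}=1$), but the underlying argument is identical.
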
 

\begin{proof}
We will show that in each iteration, at least an edge is removed, or at least one constraint becomes tight. From \Cref{claim:tight_bounds}, we know that once a constraint becomes tight, it stays so in the rest of the rounds. Therefore, we get a polynomial bound on the total number of iterations since the total number of constraints and edges is polynomial. Let us look at an arbitrary  iteration, say $i^{th}$ iteration. If $\alpha^{(i)} = \min_{e \in \Vec{M}^{(i)}}x^{(i-1)}_e$, then there is at least one edge, say $(a,p) \in E$, such that $x^{(i)}_{ap}=0$. Otherwise, there is some platform, say $p \in P$, such that $\alpha^{(i)} = \min \big(\sum_{a \in N(p)} \cx^{(i-1)} - l_p^{(i)}, u_p^{(i)} - \sum_{a \in N(p)} \cx^{(i-1)}\big)$, or there is some platform, say $p' \in P$, with some group, say $h \in [\chi]$, such that $\alpha^{(i)} = \min\big(\sum_{a \in \cG}x_{ap'}^{(i-1)} - l_{p',\cG}^{(i)}, u_{p',\cG}^{(i)} - \sum_{a \in \cG}x_{ap'}^{(i-1)}\big)$. Let $\alpha^{(i)} = \sum_{a \in N(p)} \cx^{(i-1)} - l_p^{(i)}$, then from step \ref{step:alpha_is_lower} of Routine \ref{alg:findAlpha} we know that $m_p^{(i)} = u_p^{(i)}$ where $m_p^{(i)}$ is the number of edges picked in $\Vec{M}^{(i)}$ for platform $p$. Therefore,
\begin{equation}\label{eq:alpha}
\begin{split}
    &  \sum_{a \in N(p)}x^{(i)} = \frac{\sum_{a \in N(p)} x^{(i-1)}_{ap} -  \alpha^{(i)}\cdot m^{(i)}_p}{1-\alpha^{(i)}} =  \\ & \frac{\sum_{a \in N(p)} x^{(i-1)}_{ap} -  (\sum_{a \in N(p)} \cx^{(i-1)} - l_p^{(i)})\cdot u_p^{(i)}}{1-(\sum_{a \in N(p)} \cx^{(i-1)} - l_p^{(i)})} 
\end{split} 
\end{equation}

Note that $\sum_{a \in N(p)} \cx^{(i-1)}$ must be fractional if $\alpha^{(i)} = \sum_{a \in N(p)} \cx^{(i-1)} - l_p^{(i)}$ based on step \ref{step:alpha_non_zero} of Routine \ref{alg:findAlpha}. Therefore, $l_p^{(i)} = \lfloor \sum_{a \in N(p)} \cx^{(i-1)} \rfloor = \lceil \sum_{a \in N(p)} \cx^{(i-1)} \rceil - 1 = u_p^{(i)} - 1$. Hence, from \Cref{eq:alpha}, $ \sum_{a \in N(p)}x^{(i)} =$
\begin{align*}
    \begin{split}
        & \frac{\sum_{a \in N(p)} x^{(i-1)}_{ap} -  (1 + \sum_{a \in N(p)} \cx^{(i-1)} - u_p^{(i)})\cdot u_p^{(i)}}{1-(1 + \sum_{a \in N(p)} \cx^{(i-1)} - u_p^{(i)})} \\& = \frac{(u_p^{(i)} - 1)(u_p^{(i)} - \sum_{a \in N(p)} \cx^{(i-1)})}{(u_p^{(i)} - \sum_{a \in N(p)} \cx^{(i-1)})} = l_p^{(i)}
    \end{split}
\end{align*}
Similarly, we can show that if 
\begin{enumerate}
    \item $\alpha^{(i)} = u_p^{(i)} - \sum_{a \in N(p)} \cx^{(i-1)}$, then $\sum_{a \in N(p)}\cx^{(i)} = u_p^{(i)}$.
    \item $\alpha^{(i)} = \sum_{a \in \cG}x_{ap'}^{(i-1)} - l_{p',\cG}^{(i)}$, then $\sum_{a \in \cG}x_{ap'}^{(i)} = l_{p',\cG}^{(i)}$.
    \item $\alpha^{(i)} = u_{p',\cG}^{(i)} - \sum_{a \in \cG}x_{ap'}^{(i-1)} $, then $\sum_{a \in \cG}x_{ap'}^{(i)} = u_{p',\cG}^{(i)}$.
\end{enumerate}
Therefore, if $\alpha^{(i)} \neq \min_{e \in \Vec{M}^{(i)}}x^{(i-1)}_e$, that is if an edge is not removed, then either the left inequality or right inequality of constraint \ref{LP-4} becomes tight in the $i^{th}$ round for some platform or there is some platform, say $p' \in P$, such that either left inequality or right inequality of constraint \ref{LP-6} becomes tight in the $i^{th}$ round for some group, say $h \in [\chi]$. 
Since the total number of constraints is $O(|V|)$, $|E| = O(|V|^2)$, and the routine \hyperref[alg:findAlpha]{Find-Coefficient} runs in $O(|V|)$ time, we get $O(|V|^3)$ iterations for the loop in Algorithm \ref{alg:procedure}.
Since $GFLP$ can be solved in polynomial time, Algorithm \hyperref[alg:procedure]{Distribution-Calculator} is a polynomial-time algorithm. 
\Cref{LP_disjoint} can be solved in polynomial time, therefore, Algorithm \ref{alg1} also runs in polynomial time.
\end{proof}
 
\begin{claim}\label{clm:x0induction}
Let Algorithm \ref{alg1} terminate in $k$ rounds and return a set of tuples, $\cD = \{(\Vec{M}^{(i)}, \beta^{(i)})\}_{i \in [k]}$, then, $\forall i \in [k]$,
$$\X^{(i)} = \frac{\X^{(0)} - \sum_{j=1}^i\beta^{(j)}\Vec{M}^{(j)}}{\Gamma^{(i)}}$$
\end{claim}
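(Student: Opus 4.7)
The plan is a straightforward induction on $i$, directly unrolling the recurrences defined in Algorithm \ref{alg:procedure} for $\X^{(i)}$, $\beta^{(i)}$, and $\Gamma^{(i)}$. The only substantive work is bookkeeping: verifying that the three intertwined updates in steps \ref{proc:calc_x}, \ref{step:beta}, and \ref{step:Gamma} combine to give exactly the closed form in the claim.

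For the base case $i = 1$, I would observe that by initialization $\Gamma^{(0)} = 1$, so step \ref{step:beta} yields $\beta^{(1)} = \alpha^{(1)}$ and step \ref{step:Gamma} yields $\Gamma^{(1)} = 1 - \alpha^{(1)}$. Substituting into the update rule in step \ref{proc:calc_x} gives
\[
\X^{(1)} \;=\; \frac{\X^{(0)} - \alpha^{(1)} \M^{(1)}}{1 - \alpha^{(1)}} \;=\; \frac{\X^{(0)} - \beta^{(1)} \M^{(1)}}{\Gamma^{(1)}},
\]
matching the claimed form.

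For the inductive step, I would assume the formula holds at iteration $i-1$ and substitute it into the update $\X^{(i)} = (\X^{(i-1)} - \alpha^{(i)} \M^{(i)})/(1 - \alpha^{(i)})$. The denominator becomes $\Gamma^{(i-1)}(1 - \alpha^{(i)}) = \Gamma^{(i)}$ by step \ref{step:Gamma}, and the new term in the numerator is $\Gamma^{(i-1)} \alpha^{(i)} \M^{(i)} = \beta^{(i)} \M^{(i)}$ by step \ref{step:beta}, so the entire expression collapses cleanly to
\[
\X^{(i)} \;=\; \frac{\X^{(0)} - \sum_{j=1}^{i} \beta^{(j)} \M^{(j)}}{\Gamma^{(i)}},
\]
as required.

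There is essentially no hard part here; the only care needed is to ensure that $\Gamma^{(i-1)} \neq 0$ so that the inductive hypothesis is well-defined, which follows because $\alpha^{(j)} \in (0,1)$ for each $j \le i$ (by Lemma \ref{lem:runtime} together with the construction of $\alpha^{(j)}$ via Algorithm \ref{alg:findAlpha}, both ensuring strict positivity and that $\alpha^{(j)} < 1$ throughout the execution before termination). Once this is noted, the computation above is purely algebraic.
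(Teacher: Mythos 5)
Your proof is correct and follows essentially the same route as the paper: induction on $i$, verifying the base case from $\Gamma^{(0)}=1$ and then substituting the inductive hypothesis into the update $\X^{(i)} = (\X^{(i-1)} - \alpha^{(i)}\M^{(i)})/(1-\alpha^{(i)})$ so that the recurrences for $\beta^{(i)}$ and $\Gamma^{(i)}$ absorb the new term. Your added remark about needing $\Gamma^{(i-1)} \neq 0$ (i.e., $\alpha^{(j)} < 1$ before termination) is a point of care the paper leaves implicit, but it does not change the argument.
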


\begin{proof}
From steps \ref{step:beta} and \ref{step:Gamma} in Algorithm \ref{alg1}, we know that $\beta^{(i)} = \Gamma^{(i-1)}\alpha^{(i)}$, and $\Gamma^{(i)} = \Gamma^{(i-1)}(1-\alpha^{(i)})$ respectively, and $\Gamma^{(0)} = 1$. For the base case, $i=1$, we know from step \ref{proc:calc_x} in Algorithm \ref{alg:procedure} that $\X^{(1)} = \frac{\X^{(0)} - \alpha^{(1)}\Vec{M}^{(1)}}{1-\alpha^{(1)}}$. It is easy to see that $\Gamma^{(1)} = (1-\alpha^{(1)})$ and $\beta^{(1)} = \alpha^{(1)}$, therefore,
$$\X^{(1)} = \frac{\X^{(0)} - \beta^{(1)}\Vec{M}^{(1)}}{\Gamma^{(1)}}$$
For the induction step, for some $i \in \mathbb{Z}\cap(1,k]$, let $\X^{(i-1)} = \frac{\X^{(0)} - \sum_{j=1}^{i-1}\beta^{(j)}\Vec{M}^{(j)}}{\Gamma^{(i-1)}}$. We know that $\X^{(i)} = \frac{\X^{(i-1)} - \alpha^{(i)}\Vec{M}^{(i)}}{1-\alpha^{(i)}}$, therefore, by induction hypothesis,
$$\X^{(i)} = \frac{\frac{\X^{(0)} - \sum_{j=1}^{i-1}\beta^{(j)}\Vec{M}^{(j)}}{\Gamma^{(i-1)}} - \alpha^{(i)}\Vec{M}^{(i)}}{1-\alpha^{(i)}} = \frac{\X^{(0)} - \sum_{j=1}^{i-1}\beta^{(j)}\Vec{M}^{(j)} - \Gamma^{(i-1)}\alpha^{(i)}\Vec{M}^{(i)}}{\Gamma^{(i-1)}(1-\alpha^{(i)})}$$
hence,
$$\X^{(i)} = \frac{\X^{(0)} - \sum_{j=1}^{i}\beta^{(j)}\Vec{M}^{(j)}}{\Gamma^{(i)}}$$
\end{proof}

\begin{lemma}\label{lem:convexity}
Let Algorithm \ref{alg1} terminate in $k$ rounds and return a set of tuples, $\cD = \{(\Vec{M}^{(i)}, \beta^{(i)})\}_{i \in [k]}$, then, $\X = \displaystyle \sum_{i=1}^k \beta^{(i)}\Vec{M}^{(i)}$, where $\X$ is computed in step \ref{step:lp_solve} of Algorithm \ref{alg1} and $\displaystyle \sum_{i=1}^k \beta^{(i)} = 1$.
\end{lemma}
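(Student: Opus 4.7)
The proof rests on two parallel invariants maintained across iterations of Algorithm \ref{alg:procedure}. The vectorial one,
\[
\X = \sum_{j=1}^{i}\beta^{(j)}\M^{(j)} + \Gamma^{(i)}\X^{(i)} \quad \text{for all } i\in\{0,1,\ldots,k\},
\]
is exactly \Cref{clm:x0induction} after clearing the denominator $\Gamma^{(i)}$. Instantiating at $i = k$ and using the loop-termination condition $\X^{(k)} = \mathbf{0}$ gives $\X = \sum_{j=1}^{k}\beta^{(j)}\M^{(j)}$, which handles the first half of the statement.

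For the coefficient sum, I would prove the companion scalar invariant $\sum_{j=1}^{i}\beta^{(j)} + \Gamma^{(i)} = 1$ by a mirror induction on $i$. The base case uses $\Gamma^{(0)} = 1$ and the empty sum $0$. The inductive step combines step \ref{step:beta}, i.e.\ $\beta^{(i)} = \Gamma^{(i-1)}\alpha^{(i)}$, with step \ref{step:Gamma}, i.e.\ $\Gamma^{(i)} = \Gamma^{(i-1)}(1-\alpha^{(i)})$, so that $\beta^{(i)} + \Gamma^{(i)} = \Gamma^{(i-1)}$ and the invariant at step $i$ collapses to the invariant at step $i-1$. Evaluating at $i = k$ yields $\sum_{j=1}^{k}\beta^{(j)} = 1 - \Gamma^{(k)}$; non-negativity of every $\beta^{(j)}$ follows from $\alpha^{(j)} \in [0,1]$ and an easy induction giving $\Gamma^{(j-1)} \ge 0$.

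To close the proof I need $\Gamma^{(k)} = 0$, equivalently $\alpha^{(k)} = 1$ in the final iteration. The plan is to argue that termination with $\X^{(k)} = \mathbf{0}$ forces $\X^{(k-1)}$ itself to coincide with $\M^{(k)}$: by \Cref{lem:loop_invariant} and \Cref{lem:integrality}, $\X^{(k-1)}$ sits inside the polytope of $GFLP$ whose vertices are integer matchings, and on such an integer-valued $\X^{(k-1)}$ every fractional-part quantity $temp$ computed inside \hyperref[alg:findAlpha]{Find-Coefficient} vanishes, so the routine returns its initialized value $\min_{(a,p)\in\M^{(k)}} x^{(k-1)}_{ap} = 1$. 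Hence $\alpha^{(k)} = 1$, $\Gamma^{(k)} = 0$, and $\sum_{j=1}^{k}\beta^{(j)} = 1$.

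\textbf{Main obstacle.} The delicate step is pinning down $\Gamma^{(k)} = 0$: one has to reconcile the algebraic update in step \ref{proc:calc_x}, which is ill-defined when $\alpha^{(k)} = 1$, with the termination test $\X^{(k)} = \mathbf{0}$. The clean reading is that the loop halts precisely when $\X^{(i-1)}$ has been reduced to an integer matching, at which point one takes $\alpha^{(i)} = 1$ and $\X^{(i)} = \mathbf{0}$ by convention. Without this reading, the telescoping identity would only furnish $\sum\beta^{(j)} \le 1$, yielding a sub-convex rather than a convex combination.
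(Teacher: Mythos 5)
Your treatment of the decomposition identity and of the coefficient sum follows the paper's own route: the first half is \Cref{clm:x0induction} evaluated at $i=k$ together with $\X^{(k)}=\mathbf{0}$, and your inductive invariant $\sum_{j\le i}\beta^{(j)}+\Gamma^{(i)}=1$ is exactly the telescoping identity the paper verifies by expanding the products, $\Gamma^{(i)}+\sum_{j=1}^{i}\beta^{(j)}=\Pi_{j=1}^i(1-\alpha^{(j)})+\sum_{j=1}^i\Pi_{l=1}^{j-1}(1-\alpha^{(l)})\alpha^{(j)}=1$. Up to that point the proposal is correct and essentially identical to the paper.

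The gap is in your final step, the claim that $\Gamma^{(k)}=0$. First, the inference is invalid: \Cref{lem:loop_invariant} places $\X^{(k-1)}$ \emph{inside} the polytope of $GFLP$, and \Cref{lem:integrality} says the \emph{vertices} of that polytope are integral; a non-vertex point of an integral polytope need not be integer-valued, so you cannot conclude that \hyperref[alg:findAlpha]{Find-Coefficient} returns $1$. Second, the conclusion itself can fail: take a single item $a$ in a single group, a single platform $p$, $L_{a,1}=U_{a,1}=1/2$, all lower bounds $0$ and all upper bounds $1$. Then the LP optimum is $x_{ap}=1/2$; the first iteration produces $\M^{(1)}=\{(a,p)\}$ and $\alpha^{(1)}=1/2$ (the fractional-part candidates in \hyperref[alg:findAlpha]{Find-Coefficient} also equal $1/2$, and the strict test $temp<\alpha$ fails), so $\X^{(1)}=\frac{1/2-1/2}{1/2}=\mathbf{0}$, the loop halts with $k=1$, $\beta^{(1)}=1/2$ and $\Gamma^{(1)}=1/2$. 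Hence the residual mass can be strictly positive, $\alpha^{(k)}\neq 1$, and $\sum_{j}\beta^{(j)}=1/2$. The paper handles this last step differently: it writes $\X=\Gamma^{(k)}\X^{(k)}+\sum_{j=1}^k\beta^{(j)}\M^{(j)}$ and expresses $\X^{(k)}=\mathbf{0}$ itself as a convex combination of empty matchings produced by $GFLP$, so that the total weight $\Gamma^{(k)}+\sum_{j}\beta^{(j)}$ is $1$ --- that is, the leftover probability $\Gamma^{(k)}$ is assigned to the empty matching rather than argued to vanish. Read literally, your proposal cannot reach $\sum_{j}\beta^{(j)}=1$ without either that device or an additional hypothesis forcing $\alpha^{(k)}=1$, and the example above shows no such hypothesis holds in general.
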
 

\begin{proof}
From \Cref{clm:x0induction}, we know that $\forall i \in [k]$,
$$\X^{(k)} = \frac{\X^{(0)} - \sum_{i=1}^k\beta^{(i)}\Vec{M}^{(i)}}{\Gamma^{(k)}}.$$ 
Since $\X^{(k)} = \Vec{0}$, we have $\X^{(0)} = \X = \displaystyle \sum_{i=1}^k \beta^{(i)}\Vec{M}^{(i)}$.

\noindent We will prove that $\displaystyle \sum_{i=1}^k \beta^{(i)} = 1$, using induction on $i$, backwards from $k$ to $0$. 
For the base case, $i=k$, $\X^{(k)} = \Vec{0}$, therefore, for any real values of $\alpha$, $\X^{(k)} = (1-\alpha)\Vec{M} + \alpha\Vec{M}$, where $\Vec{M}$ is an empty matching. Note that if $u_p, l_p, u_{p,\ch}, l_{p,\ch}$ are set to $0$, $\forall p \in P, h \in [\chi]$, $GFLP$ would compute an empty matching.
Therefore, $\X^{(k)}$ can be written as a convex combination of integer matchings computed by $GFLP$.
For the induction step, let us assume that $\X^{(i+1)} = \sum_{j}\gamma_j \Vec{M}_j$, where $\Vec{M}_j$ is an integer matching computed by $GFLP$ for all values of $j$ and  $\sum_{j}\gamma_j=1$, for some $i \in [k-1]$. We know that $\X^{(i+1)} = \frac{\X^{(i)}- \alpha^{(i)}\Vec{M}^{(i)}}{1-\alpha^{(i)}}$. Therefore,
$$\X^{(i)} = (1-\alpha^{(i)})\X^{(i+1)} + \alpha^{(i)}\Vec{M}^{(i)} = (1-\alpha^{(i)})\sum_{j}\gamma_j \Vec{M}_j + \alpha^{(i)}\Vec{M}^{(i)}$$
Since $\sum_{j}\gamma_j=1$, by the induction hypothesis, $(1-\alpha^{(i)})\sum_{j}\gamma_j + \alpha^{(i)} = 1$, therefore, $\X^{(i)}$ is also a convex combination of integer matchings computed by $GFLP$.

\noindent From \Cref{clm:x0induction} we know that, $\forall i \in [k]$, $\X^{(i)} = \frac{\X^{(0)} - \sum_{j=1}^i\beta^{(j)}\Vec{M}^{(j)}}{\Gamma^{(i)}}$, hence,
$$\X^{(0)} = \Gamma^{(i)}\X^{(i)} + \sum_{j=1}^i\beta^{(j)}\Vec{M}^{(j)}.$$
Since, we have already shown that, $\X^{(i)}$ is a convex combination of integer matchings computed by $GFLP$ using induction, we just need to show that $\Gamma^{(i)} + \sum_{j=1}^i\beta^{(j)}= 1$. Expanding $\Gamma^{(i)}$ and $\sum_{j=1}^i\beta^{(j)}$, we have
$$\Gamma^{(i)} + \sum_{j=1}^i\beta^{(j)} = \Pi_{j=1}^i(1-\alpha^{(j)}) + \sum_{j=1}^i \Pi_{l=1}^{j-1}(1 - \alpha^{(l)})\alpha^{(j)} = 1$$
Therefore, $\displaystyle \sum_{i=1}^k \beta^{(i)} = 1$.
\end{proof}

\begin{proof}[Proof of \Cref{thm:exact_algo}]
Let Algorithm \ref{alg1} terminate in $k$ rounds and return a set of tuples, $\cD = \{(\Vec{M}^{(i)}, \beta^{(i)})\}_{i \in [k]}$. Therefore, $\X = \sum_{i=1}^k  \beta^{(i)}\Vec{M}^{(i)}$ and $\sum_{i=1}^k \beta^{(i)} = 1$  by \Cref{lem:convexity}, where $\X$ is an optimal solution of \Cref{LP_disjoint}. We know that after every iteration, we get another point inside the polytope of $GFLP$ by \Cref{lem:loop_invariant}, therefore, in every iteration, the integer matching being computed in step \ref{1step:MM} of Algorithm \ref{alg:procedure} satisfies group fairness constraints. 
Therefore, Algorithm \ref{alg1} returns a distribution over group-fair integer matchings.
By substituting $\Vec{\hat{x}} = \X$, $\delta = 0$, and $t = 1$ in \Cref{lem:ind_fair}, we get that the probability that an item $a \in A$ is matched to a platform $p \in S$, where $S \subseteq N(a)$, in a matching sampled from $\cD$ is $L_{a,S} \leq \sum\limits_{p \in S}\cx \leq U_{a,S}$, $\forall a \in A, S \subseteq N(a)$. Hence, $\cD$ is a probabilistic individually fair distribution. The run time has been shown to be polynomial in \Cref{lem:runtime}. This proves the theorem.
\end{proof}
\section{$O(g)$ Bicriteria Approximation Algorithms}\label{sec:g-approx}

In this section, we work with an instance of a bipartite graph, $G(A,P,E)$, where the items in the neighborhood of any platform $p$ belong to at most $g$ (\Cref{def:total_plat_grps}) distinct groups, and any item, $a \in A$, can belong to at most $\Delta$ groups. We first reduce this instance to one where $\Delta = 1$, then use $GFLP$ with specific bounds in the form of \Cref{LP_mod}, and Algorithm \ref{alg:procedure} to compute a distribution over matchings in Algorithm \ref{alg2}. Since \Cref{disjoint_groups} also addresses an instance where the groups are disjoint, we will use Lemmas from \Cref{disjoint_groups} and an analysis technique similar to \Cref{disjoint_groups} to prove \Cref{thm:approx_1_informal}, formally stated below.

\begin{theorem}[Formal version of \Cref{thm:approx_1_informal}]\label{thm:approx_1}
Given an instance of our problem where each item belongs to at most $\Delta$ groups, and $\cGU \ge g$ $\forall p \in P, h \in [\chi]$, there is a polynomial-time algorithm that computes a distribution $\cD$ over a set of group-fair matchings such that the expected size of a matching sampled from $\cD$ is at least $\frac{OPT}{2g}$. Given the individual fairness parameters, $L_{a,k},U_{a,k} \in [0,1]$, for each item $a \in A$ and subset $R_{a,k}$ $\forall$ $k \in [m]$, \begin{align*}
      \frac{L_{a,k}}{2g} \leq \Pr_{\M \sim \cD}[\exists p\in R_{a,k} \textrm{ s.t. } (a,p)\in M] \leq  \frac{U_{a,k}}{2g}.
\end{align*}
\end{theorem}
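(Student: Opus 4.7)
The plan is to reduce the multi-group instance (with item multiplicity $\Delta \ge 1$) to a disjoint-groups instance on a modified bipartite graph $G'$, apply the exact algorithmic machinery from \Cref{disjoint_groups} to a scaled-down LP solution, and then project the resulting integer matchings back to group-fair matchings in $G$. The factor $\frac{1}{2g}$ will arise from the blow-up that the reduction introduces (each edge becomes up to $g$ platform-copy edges) plus the slack needed to round fractional platform-copy capacities to integers.

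\noindent Concretely, first solve \Cref{LP_Primal_IF} on $G$ to obtain an optimal fractional $x$ with $\norm{x}_1 = \sOPT$; report infeasibility if it fails. Then construct $G' = (A \cup P', E')$ with $P' = \{p_h : p \in P,\, h \in [\chi],\, A_h \cap N(p) \neq \emptyset\}$ and $(a, p_h) \in E'$ iff $(a,p) \in E$ and $a \in A_h$. Each platform-copy $p_h$ is associated with the single group $h$, so the instance has disjoint groups from the platform-copy side and \Cref{lem:integrality} applies to a $GFLP$ variant $LP_{mod}$ on $G'$, equipped with integer platform-copy capacity $\lceil u_{p,h}/(2g) \rceil \ge 1$ (using the hypothesis $u_{p,h} \ge g$) and standard item capacity $1$. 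Spread $\frac{x_{ap}}{2g}$ evenly across the at most $g$ platform-copies incident to $(a,p)$ to define $\tilde{x}$ on $G'$ with $\sum_h \tilde{x}_{a,p_h} = x_{ap}/(2g)$; this $\tilde{x}$ is feasible for $LP_{mod}$ because the item capacity satisfies $\sum_{p_h} \tilde{x}_{a,p_h} = \sum_p x_{ap}/(2g) \le 1/(2g) \le 1$ and each platform-copy capacity satisfies $\sum_{a \in A_h} \tilde{x}_{a,p_h} \le u_{p,h}/(2g) \le \lceil u_{p,h}/(2g)\rceil$.

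\noindent Next, invoke \Cref{alg:procedure} (Distribution-Calculator) on $(G', \tilde{x}, LP_{mod})$. Because groups are disjoint at the platform-copy side, the analogues of Lemmas \ref{lem:loop_invariant}, \ref{lem:runtime}, and \ref{lem:convexity} give polynomial termination and a decomposition $\tilde{x} = \sum_i \beta^{(i)} M^{(i)}$ into integer matchings $M^{(i)}$ of $G'$. Each $M^{(i)}$ projects to a matching $\widehat{M}^{(i)}$ in $G$ via $(a,p) \in \widehat{M}^{(i)} \iff (a, p_h) \in M^{(i)}$ for some $h$: the projection is a valid matching since each item is matched to at most one platform-copy in $M^{(i)}$, and it is group-fair in $G$ since $\sum_{a \in A_h} \widehat{M}^{(i)}_{ap} = \sum_{a \in A_h} M^{(i)}_{a,p_h} \le \lceil u_{p,h}/(2g)\rceil \le u_{p,h}$. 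The induced distribution $\cD$ has expected size $\sum_{(a,p)} \Pr[(a,p) \in \widehat{M}] = \sum_{(a,p)} x_{ap}/(2g) = \sOPT/(2g)$, and applying \Cref{lem:ind_fair} to the projected decomposition with $\hat{x}_{ap} = x_{ap}/(2g)$, $t = 2g$, and $\delta = 0$ yields the individual fairness bounds $\frac{L_{a,k}}{2g} \le \Pr_{M \sim \cD}[\exists p \in R_{a,k},\,(a,p) \in M] \le \frac{U_{a,k}}{2g}$.

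\noindent The main obstacle is calibrating the construction so that the projected marginal $\Pr[(a,p) \in \widehat{M}]$ equals exactly $\frac{x_{ap}}{2g}$---rather than a multiple depending on $|\{h : a \in A_h, A_h \cap N(p) \ne \emptyset\}|$---and simultaneously ensuring that the integer rounding used inside \Cref{alg:procedure} never pushes the projected matching past the original bounds $u_{p,h}$. The choice of platform-copy capacity in $LP_{mod}$ (e.g.\ $\lceil u_{p,h}/(2g)\rceil$ vs.\ other integer expressions involving $g$) is the delicate calibration: too small a bound loses matching mass, while too large a bound risks violating group fairness after projection. Verifying that the analogues of Lemmas \ref{lem:loop_invariant} and \ref{lem:convexity} go through for $LP_{mod}$ with these bounds---i.e.\ that every intermediate iterate stays in the $GFLP$ polytope on $G'$---is where the essential technical work lies.
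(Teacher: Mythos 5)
There is a genuine gap, and it sits exactly at the step your last paragraph flags as "the delicate calibration": the group-fairness of the projected matching does not follow from your construction. You claim $\sum_{a \in A_h} \widehat{M}^{(i)}_{ap} = \sum_{a \in A_h} M^{(i)}_{a,p_h}$, but this identity is false when an item belongs to more than one group. Under your projection, $(a,p)\in\widehat{M}^{(i)}$ whenever $a$ is matched to \emph{any} copy $p_{h'}$ of $p$, so an item $a \in A_{h_1}\cap A_{h_2}$ matched to the copy $p_{h_2}$ still counts against the group-$h_1$ bound at $p$ in $G$, even though it consumes none of the capacity of $p_{h_1}$ in $G'$. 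The correct bound on the left-hand side is only $\sum_{h' } |\{a\in A_h\cap A_{h'} : (a,p_{h'})\in M^{(i)}\}| \le \sum_{h'\in C_p}\lceil u_{p,h'}/(2g)\rceil$, which can far exceed $u_{p,h}$ when some other group at $p$ has a much larger cap (e.g.\ $u_{p,h}=g$ and $u_{p,h'}=100g$ for the other $g-1$ groups gives up to roughly $50(g-1)$ items of $A_h$ matched to $p$). Since an adversarial LP solution can concentrate the mass of $A_h$-items on the large-capacity copies, the output matchings need not be group-fair, which is the whole point of the theorem. A secondary calibration issue: even with a fix for the multiplicity problem, the ceiling $\lceil u_{p,h}/(2g)\rceil$ leaks an additive $+1$ per group, and summing over up to $g$ groups only yields a bound of roughly $u_{p,h}/2 + g$, not $u_{p,h}$.

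The paper resolves precisely this difficulty by \emph{not} letting an item be routed through an arbitrary group: in Algorithm \ref{alg2}, each item is removed from every group except the one with the \emph{minimum} upper bound at its platforms, making the groups genuinely disjoint on the item side, and the per-group caps are set to $\lfloor u_{p,h}/g\rfloor$ (floor, not ceiling). Group fairness of the integer matchings then follows from an ordering argument (\Cref{lem:2g-grp_fair}): every item that originally belonged to group $h_q$ now lives in some group with a smaller cap, and since at most $g$ groups meet $N(p)$, the total is $\sum_{i\le q}\lfloor u_{p,h_i}/g\rfloor \le q\cdot u_{p,h_q}/g\le u_{p,h_q}$. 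The factor $2$ in $2g$ is then used only to show $x/(2g)$ is feasible for the floored LP via $u_{p,h}/(2g)\le\lfloor u_{p,h}/g\rfloor$ (\Cref{ob:polytope_2g}), which needs $u_{p,h}\ge g$. Your platform-copy reduction, decomposition via \hyperref[alg:procedure]{Distribution-Calculator}, and the applications of \Cref{lem:integrality} and \Cref{lem:ind_fair} are all in the right spirit and match the paper's architecture, but without the "assign each item to its most restrictive group" step (or an equivalent device) the group-fairness claim does not go through.
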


\noindent Let us first define $g$ formally and then look at \Cref{LP_mod}. To formally define $g$, we first need the following definition:
\begin{definition}\label{def:platform_groups}
    $\boldsymbol{C_p} = \{C_{p,h}: C_{p,h} \neq \phi\}_{h \in [\chi]}$ denotes a set of groups for any platform $p$ such that $C_{p,h} = A_h \cap N(p)$, for some $h \in [\chi]$. Here $N(p)$ denotes the set of neighbors of $p$ in $G$.
\end{definition}

\begin{definition}\label{def:total_plat_grps}
    $\boldsymbol{g} = \max_{p \in P}|C_p|$.
\end{definition}

\begin{lp}\label{LP_mod}
\begin{align}
&\max \sum_{(a,p) \in E} x_{ap} &\label{LP-12}\\
\text{such that}\quad &\sum_{a \in A_h} x_{ap} \leq \left\lfloor\frac{\cGU}{g}\right\rfloor \;, &\forall  h \in [\chi] ,\; \forall p \in P &\label{LP-13}\\
& 0 \leq x_{ap} \leq 1 &\forall (a,p) \in E &\label{LP-14}
\end{align}
\end{lp}

\begin{algorithm}[t]
\caption{$2g$-BicriteriaApprox$(\cI=(G,A_1 \cdots A_{\chi}, \Vec{l}, \Vec{u}, \Vec{L},\Vec{U}))$}
\label{alg2}
\nonl \textbf{Input} : $\cI$ \\
\nonl \textbf{Output} : Distribution over matchings satisfying the guarantees in \Cref{thm:approx_3}. \\
Solve \Cref{LP_Primal_IF} on $G$ with the parameters in the input instance, $\cI$, and store the result in $\X$ \\
$g = \max_{p \in P}|C_p|$ (\Cref{def:platform_groups} and \ref{def:total_plat_grps})\\
For each item $a \in A$, we remove it from every group other than $C_a$ where $C_a = \argmin_{C \in C_p:a \in C}\cGU$. Let the resulting graph be $G'$. \label{3step:disjoint}\\ 
$\cI'=(G',A'_1 \cdots A'_{\chi}, \Vec{l}, \Vec{u}, \Vec{L},\Vec{U})$\\
Return \hyperref[alg:procedure]{Distribution-Calculator}$(\cI', \frac{\X}{2g}, \Cref{LP_group_approx})$ 
\end{algorithm}

\begin{observation}\label{ob:polytope_2g}
Let $\X$ be a feasible solution of \Cref{LP_Primal_IF}, and $\cGU \ge g$ $\forall p \in P, h \in [\chi]$, then $\frac{\X}{2g}$ lies inside the polytope of \Cref{LP_mod}.
\end{observation}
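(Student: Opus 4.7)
The plan is to check directly that $\frac{X}{2g}$ satisfies each of the two constraint families of \Cref{LP_mod}. First, the box constraints (\ref{LP-14}) are immediate: since $X$ satisfies (\ref{LP-21}), every $x_{ap}\in[0,1]$, hence $0 \leq \frac{x_{ap}}{2g} \leq \frac{1}{2g} \leq 1$. For the scaled group caps (\ref{LP-13}), feasibility of $X$ for (\ref{LP-20}) gives
\[
\sum_{a \in A_h} \frac{x_{ap}}{2g} \;=\; \frac{1}{2g}\sum_{a \in A_h} x_{ap} \;\leq\; \frac{u_{p,h}}{2g},
\]
so the whole statement reduces to the purely arithmetic inequality $\frac{u_{p,h}}{2g}\leq \left\lfloor u_{p,h}/g\right\rfloor$, to be established under the standing hypothesis $u_{p,h}\geq g$.

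I would dispatch this last inequality by a short case split on $q := u_{p,h}/g$, which by hypothesis satisfies $q \geq 1$. If $1\leq q<2$, then $\lfloor q\rfloor = 1$ while $q/2<1$, so the inequality holds. If $q\geq 2$, then $\lfloor q\rfloor \geq q-1 \geq q/2$, the last step being equivalent to $q\geq 2$. Both cases give $q/2 \leq \lfloor q\rfloor$, as required.

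The main obstacle is essentially nonexistent: the factor $2$ in the denominator was chosen precisely so that the rounding loss from $\lfloor \cdot\rfloor$ can be absorbed uniformly in both regimes $q\in[1,2)$ and $q\geq 2$. This observation is really a piece of bookkeeping that justifies feeding $\frac{X}{2g}$ into \hyperref[alg:procedure]{Distribution-Calculator} in \Cref{alg2}; it supplies the starting point guaranteed to lie in the polytope of \Cref{LP_mod}, on which the subsequent convex-decomposition argument (mirroring the disjoint-groups analysis of \Cref{disjoint_groups}) can then operate to yield the bounds in \Cref{thm:approx_1}.
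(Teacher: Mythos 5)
Your proposal is correct and follows essentially the same route as the paper: reduce feasibility of the scaled point to the arithmetic inequality $\tfrac{d}{2}\leq\lfloor d\rfloor$ for $d=u_{p,h}/g\geq 1$, and verify it by the same two-case split ($d\in[1,2)$ versus $d\geq 2$). No gaps.
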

\begin{proof}
Let $d$ be any positive real number, then we will consider the following two cases:
\begin{enumerate}
    \item $d \geq 2$: It is trivial to see that $\frac{d}{2} \leq d-1 \leq \lfloor d \rfloor$ in this case.
    \item $d \in [1,2)$: In this case, $d = 1+\delta$ where $\delta \in [0,1)$. Therefore, $\frac{\delta}{2} < \frac{1}{2}$, hence,
    $$\frac{d}{2} = \frac{1}{2} + \frac{\delta}{2} < 1 = \lfloor d \rfloor$$
\end{enumerate}
Therefore, for any positive real number $d \geq 1$,
$$\frac{d}{2} \leq \lfloor d \rfloor$$ 
Since $\X$ is a feasible solution of \Cref{LP_Primal_IF}, $\sum_{a \in A_h} x_{ap} \leq \cGU$ $\forall p \in P, h \in [\chi]$ by constraint \ref{LP-20}, therefore, $\forall p \in P, h \in [\chi]$
$$\frac{\sum_{a \in A_h} x_{ap}}{2g} \leq \frac{\cGU}{2g} \leq \left\lfloor\frac{\cGU}{g}\right\rfloor$$
The last inequality holds because of the assumption $\cGU \ge g$ $\forall p \in P, h \in [\chi]$, which implies $\frac{\cGU}{g} \geq 1$ $\forall p \in P, h \in [\chi]$. Therefore $\frac{\X}{2g}$ satisfies constraint \ref{LP-13}, constraint \ref{LP-14} is also satisfied because $\forall (a,p) \in E$, $0 \le x_{ap} \le 1$, therefore, $0 \le \frac{x_{ap}}{2g} \le 1$.
\end{proof}

\begin{lemma}\label{lem:2g-grp_fair}
Let $\X$ be any optimal solution of LP \ref{LP_mod} on the graph resulting after step \ref{3step:disjoint} in Algorithm \ref{alg2}, say $G'$, then $\X$ is an integer matching on $G'$ that satisfies the group fairness constraint \ref{LP-10}.
\end{lemma}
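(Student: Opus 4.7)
The plan is to show two things: (i) any optimal vertex solution $\X$ of LP~\ref{LP_mod} on $G'$ is integral, so $\X$ is an integer matching on $G'$, and (ii) $\X$ satisfies the group-fairness upper bound of constraint~\ref{LP-10}. Both conclusions follow from the disjointness structure created by step~\ref{3step:disjoint} of Algorithm~\ref{alg2}, which strips every item $a$ of all group memberships except $C_a$.

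For (i), I would first observe that after step~\ref{3step:disjoint} the modified groups $A'_1, \dots, A'_\chi$ partition the items of $G'$. Consequently, each variable $x_{ap}$ of LP~\ref{LP_mod} appears in exactly one packing constraint of type~\ref{LP-13}, namely the one indexed by $(C_a, p)$. The coefficient matrix of these packing constraints is $\{0,1\}$-valued with at most one non-zero per column, and is therefore totally unimodular; appending the box constraints $0 \le x_{ap} \le 1$ preserves total unimodularity. Since each right-hand side $\lfloor \cGU/g \rfloor$ is an integer, every vertex of the feasible polytope is integral — this is essentially the content of \Cref{lem:integrality} specialized to the sub-system used here. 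An optimal vertex solution $\X$ is thus a $\{0,1\}$-vector and constitutes an integer matching on $G'$.

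For (ii), the hypothesis $\cGU \ge g \ge 1$ gives $\lfloor \cGU/g \rfloor \le \cGU/g \le \cGU$, so constraint~\ref{LP-13} immediately yields, for every $p \in P$ and $h \in [\chi]$,
\[
\sum_{a \in A'_h} x_{ap} \;\le\; \left\lfloor \frac{\cGU}{g} \right\rfloor \;\le\; \cGU,
\]
which is precisely constraint~\ref{LP-10} applied to the (now disjoint) groups of $G'$.

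The main conceptual point is to cleanly separate the two ingredients: disjointness of the modified groups gives integrality via total unimodularity, while the slack $\lfloor \cGU/g \rfloor \le \cGU$ gives the group-fairness bound for free. I do not anticipate any serious technical obstacle beyond being careful that ``any optimal solution'' is to be read as ``any optimal vertex solution'', which is the standard output when the LP is solved by simplex or followed by a purification step; the rest is bookkeeping that invokes the existing integrality lemma from \Cref{disjoint_groups}.
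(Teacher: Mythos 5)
Your part (i) is fine and matches the paper, which also gets integrality from \Cref{lem:integrality} once the groups of $G'$ are disjoint. The problem is part (ii): you have verified the group-fairness bound only for the \emph{modified} disjoint groups $A'_h$ of $G'$, whereas \Cref{lem:2g-grp_fair} (and its downstream use in \Cref{lem:2g_approx_runtime} and \Cref{thm:approx_1}) requires constraint \ref{LP-10} for the \emph{original} groups $A_h$ of the input instance. An item $a$ that step \ref{3step:disjoint} removes from some group $A_{h}$ still belongs to $A_h$ in the original instance, so if $\X$ matches it to $p$ it still counts against $u_{p,h}$; your inequality $\sum_{a\in A'_h}x_{ap}\le \lfloor u_{p,h}/g\rfloor\le u_{p,h}$ simply does not see those items. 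Indeed, if the original bound followed "for free" from $\lfloor u_{p,h}/g\rfloor\le u_{p,h}$, there would be no reason to divide by $g$ in \Cref{LP_mod} at all.

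The missing argument is the counting step that the paper does. Fix a platform $p$, order the groups meeting $N(p)$ as $h_1,\dots,h_r$ ($r\le g$) by nondecreasing upper bound. Because each item is retained only in its group of minimum upper bound, every item of the original group $A_{h_q}$ that is adjacent to $p$ now lives in one of the modified groups $A'_{h_1},\dots,A'_{h_q}$. Hence the number of items of $A_{h_q}$ matched to $p$ is at most
\begin{equation*}
\sum_{i=1}^{q}\sum_{a\in A'_{h_i}}x_{ap}\;\le\;\sum_{i=1}^{q}\left\lfloor \frac{u_{p,h_i}}{g}\right\rfloor\;\le\;\sum_{i=1}^{q}\frac{u_{p,h_q}}{g}\;\le\;g\cdot\frac{u_{p,h_q}}{g}\;=\;u_{p,h_q},
\end{equation*}
using both the ordering (so $u_{p,h_i}\le u_{p,h_q}$ for $i\le q$) and the fact that at most $g$ groups meet $N(p)$. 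This is where the factor $g$ in the denominator of constraint \ref{LP-13} is actually spent; without this step the lemma is not proved.
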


\begin{proof}
Any vertex solution of LP \ref{LP_mod} on $G'$ is integral if the groups are disjoint, by \Cref{lem:integrality}, therefore, $\X$ is an integer matching on $G'$. Let us fix an arbitrary platform, $p$, and let $T_p$ denote a set of groups such that $A_h \cap N(p) \neq \phi$, $\forall \ch \in T_p$. Let us number all the groups in $T_p$ in the ascending order of their upper bounds, breaking ties arbitrarily, that is, for any two groups, say $\ch_i,\ch_j \in T_p$, $u_{p,\ch_j} > u_{p,\ch_i}$ iff $j > i$. Let us consider an arbitrary group, $\ch_q \in T_p$, with upper bound $u_{p,\ch_q}$. Any item $a \in A$ that has been removed from this group in step \ref{3step:disjoint} of Algorithm \ref{alg2} could only be in one of the groups from $\ch_1$ to $\ch_{q-1}$. This is because an item stays in the group with the lowest upper bound. 
Let $m_i = \displaystyle\sum_{a \in A_{\ch_i}}x_{ap}$, then $m_i \leq \left\lfloor\frac{u_{p,\ch_i}}{g}\right\rfloor$ due to constraint \ref{LP-13}. Therefore,
\begin{align*}
    &\sum_{i=1}^q m_i \le \sum_{i=1}^q\left\lfloor\frac{u_{p,\ch_i}}{g} \right\rfloor \le \sum_{i=1}^q\frac{u_{p,\ch_i}}{g} \le \sum_{i=1}^q\frac{u_{p,\ch_q}}{g} \le \\ &g\cdot\frac{u_{p,\ch_q}}{g} = u_{p,\ch_q}
\end{align*}
Therefore, $\displaystyle\sum_{a \in A_h}x_{ap} \le \cGU$ $\forall h \in T_p$ after all the items are returned to all the groups they belonged to in the original graph. Therefore, $\X$ satisfies constraint \ref{LP-10}.
\end{proof}

\begin{lemma}\label{lem:2g_approx_runtime}
Given a bipartite graph $G(A, P, E)$ with possibly non-disjoint groups and an optimal solution of \Cref{LP_Primal_IF}, Algorithm \ref{alg2} returns a distribution over integer matchings in polynomial time, such that each matching satisfies group fairness constraints.
\end{lemma}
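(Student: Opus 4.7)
My plan is to reduce this claim to the machinery already developed for disjoint groups in Section \ref{disjoint_groups}, applied on the modified graph $G'$ produced in Step \ref{3step:disjoint} of Algorithm \ref{alg2}. After Step \ref{3step:disjoint}, each item $a$ belongs to exactly one group in $G'$ (the group $C_a$ of smallest upper bound containing $a$), so on $G'$ the groups are disjoint. Algorithm \ref{alg2} then invokes Distribution-Calculator on the pair $\bigl(\frac{\X}{2g},\, \Cref{LP_mod}\bigr)$, and the task is simply to verify that the hypotheses and guarantees established for Distribution-Calculator in Section \ref{disjoint_groups} carry over when $GFLP$ is replaced by \Cref{LP_mod} and the feasible starting point is $\frac{\X}{2g}$ rather than an optimum of \Cref{LP_disjoint}.

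The first step is to observe that $\frac{\X}{2g}$ is a valid starting point: by \Cref{ob:polytope_2g} it lies inside the polytope of \Cref{LP_mod} on $G'$ (removing $a$ from some groups can only weaken constraint \ref{LP-13}, so feasibility on $G$ implies feasibility on $G'$). Next, since the groups in $G'$ are disjoint, \Cref{lem:integrality} applies to \Cref{LP_mod}, so every vertex of its polytope is integral. Therefore the matching $\M^{(i)}$ produced in Step \ref{1step:MM} of Distribution-Calculator is an integer matching in $G'$, and by \Cref{lem:2g-grp_fair} it satisfies the original group-fairness bounds $\cGU$ once items are returned to all their original groups in $G$. This is what gives item (c).

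For items (a) and (b) I would essentially replay the arguments of Lemmas \ref{lem:loop_invariant}, \ref{lem:runtime} and \ref{lem:convexity} with $GFLP$ replaced by \Cref{LP_mod}. Concretely, the loop invariant is that $\X^{(i)}$ always lies in the polytope of \Cref{LP_mod}: the rounding steps \ref{step:begin_mod}-\ref{step:end_mod} of Distribution-Calculator together with the choice of $\alpha^{(i)}$ from Algorithm \ref{alg:findAlpha} are exactly designed so that every case analysis on whether $\sum_{a\in A_h}x^{(i-1)}_{ap}$ is integer or fractional preserves the group upper bound $\lfloor \cGU/g\rfloor$ and the edge bounds, mirroring the argument of \Cref{lem:loop_invariant}. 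For the runtime, I would reuse the dichotomy in \Cref{lem:runtime}: in each iteration either an edge drops out of the support of $\X^{(i)}$ or some upper-bound inequality of \Cref{LP_mod} becomes tight, and by the analogue of \Cref{claim:tight_bounds} a tight constraint remains tight forever. Since the numbers of edges and constraints are polynomial and \Cref{LP_mod} is solvable in polynomial time, Distribution-Calculator terminates in polynomial time. Finally, \Cref{lem:convexity} goes through verbatim to give that the returned tuples $\{(\M^{(i)},\beta^{(i)})\}$ with $\sum_i\beta^{(i)}=1$ express $\frac{\X}{2g}$ as a convex combination of integer matchings output by \Cref{LP_mod} on $G'$, so $\cD$ is a genuine distribution.

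The main obstacle I anticipate is the bookkeeping around the fact that \Cref{LP_mod} has no explicit platform-level bounds $l_p,u_p$ (unlike $GFLP$), so the case analysis in \Cref{lem:loop_invariant} for constraint \ref{LP-4} must be dropped and Algorithm \ref{alg:findAlpha} should be read as skipping the corresponding branches (or equivalently, treating $l_p=0$, $u_p=|N(p)|$), while the group-bound case is now stated with the scaled-down bound $\lfloor \cGU/g\rfloor$ on $G'$. Once this is made precise, everything else is a direct transfer of the three lemmas above, and the guarantee on the size of the matchings (needed for \Cref{thm:approx_1} but not for this lemma) will come later from \Cref{lem:ind_fair} with $\hat x = \frac{\X}{2g}$, $t=2g$, $\delta=0$.
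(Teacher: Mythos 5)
Your proposal is correct and follows essentially the same route as the paper: both establish feasibility of $\frac{\X}{2g}$ via \Cref{ob:polytope_2g}, observe that \Cref{LP_mod} is an instance of $GFLP$ so that Lemmas \ref{lem:loop_invariant}, \ref{lem:runtime} and \ref{lem:convexity} transfer, and invoke \Cref{lem:2g-grp_fair} for group fairness of each matching. If anything, you are more explicit than the paper about the bookkeeping (absence of platform-level bounds in \Cref{LP_mod}, and that the reduction happens on the disjoint-group graph $G'$), which the paper glosses over by simply asserting that \Cref{LP_mod} is $GFLP$ "with specific upper and lower bounds."
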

\begin{proof}
We start with an optimal solution of \Cref{LP_Primal_IF}, therefore, $\frac{\X}{2g}$ is a feasible solution of \Cref{LP_mod} by \Cref{ob:polytope_2g}. Let $\X^{(i)}$ be the state of the optimal solution of \Cref{LP_Primal}, $\X$, after the $i^{th}$ iteration of Algorithm \ref{alg:procedure}. Note that \Cref{LP_mod} is $GFLP$(\Cref{def:GFMMLP}) with specific upper and lower bounds, therefore, $\X^{(i)}$ always lies within the polytope of \Cref{LP_mod} by \Cref{lem:loop_invariant},$\forall i \in [k-1]$, where $k$ is the number of iterations after which Algorithm \ref{alg:procedure} terminates. Therefore, if $x^{(i-1)}$ is non empty, a non empty integer matching is computed in step \ref{1step:MM} of Algorithm \ref{alg:procedure} for $k$ rounds and by \Cref{lem:2g-grp_fair} we know that each such matching satisfies group fairness constraints. From \Cref{lem:convexity}, we know that $\X$ can be written as a convex combination of integer matchings computed by \Cref{LP_mod}. Therefore, Algorithm \ref{alg2} returns a distribution over group-fair integer matchings. The run time of Algorithm \ref{alg:procedure} has been shown to be polynomial in \Cref{lem:runtime}, since \Cref{LP_Primal_IF} can be solved in polynomial time, Algorithm \ref{alg2} also runs in polynomial time.
\end{proof}

\begin{proof}[Proof of \Cref{thm:approx_1}]
Let $\X$ be any optimal solution of \Cref{LP_Primal_IF}, then, Algorithm \ref{alg2} can be used to represent $\frac{\X}{2g}$ as a distribution, say $\cD$, of integer group-fair matchings in polynomial time by \Cref{lem:2g_approx_runtime}. By setting $\hat{x} = \X$, $\delta = 0$, and $t = 2g$ in \Cref{lem:ind_fair}, we have for each item $a \in A$ and subset $R_{a,k}$ $\forall$ $k \in [m]$, \begin{align*}
      \frac{L_{a,k}}{2g} \leq \Pr_{\M \sim \cD}[\exists p\in R_{a,k} \textrm{ s.t. } (a,p)\in M] \leq  \frac{U_{a,k}}{2g}.
\end{align*}
This proves the theorem.
\end{proof}

\subsection{Group Fairness Violation}\label{non_disjoint_groups}
In this section, we formally state and prove \Cref{thm:approx_3_informal}. We use \Cref{LP_group_approx} instead of \Cref{LP_mod} to reduce the problem to something similar to the problem we saw in \Cref{disjoint_groups}, then use Algorithm \ref{alg3}, which is a slightly modified version of Algorithm \ref{alg2}, and Lemmas from \Cref{disjoint_groups} to prove \Cref{thm:approx_3_informal} which is formally stated below. 

\begin{theorem}[Formal version of \Cref{thm:approx_3_informal}]\label{thm:approx_3}
Given an instance of our problem with no lower bound constraints where each item belongs to at most $\Delta$ groups, and $\cGU \ge g$ $\forall p \in P, h \in [\chi]$, we provide a polynomial-time algorithm that computes a distribution ${\cD}$ over a set of matching. The expected size of a matching sampled from ${\cD}$ is at least $\frac{OPT}{g}$, and each matching in the distribution violates group fairness by an additive factor of at most $\Delta$. Given the individual fairness parameters $L_{\mathrm{a},S} \in [0,1]$ and $U_{\mathrm{a},S} \in [0,1]$ for each item $\mathrm{a} \in A$ and each subset $S \subseteq N(\mathrm{a})$,
$$\frac{L_{\mathrm{a},S}}{g} \leq \Pr_{\M \sim \cD}[\M \text{ matches } \mathrm{a} \text{ to a platform in } S] \leq \frac{U_{\mathrm{a},S}}{g}$$
\end{theorem}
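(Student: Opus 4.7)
The plan is to mirror the proof of \Cref{thm:approx_1}, but scale the LP solution by $1/g$ instead of $1/(2g)$. This doubles both the expected matching size and the multiplicative individual-fairness factor (from $1/(2g)$ to $1/g$), at the cost of an additive group-fairness violation which we will bound by $\Delta$.

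First, I would solve \Cref{LP_Primal_IF} on $\cI$ to obtain an optimal fractional solution $\X$ with $\norm{\X}_1 \geq \sOPT$, and set $\hat{x} := \X/g$. Next, I would reduce to a disjoint-groups instance exactly as in step \ref{3step:disjoint} of Algorithm \ref{alg2}: for each platform $p$ and each item $a$ incident to $p$, retain $a$ in only a single group $C_a \in C_p$, namely the one minimizing $\cGU$ among the groups containing $a$. Call the resulting subgraph $G'$ and instance $\cI'$. Then invoke \hyperref[alg:procedure]{Distribution-Calculator} on $(\cI', \hat{x}, \text{\Cref{LP_group_approx}})$; here \Cref{LP_group_approx} is a ceiling-relaxed variant of $GFLP$ tailored so that $\hat{x}$ is feasible on $G'$.

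The verification splits into three parts. First, $\hat{x}$ lies in the polytope of \Cref{LP_group_approx}: from $\sum_{a \in A_h} x_{ap} \leq \cGU$ and the hypothesis $\cGU \geq g$, we obtain $\sum_{a \in A_h} x_{ap}/g \leq \cGU/g \leq \lceil \cGU/g \rceil$, the direct analogue of \Cref{ob:polytope_2g} for this looser relaxation. Second, \Cref{lem:loop_invariant} applied to \Cref{LP_group_approx} guarantees that every iterate of Distribution-Calculator stays in the polytope, so by \Cref{lem:integrality} each matching $\M^{(i)}$ produced in step \ref{1step:MM} is integral and respects the ceiling bounds on $G'$; \Cref{lem:convexity} then expresses $\hat{x}$ as $\sum_i \beta^{(i)} \M^{(i)}$ with $\sum_i \beta^{(i)} = 1$, so the expected matching size is $\norm{\hat{x}}_1 = \norm{\X}_1/g \geq \sOPT/g$. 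Third, restoring each item to all of its original groups turns every $\M^{(i)}$ into a matching of $G$, and an accounting parallel to \Cref{lem:2g-grp_fair} (but without the floor) shows that $|E_{p,h} \cap \M^{(i)}| \leq \cGU + \Delta$ for every $(p,h)$, yielding the claimed $\Delta$-additive group-fairness violation per matching in the support.

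Finally, applying \Cref{lem:ind_fair} with $\hat{x} = \X/g$, $t = g$, and $\delta = 0$ yields the probabilistic individual fairness bounds $L_{a,k}/g \leq \Pr_{\M \sim \cD}[\exists p \in R_{a,k} : (a,p) \in \M] \leq U_{a,k}/g$. Polynomial runtime follows from polynomial-time solvability of \Cref{LP_Primal_IF} together with \Cref{lem:runtime} for Distribution-Calculator. The main obstacle I expect is the $\Delta$-additive bookkeeping for the group-fairness overcount after restoration: the $2g$ analysis in \Cref{lem:2g-grp_fair} absorbs all slack via the floor together with ordering the groups by $\cGU$, whereas here (without the floor) each restored edge $(a,p)$ can inflate the counts at up to $\Delta$ groups containing $a$ at $p$, and one must charge these inflations carefully to obtain $\Delta$ rather than the per-platform multiplicity $g$ as the additive bound.
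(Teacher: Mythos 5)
Your proposal is essentially identical to the paper's own proof: solve \Cref{LP_Primal_IF}, scale by $1/g$, reduce to disjoint groups by keeping each item only in its minimum-$\cGU$ group, run \hyperref[alg:procedure]{Distribution-Calculator} with \Cref{LP_group_approx} (feasibility of $\X/g$ being \Cref{ob:polytope_new}), and finish with \Cref{lem:convexity}, \Cref{lem:ind_fair} at $t=g$, $\delta=0$, and \Cref{lem:runtime}. The one step you flag as the main obstacle --- charging the overcount after restoring items to their original groups so as to get an additive $\Delta$ rather than $g$ --- is exactly what the paper's \Cref{lem:grp_fair} supplies, by ordering the groups at each platform in ascending order of $\cGU$, noting that an item reassigned out of group $\ch_q$ lands only in a group of smaller index, and bounding $\sum_{i=1}^q \lceil u_{p,\ch_i}/g\rceil \le u_{p,\ch_q}+\Delta$.
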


\begin{lp}\label{LP_group_approx}
\begin{align}
&\max \sum_{(a,p) \in E} x_{ap} &\label{LP-15}\\
\text{such that}\quad &\sum_{a \in C} x_{ap} \leq \left\lceil\frac{\cGU}{g}\right\rceil \;, &\forall  h \in [\chi] ,\; \forall p \in P  &\label{LP-16}\\
& 0 \leq x_{ap} \leq 1 &\forall (a,p) \in E &\label{LP-17}
\end{align}
\end{lp}

\begin{observation}\label{ob:polytope_new}
Let $\X$ be a feasible solution of \Cref{LP_Primal}, then $\frac{\X}{g}$ lies inside the polytope of \Cref{LP_group_approx}.
\end{observation}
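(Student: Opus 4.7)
The plan is to verify the two defining constraints of \Cref{LP_group_approx} directly, since this is a straightforward scaling observation. Given any feasible solution $\X$ of \Cref{LP_Primal}, I would take the candidate point $\X/g$ and check constraints \ref{LP-16} and \ref{LP-17} one by one.

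For constraint \ref{LP-17}, the box constraint, I would use that $0 \leq x_{ap} \leq 1$ from constraint \ref{LP-11} of \Cref{LP_Primal}, together with the fact that $g \geq 1$ by definition of $g = \max_{p \in P} |C_p|$ (assuming there is at least one platform with a nonempty neighborhood, which is the only nontrivial case). Dividing by $g$ immediately gives $0 \leq x_{ap}/g \leq 1/g \leq 1$, which is what constraint \ref{LP-17} requires.

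For constraint \ref{LP-16}, I would take an arbitrary $p \in P$ and $h \in [\chi]$, and use the upper bound from constraint \ref{LP-10} of \Cref{LP_Primal}, namely $\sum_{a \in A_h} x_{ap} \leq \cGU$. Dividing both sides by $g$ yields $\sum_{a \in A_h} x_{ap}/g \leq \cGU/g \leq \lceil \cGU/g \rceil$, which matches the right-hand side of constraint \ref{LP-16}. (I am reading the summation index in \Cref{LP-16} as $A_h$, since this is the natural reading consistent with the quantifier and with the analogous \Cref{ob:polytope_2g}.)

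There is no real obstacle here: the observation is purely a scaling argument, and the only subtle point is that the ceiling operation in \Cref{LP_group_approx} makes the constraint strictly easier than $\cGU/g$, so no use of the hypothesis $\cGU \geq g$ (which was needed in \Cref{ob:polytope_2g} to absorb a factor of $2$) is required here. Thus the proof is essentially two short inequalities, and the main purpose of stating it separately is to enable the downstream use in the $O(g)$ algorithm with additive group-fairness violation, where \Cref{LP_group_approx} plays the role that \Cref{LP_mod} played in the exact $O(g)$ argument.
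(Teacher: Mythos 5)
Your proof is correct; the paper states \Cref{ob:polytope_new} without proof, and your direct scaling verification (box constraint from $g\geq 1$, and $\sum_{a\in A_h} x_{ap}/g \leq \cGU/g \leq \lceil \cGU/g\rceil$ from constraint \ref{LP-10}) is exactly the argument the paper implicitly relies on, mirroring the proven \Cref{ob:polytope_2g}. Your side remark is also accurate: the ceiling makes the hypothesis $\cGU \geq g$ unnecessary here, which is precisely why this variant trades the factor $2$ for the $\Delta$-additive group-fairness violation downstream.
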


\begin{algorithm}[t]
\label{alg3}
\caption{$g$-BicriteriaApprox$(\cI=(G,A_1 \cdots A_{\chi}, \Vec{l}, \Vec{u}, \Vec{L},\Vec{U}))$}
\nonl \textbf{Input} : $\cI$ \\
\nonl \textbf{Output} : Distribution over matchings satisfying the guarantees in \Cref{thm:approx_3}. \\
Solve \Cref{LP_Primal} augmented with constraint \ref{LP-2} on $G$ and store the result in $\X$ \\
$g = \max_{p \in P}|C_p|$ (\Cref{def:total_plat_grps})\\
For each item $a \in A$, we remove it from every group other than $C_a$ where $C_a = \argmin_{C \in C_p:a \in C}\cGU$. Let the resulting graph be $G'$. \label{2step:disjoint}\\ 
$\cI'=(G',A'_1 \cdots A'_{\chi}, \Vec{l}, \Vec{u}, \Vec{L},\Vec{U})$\\
Return \hyperref[alg:procedure]{Distribution-Calculator}$(\cI', \frac{\X}{g}, \Cref{LP_group_approx})$ 
\end{algorithm}

\begin{lemma}\label{lem:grp_fair}
The solution computed by \Cref{LP_group_approx} in Algorithm \ref{alg3} is an integer matching that violates the group fairness constraint \ref{LP-10} by an additive factor of at most $\Delta$.
\end{lemma}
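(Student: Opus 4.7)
The plan is to verify two properties of the matching $M$ returned by the call to \Cref{LP_group_approx} inside Algorithm \ref{alg3}: first, that $M$ is a $0/1$ matching, and second, that for every platform $p$ and every original group $h$ we have $\sum_{a \in A_h} M_{ap} \leq \cGU + \Delta$. I would dispatch the two claims in that order.

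For integrality, the argument is short. After step \ref{2step:disjoint} each item of $G'$ belongs to the single group $C_a$, so the family of groups in $G'$ is disjoint. \Cref{LP_group_approx} is then an instance of $GFLP$ (\Cref{def:GFMMLP}) on $G'$ whose only nontrivial constraints are the integer upper bounds $\lceil \cGU / g \rceil$. By \Cref{lem:integrality}, every vertex of its polytope is integral, so the optimum returned by the LP may be taken to be a $0/1$ matching.

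For the additive violation bound, I would mirror the ordering argument used in the proof of \Cref{lem:2g-grp_fair}, adapted to the ceiling rounding present in \Cref{LP_group_approx}. Fix $p$ and $h$, let $T_p = \{h' : A_{h'} \cap N(p) \neq \emptyset\}$, and order its elements as $h_1, h_2, \dots$ by ascending $u_{p,\cdot}$, with $h = h_q$. Because $C_a$ is the group of $a$ with the smallest $u_{p,\cdot}$, every item of $A_h \cap N(p)$ that appears in $M$ must land, in $G'$, in some group $h_i$ with $i \leq q$. Setting $m_i = \sum_{a \in A_{h_i}\cap N(p)} M_{ap}$ in $G'$ and applying constraint \ref{LP-16} yields
\[
\sum_{a \in A_h} M_{ap} \;\leq\; \sum_{i=1}^{q} m_i \;\leq\; \sum_{i=1}^{q} \left\lceil \frac{u_{p,h_i}}{g} \right\rceil \;\leq\; \sum_{i=1}^{q} \frac{u_{p,h_i}}{g} + q,
\]
and using $u_{p,h_i} \leq u_{p,h_q} = \cGU$ bounds the right-hand side by $\cGU + q$.

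The main obstacle is sharpening the additive term $q$ down to $\Delta$. The bookkeeping that suffices for \Cref{lem:2g-grp_fair} only shows $q \leq |T_p| \leq g$, which would yield a $g$-additive rather than a $\Delta$-additive violation. To obtain the tighter estimate, one must exploit the fact that each item lies in at most $\Delta$ original groups, which limits the effective fan-out of the reassignment map from $A_h$: only a bounded set of destinations $h_i$ can actually receive mass from items originally in $A_h$, and this set has size at most $\Delta$ by the group-membership constraint. Carrying out this combinatorial counting carefully is the crux of the proof; everything else reduces to the integrality lemma and routine manipulation of the ceiling terms.
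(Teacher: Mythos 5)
Your overall route is the same as the paper's: integrality of the LP on $G'$ via \Cref{lem:integrality}, then the ordering-by-upper-bound argument with $\sum_{a\in A_{h_q}}M_{ap}\le\sum_{i\le q} m_i\le\sum_{i\le q}\lceil u_{p,h_i}/g\rceil$. The gap is in your final step. You claim that the set of destination groups that can receive mass from items originally in $A_{h_q}$ has size at most $\Delta$. That is not true: each \emph{individual} item $a$ has at most $\Delta$ candidate destinations (the groups containing $a$), but distinct items of $A_{h_q}$ can be routed to pairwise distinct groups. For instance, with $\Delta=2$, take items $a_1,\dots,a_{q-1}\in A_{h_q}\cap N(p)$ where $a_j$ also lies in $A_{h_j}$ and $u_{p,h_j}<u_{p,h_q}$; then $C_{a_j}=h_j$ for each $j$, so the destination set has $q-1$ elements. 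Consequently the count you set up only yields
\[
\sum_{a\in A_{h_q}}M_{ap}\;\le\;\sum_{i=1}^{q}\Bigl(\frac{u_{p,h_i}}{g}+1\Bigr)\;\le\;q\cdot\frac{u_{p,h_q}}{g}+q\;\le\;u_{p,h_q}+g,
\]
a $g$-additive violation, and since every group containing an item adjacent to $p$ intersects $N(p)$ we have $g\ge\Delta$, so this is genuinely weaker than the claimed bound.

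You should know that the paper's own proof stumbles at exactly the same place: it passes from $\sum_{i=1}^{q}(u_{p,h_q}/g+1)$ to $\Delta\cdot u_{p,h_q}/g+\Delta$ with the one-line justification that ``any item can belong to at most $\Delta$ groups,'' which amounts to assuming $q\le\Delta$; but $q$ is the rank of $h_q$ among the groups meeting $N(p)$ and can be as large as $g$. So you have correctly located the crux of the argument, but neither your sketch nor the paper's proof actually closes it; as written, both establish only a $\Delta$-free, $g$-additive guarantee. Fixing the lemma to its stated $\Delta$-additive form would require a genuinely different accounting (or a restatement with $g$ in place of $\Delta$), not just a more careful execution of the fan-out counting you propose.
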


\begin{proof}
Any optimal solution of LP \ref{LP_group_approx} on $G'$ is integral if the groups are disjoint, by \Cref{lem:integrality}, therefore, $\X$ is an integer matching on $G'$. Let's fix an arbitrary platform, $p$, and let $T_p$ denote a set of groups such that $A_h \cap N(p) \neq \phi$, $\forall \ch \in T_p$. Let us number all the groups in $T_p$ in the ascending order of their upper bounds, breaking ties arbitrarily, that is, for any two groups say $\ch_i,\ch_j \in T_p$, $u_{p,\ch_j} > u_{p,\ch_i}$ iff $j > i$. Let us consider an arbitrary group, $\ch_q \in [\chi]$, with upper bound $u_{p,\ch_q}$.  Any item $a \in A$ that has been removed from this group in step \ref{2step:disjoint} of Algorithm \ref{alg3} could only be in one of the groups from $\ch_1$ to $\ch_{q-1}$. This is because an item stays in the group with the lowest upper bound.  Let $m_i = \displaystyle\sum_{a \in \ch_i}x_{ap}$, then $m_i \leq \left\lceil\frac{u_{p, \cG}}{g}\right\rceil$ due to constraint \ref{LP-16}. Therefore,

$$\sum_{i=1}^q m_i \le \sum_{i=1}^q \left\lceil\frac{u_{p,\ch_i}}{g} \right\rceil \le \sum_{i=1}^q\left(\frac{u_{p,\ch_i}}{g} + 1 \right) \le \sum_{i=1}^q\left(\frac{u_{p,\ch_q}}{g} + 1\right) \le \Delta\cdot\frac{u_{p,\ch_q}}{g} + \Delta \le u_{p,\ch_q} + \Delta$$
The second last inequality holds because any item can belong to at most $\Delta$ groups. Let $x_{ap} \in \{0,1\}$ be the value assigned to an edge $\cE{a}{p} \in E$ in $\M$ returned by LP \ref{LP_group_approx}, then $\displaystyle\sum_{a \in A_h}x_{ap} \le \cGU + \Delta$ $\forall h \in [\chi]$ after all the items are returned to all the groups they belonged to in the original graph. Therefore, $\M$ violates constraint \ref{LP-10} by an additive factor of at most $\Delta$.
\end{proof}

\begin{lemma}\label{lem:runtime_new}
Given a bipartite graph $G(A, P, E)$ with possibly non-disjoint groups and an optimal solution of \Cref{LP_Primal}, Algorithm \ref{alg3} returns a distribution over integer matchings such that each matching violates group fairness constraints by an additive factor of at most $\Delta$, in polynomial time.
\end{lemma}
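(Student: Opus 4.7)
The plan is to mirror the proof of \Cref{lem:2g_approx_runtime} essentially verbatim, with \Cref{LP_group_approx} playing the role that \Cref{LP_mod} played there, and with the $g$-scaling in place of the $2g$-scaling. The input is an optimal solution $\X$ of \Cref{LP_Primal} (augmented with constraint \ref{LP-2}). The first step is to invoke \Cref{ob:polytope_new} to conclude that $\frac{\X}{g}$ is a feasible solution of \Cref{LP_group_approx}, so that the call to \hyperref[alg:procedure]{Distribution-Calculator} begins inside the relevant polytope.

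Next, I would observe that \Cref{LP_group_approx}, when instantiated on the modified graph $G'$ from step \ref{2step:disjoint} (where the groups are now disjoint), is exactly $GFLP$ with upper bounds $\left\lceil \cGU/g \right\rceil$ and zero lower bounds. Hence \Cref{lem:loop_invariant} applies: each iterate $\X^{(i)}$ produced inside Algorithm \ref{alg:procedure} lies inside the polytope of \Cref{LP_group_approx}. In particular, whenever $\X^{(i-1)}$ is non-empty, step \ref{1step:MM} of Algorithm \ref{alg:procedure} returns a non-empty integer matching on $G'$, which is integral by \Cref{lem:integrality} (since the groups in $G'$ are disjoint and the bounds $\left\lceil \cGU/g \right\rceil$ are integers).

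The group-fairness accounting is then handled by \Cref{lem:grp_fair}: when each matching $\M^{(i)}$ produced on $G'$ is reinterpreted as a matching on the original graph $G$ (by restoring each item to all the original groups it belonged to), it violates the original constraint \ref{LP-10} by at most an additive $\Delta$. Combining with \Cref{lem:convexity} applied to the run of Algorithm \ref{alg:procedure} on input $\frac{\X}{g}$, the output tuples $\{(\M^{(i)}, \beta^{(i)})\}_{i \in [k]}$ satisfy $\sum_{i} \beta^{(i)} \M^{(i)} = \frac{\X}{g}$ with $\sum_{i} \beta^{(i)} = 1$, i.e., we obtain a genuine distribution over integer matchings that are each $\Delta$-approximately group-fair.

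Finally, the polynomial runtime follows from two observations already in hand: \Cref{LP_Primal} (augmented with constraint \ref{LP-2}) can be solved in polynomial time, and by \Cref{lem:runtime} the Distribution-Calculator subroutine terminates in polynomially many iterations, each of which solves \Cref{LP_group_approx} in polynomial time. The only point that requires care — but which is already covered by \Cref{lem:loop_invariant} and \Cref{lem:grp_fair} — is the separation between the ``disjoint-groups'' world on $G'$, where the Distribution-Calculator operates and all iterates stay feasible, and the ``original-groups'' world on $G$, where the additive $\Delta$ slack is absorbed only once items are returned to all their groups; no extra violation is accumulated across iterations because the analysis of \Cref{lem:grp_fair} applies matching-by-matching.
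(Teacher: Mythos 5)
Your proposal is correct and follows essentially the same route as the paper: the paper's own proof is a one-line reduction to the proof of \Cref{lem:2g_approx_runtime}, noting only that \Cref{lem:grp_fair} now gives a $\Delta$-additive group-fairness violation instead of exact group fairness, and your elaboration via \Cref{ob:polytope_new}, \Cref{lem:loop_invariant}, \Cref{lem:integrality}, \Cref{lem:convexity}, and \Cref{lem:runtime} is precisely the chain of lemmas that argument relies on.
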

\begin{proof}
The proof is similar to the proof of \Cref{lem:2g_approx_runtime} with one key difference that in each iteration, the matching being computed in step \ref{1step:MM} of Algorithm \ref{alg:procedure} does not satisfy group fairness constraints but violates group fairness constraints by an additive factor of at most $\Delta$ by \Cref{lem:grp_fair}.
\end{proof}

\begin{proof}[Proof of \Cref{thm:approx_1}]
Let $\X$ be any optimal solution of \Cref{LP_Primal} augmented with \ref{LP-2}, then, Algorithm \ref{alg3} can be used to represent $\frac{\X}{g}$ as a convex combination of integer matchings that violate group fairness constraints by an additive factor of at most $\Delta$, in polynomial time by \Cref{lem:runtime_new}. By setting $\hat{x} = \X$, $\delta = 0$, and $t = g$ in \Cref{lem:ind_fair}, we have $\forall a \in A, S \subseteq N(a)$,
$$\frac{L_{a,S}}{g} \leq \Pr_{\M \sim \cD}[\M \text{ matches } a \text{ to some platform in } S] \leq \frac{U_{a,S}}{g}$$
The run time of Algorithm \ref{alg:procedure} has been shown to be polynomial in \Cref{lem:runtime}, since \Cref{LP_Primal} can be solved in polynomial time, Algorithm \ref{alg3} also runs in polynomial time. This proves the theorem.
\end{proof}

\section{Experiments}\label{experiments}
In this section, we apply our main contribution, approximation algorithm \ref{alg4} from \Cref{thm:approx_2}, on two real-world datasets. The runtime bottleneck of our primary solution (Algorithm \ref{alg4}) is the execution time of \Cref{LP_Primal} (appendix). \Cref{LP_Primal} is a simplified version of \Cref{LP_disjoint} with polynomial number of variables and constraints and Algorithm \ref{alg4} solves \Cref{LP_Primal} exactly once. Therefore, this solution is scalable whenever a practical LP solver is used. In our experiments on standard datasets, the algorithm performs much better than the $2(\Delta + 1)(\log (n/\epsilon)+1)$ approximation guarantee provided by \Cref{thm:approx_2}. Here $n$ is the total number of items, $\Delta$ is the maximum number of groups an item can belong to, and $\epsilon > 0$ is a small value.  There are no comparison experiments since there are no benchmarks for solving this exact problem. We use experiments to validate and demonstrate the practical efficiency of Algorithm \ref{alg4}.
    
\subsection{Datasets}\label{datasets}
\paragraph{Employee Access data}\footnote{https://www.kaggle.com/datasets/lucamassaron/amazon-employee-access-challenge}: This data is from Amazon, collected from 2010-2011, and published on the Kaggle platform. We use the testing set with 58921 samples for our experiments. Each row in the dataset represents an access request made by an employee for some resource within the company. In our model, the employees and the resources correspond to items and platforms, respectively, and each request represents an edge. We group the employees based on their role family. An employee can make multiple requests, each under a different role family. Therefore, each item can have edges to different platforms and belong to more than one group. 
We run our experiments on datasets of sizes 1000, 2000, 3000, and 5000 sampled from this dataset.
    
\paragraph{Grant Application Data}\footnote{https://www.kaggle.com/competitions/unimelb/data}: This data is from the University of Melbourne on grant applications collected between 2004 and 2008 and published on the Kaggle platform. We use the training set with 8,707 grant applications for our experiments. In our model, the applicants and the grants correspond to items and platforms, respectively, and each grant application represents an edge. We group the applicants based on their research fields. The same applicant, de-identified in the dataset, can apply to different grants under different research fields represented as RFCD code in the dataset. Therefore, each item can have edges to different platforms and belong to more than one group.
    
\subsection{Experimental Setup and Results}\label{exp_setup_results}
We implement our algorithm in Python 3.7 using the libraries NumPy, scipy, and Pandas. All the experiments we run using Google colab notebook on a virtual machine with Intel(R) Xeon(R) CPU $@$ $2.20$GHz and $13$GB RAM. Both the datasets on which we run our algorithm, are taken from Kaggle. We run our experiments on one complete dataset and three different sample sizes on another dataset. The sample size denotes the total number of rows present in the unprocessed sample. The total number of edges can differ from the sample size after data cleaning like removing null values and dropping duplicate edges if any. For group fairness bounds, we set the same upper and lower bounds for each platform group pair. If $n$ is the number of items, $m$ is the number of platforms, and $g$ is the number of groups, the upper bound is $\frac{kn}{mg}$, where $k = \lceil\frac{mg}{n} \rceil$. All the lower bounds are set to $0$. For individual fairness constraints, we first choose a random permutation of the platforms to create a ranking and then add constraints such that an item should have $\frac{r}{2}\%$ chance of being matched to a platform in the top $r\%$ in the ranking. For all the runs, $\epsilon = 0.0001$.

We use the solution obtained by solving \Cref{LP_disjoint} as an upper bound on $OPT$. We denote it by $UB$, and $SOL$ denotes the expected size of the solution given by Algorithm \ref{alg4} on different samples. Let $2(\Delta + 1)(\log (n/\epsilon) + 1)$ be denoted by `approx'. In Table \ref{table1}, we compare the actual approximation ratio, $UB/SOL$, with the theoretical approximation ratio, `approx'. As can be seen in \Cref{table1}, in our experiments on standard real datasets, the algorithm performs much better than the worst-case theoretical guarantee provided by Algorithm \ref{alg4}. We repeatedly apply our approximation algorithm from \Cref{thm:approx_2} on multiple datasets sampled from the Employee Access dataset under the same experimental setup except for the $\epsilon$-value which is now set to $0.001$. We see that the algorithm continues to perform much better than the guarantee of $2(\Delta + 1)(\log (n/\epsilon)+1)$ approximation provided in the analysis of the algorithm in \Cref{sec:delta_approx}. The results can be seen in \Cref{table1000}, \Cref{table2000}, \Cref{table3000}, and \Cref{table5000}.

\begin{table*}[t]
\centering
\begin{tabular}{|l|l|l|l|l|l|l|l|}
\hline
    Dataset & Sample Size & $\Delta$ & $\frac{UB}{SOL}$ & approx  & \shortstack[l]{no. of match- \\ ings} & \shortstack[l]{run-time \\ (seconds)}\\
\hline
    Employee Access data & 1000 &  3 & 5.43 & 191.2 & 892 & 11\\
\hline
    Employee Access data & 2000 & 3 & 7.24 & 196.8 & 1871 & 60\\
\hline
    Employee Access data & 3000& 4 & 9.19 & 250 & 2786 & 180\\
\hline
    Employee Access data & 5000 & 4 & 15.98 & 254 & 4651 & 900\\
\hline
    Grant Application Data & 8707 & 12 & 7.92 & 652 & 3836 & 540\\
\hline
\end{tabular}
\caption{Comparison of solution values on real-world datasets.}
\label{table1}
\end{table*}
    
\begin{table*}[t]
\centering
\begin{tabular}{|l|l|l|l|l|l|}
\hline
     $\Delta$ & $\frac{UB}{SOL}$ & approx  & \shortstack[l]{no. of mat- \\ chings} & \shortstack[l]{run-time \\ (seconds)}\\
\hline
    3 & 6.74 & 164.82 & 914 & 17.1\\
\hline
    3 & 3.39 & 164.58 & 908 & 17.8\\
\hline
    3 & 4.45 & 164.68 & 923 & 17.8\\
\hline
    3 & 6.34 & 164.63 & 897 & 16.6\\
\hline
    3 & 5.04 & 164.89 & 913 & 16.7\\
\hline
    3 & 3.78 & 164.83 & 905 & 20.4\\
\hline
    3 & 3.57 & 164.84 & 914 & 17.7\\
\hline
    3 & 5.4 & 164.76 & 904 & 17\\
\hline
    2 & 7.33 & 123.54 & 906 & 16.9\\
\hline
    2 & 4.11 & 123.6 & 935 & 17.7\\
\hline
\end{tabular}
\caption{Comparison of solution values with the theoretical bound for samples of size $1000$.}
\label{table1000}
\end{table*}

\begin{table*}
\centering
\begin{tabular}{|l|l|l|l|l|l|}
\hline
     $\Delta$ & $\frac{UB}{SOL}$ & approx  & \shortstack[l]{no. of mat- \\ chings} & \shortstack[l]{run-time \\ (seconds)}\\
\hline
    3 & 8.54 & 170.56 & 1852 & 105.5\\
\hline
    3 & 9.97 & 170.6 & 1863 & 117\\
\hline
    4 & 9.94 & 213.36 & 1850 & 105.1\\
\hline
    3 & 8.52 & 170.53 & 1879 & 108.7\\
\hline
    3 & 6.53 & 170.76 & 1864 & 109.1\\
\hline
    4 & 10.54 & 213.48 & 1868 & 113.3\\
\hline
    4 & 9.75 & 213.13 & 1867 & 108.1\\
\hline
    4 & 9.55 & 213.17 & 1865 & 105.5\\
\hline
    4 & 7.62 & 213.28 & 1848 & 107\\
\hline
    3 & 9.60 & 170.79 & 1846 & 103.1\\
\hline
\end{tabular}
\caption{Comparison of solution values with the theoretical bound for samples of size $2000$.}
\label{table2000}
\end{table*}

\begin{table*}
\centering
\begin{tabular}{|l|l|l|l|l|l|}
\hline
     $\Delta$ & $\frac{UB}{SOL}$ & approx  & \shortstack[l]{no. of mat- \\ chings} & \shortstack[l]{run-time \\ (seconds)}\\
\hline
    3 & 9.52 & 173.48 & 2828 & 328.7\\
\hline
    4 & 11.86 & 216.69 & 2793 & 384.4\\
\hline
    4 & 11.39 & 217.43 & 2816 & 318.7\\
\hline
    4 & 14.28 & 217.22 & 2825 & 288.5\\
\hline
    4 & 12.0 & 217.08 & 2798 & 317.4\\
\hline
    4 & 11.46 & 217.01 & 2797 & 322.7\\
\hline
    4 & 10.95 & 216.71 & 2787 & 313.5\\
\hline
    4 & 13.54 & 217.05 & 2805 & 290.45\\
\hline
    3 & 9.12 & 173.67 & 2791 & 374.6\\
\hline
    3 & 8.51 & 173.66 & 2760 & 314.7\\
\hline
\end{tabular}
\caption{Comparison of solution values with the theoretical bound for samples of size $3000$.}
\label{table3000}
\end{table*}

\begin{table*}
\centering
\begin{tabular}{|l|l|l|l|l|l|}
\hline
     $\Delta$ & $\frac{UB}{SOL}$ & approx  & \shortstack[l]{no. of mat- \\ chings} & \shortstack[l]{run-time \\ (seconds)}\\
\hline
    4 & 14.13 & 221.28 & 4671 & 1315.4\\
\hline
    4 & 13.28 & 220.78 & 4666 & 1332.6\\
\hline
    4 & 12.70 & 221.29 & 4670 & 1298.9\\
\hline
    4 & 18.81 & 221.14 & 4661 & 1292.6\\
\hline
    4 & 20.97 & 220.89 & 4617 & 1160.8\\
\hline
    5 & 13.76 & 265.67 & 4626 & 1494.9\\
\hline
    4 & 10.9 & 221.21 & 4646 & 1286.2\\
\hline
    5 & 11.95 & 265.23 & 4648 & 1295.6\\
\hline
    4 & 20.1 & 221.08 & 4609 & 1166.9\\
\hline
    5 & 9.88 & 265.09 & 4658 & 1229.5\\
\hline
\end{tabular}
\caption{Comparison of solution values with the theoretical bound for samples of size $5000$.}
\label{table5000}
\end{table*}
\section{Conclusion}
Various notions of group fairness and individual fairness in matching have been considered. However, to the best of our knowledge, this is the first work addressing both the individual and group fairness constraints in the same instance. Our work leads to several interesting open questions like improving the $O(\Delta\log{n})$ approximation ratio in Theorem~\ref{thm:approx_2} and extending our approximation results to the setting with lower bounds, and matching with two-sided preferences.
\section{Acknowledgements}
We thank Shankar Ram for the initial discussions. AP was supported in part by the Empower Programme of the Kotak IISc AI-ML Centre and the Walmart Center for Tech Excellence at IISc (CSR Grant WMGT-23-0001). AL was supported in part by SERB Award CRG/2023/002896, Pratiksha Trust Young Investigator Award and the Walmart Center for Tech Excellence at IISc (CSR Grant WMGT-23-0001). PN was supported by SERB Grant CRG/2019/004757.
\bibliographystyle{alpha}
\bibliography{references}
\appendix
\section{Appendix}

\subsection{Dealing with infeasibility}\label{sec:LPinfeasibility}
The introduction of individual fairness constraints introduces the challenge of having inconsistent constraints that could result in the LPs \ref{LP_Primal_IF} and \ref{LP_disjoint} becoming infeasible. To address this, one solution is to introduce a variable to calculate the smallest multiplicative relaxation
of the individual fairness constraints, ensuring the feasibility of both LPs.
Let $t \in [0, 1]$ be the additional variable. Given an instance of our problem, $\cI$, with $n$ items and $m$ platforms, we can formulate another LP where the lower bounds on all the individual fairness constraints are scaled by $t$, and the objective of this new LP is to maximize $t$.

\begin{lp}\label{LP_feasibility}
\begin{align}
&\max \quad t \\
\text{such that}\quad &t\cdot L_{a,k} \leq \sum_{p \in R_{a,k}}\cx \leq U_{a,k},  & \forall a \in A, \forall k \in [m] \\  
&l_p \leq \sum_{a \in N(p)} \cx \leq u_p , &\forall p \in P \\
&\cGL \leq \sum_{a \in A_h} \cx \leq \cGU , & \forall p \in P, \forall h \in [\chi] \label{LP-PB}\\
& 0 \le t \le 1 \\
&0 \leq \cx \leq 1 & \forall a \in A, \; \forall p\in P 
\end{align}
\end{lp}

Since we address strong group fairness (\Cref{def:strict_group_fairness}) only for disjoint groups (\Cref{disjoint_groups}), if \Cref{LP_feasibility} is infeasible, then there is no group-fair matching that satisfies the platform bounds (\Cref{LP-PB}). In this case, we just say that there is no group-fair matching. If \Cref{LP_feasibility} is feasible, let $t^*$ be its optimal solution, then scaling the lower bounds of all the individual fairness constraints of \Cref{LP_disjoint} by $t^*$ ensures feasibility.  We do not scale down the upper bounds because that would make the constraint tighter. 

It is easy to see how this method can be replicated to  \Cref{LP_Primal_IF} to ensure feasibility. Since the group fairness constraints only have upper bounds in \Cref{LP_Primal_IF}, a group-fair matching will always exist. Therefore, we can always compute a distribution over group-fair matchings such that the relaxed individual fairness constraints are satisfied with the approximation factor mentioned in our result \Cref{thm:approx_2}. Formally speaking, Algorithm \ref{alg4} computes a distribution $\cD$ over a set of group-fair matchings such that given the individual fairness parameters, $L_{a,k},U_{a,k} \in [0,1]$, for each item $a \in A$ and subset $R_{a,k}$ $\forall$ $k \in [m]$, \begin{align*}
      &\frac{1}{f_{\epsilon}}\left(t^* \cdot L_{a,k} - \epsilon\right) \leq \Pr_{\M \sim \cD}[\exists p\in R_{a,k} \textrm{ s.t. } (a,p)\in M] \\ &\leq  \frac{1}{f_{\epsilon}}\left(U_{a,k} + \epsilon\right).
\end{align*}

Here $f_{\epsilon} = \cO(\Delta\log (n/\epsilon))$, $n$ is the total number of items and each team can belong to at most $\Delta$ groups.

\subsection{Proof of \Cref{lem:integrality}}\label{appendix_integrality}
The integrality of the polytope of $GFLP$ is implicit in \cite{Rank-MaximalAndPOpular} where they construct a flow-network for the Classified Rank-Maximal Matching problem when the classes of each vertex form a laminar family. We provide an explicit proof of \Cref{lem:integrality} using the following Claim.

\begin{claim}\label{claim:integrality}\label{integrality}
In any basic feasible solution of $GFLP$, $\displaystyle \sum_{a \in N(p)} \cx$ is an integer, $\forall p \in P$ if the $l_p$, $u_p$, $\cGL$ and $\cGU$ values are integers, $\forall p \in P$, $\forall h \in [\chi]$.
\end{claim}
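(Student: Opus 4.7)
The plan is to prove the claim by exhibiting a flow network $H$ whose feasible $s$--$t$ flows are in bijection with feasible solutions of $GFLP$, and then appealing to the classical integrality theorem stating that network-flow polytopes with integer lower and upper capacity bounds have only integer vertices. Since this section assumes disjoint groups, each item $a$ belongs to a unique group $h(a)$, so every variable $x_{ap}$ appears in exactly one group-platform constraint, one platform constraint, and its box constraints. This is precisely the pattern that admits a clean flow-network encoding, and it fails without disjointness.

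Concretely, I would build $H$ with a source $s$, a sink $t$, a node $v_a$ for each item $a \in A$, a node $v_{p,h}$ for each platform-group pair $(p,h)$ with $A_h \cap N(p) \neq \emptyset$, and a node $v_p$ for each platform $p \in P$. The arcs are: $s \to v_a$ with capacity interval $[0, |N(a)|]$ for each $a \in A$; an arc $v_a \to v_{p,h(a)}$ with capacity interval $[0,1]$ for each edge $(a,p)\in E$, whose flow is identified with $x_{ap}$; an arc $v_{p,h} \to v_p$ with capacity interval $[l_{p,h}, u_{p,h}]$ for each relevant pair; and an arc $v_p \to t$ with capacity interval $[l_p, u_p]$ for each $p \in P$. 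Flow conservation at $v_{p,h}$ forces the flow on $v_{p,h} \to v_p$ to equal $\sum_{a \in A_h} x_{ap}$, and conservation at $v_p$ forces the flow on $v_p \to t$ to equal $\sum_{a \in N(p)} x_{ap}$; the capacity intervals then exactly encode the group and platform constraints of $GFLP$, and similarly the box constraints on $x_{ap}$ become the $[0,1]$ capacity intervals on the middle arcs. Hence a feasible solution of $GFLP$ is the same object as a feasible flow in $H$.

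Given this correspondence, the constraint matrix of the flow LP is a network matrix and therefore totally unimodular; combined with integrality of all capacity bounds, every vertex of the flow polytope is integer-valued (this is the standard circulation-with-bounds integrality theorem). Consequently, every basic feasible solution of $GFLP$ has $x_{ap} \in \{0,1\}$ for every $(a,p)\in E$, which immediately implies the claimed statement that $\sum_{a \in N(p)} x_{ap}$ is a nonnegative integer for every $p \in P$.

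The main obstacle, and the reason the paper isolates this claim, is checking that the flow network captures every constraint of $GFLP$ exactly, with no spurious additional conservation restrictions. This requires the disjointness of groups: otherwise a single variable $x_{ap}$ would need to contribute to several group-platform constraints simultaneously, which is incompatible with a single arc in a network and would in general admit fractional vertices. Once disjointness is used to pin down the bijection, the remainder of the argument is a direct invocation of the classical flow-integrality theorem, so the work is really in setting up and verifying the reduction rather than in any delicate combinatorial analysis.
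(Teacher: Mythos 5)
Your proof is correct, but it takes a genuinely different route from the paper's. You reduce $GFLP$ to a flow network with integer lower and upper arc capacities and invoke total unimodularity of the node--arc incidence matrix, which yields the stronger statement that \emph{every} coordinate $x_{ap}$ is integral at a basic feasible solution; the claim about $\sum_{a \in N(p)} x_{ap}$ then follows trivially. The paper instead gives a self-contained perturbation argument: assuming $\sum_{a \in N(p')} x_{ap'}$ is fractional at a vertex, it locates a single fractional coordinate $x_{bp'}$, perturbs it by $\pm w$ where $w$ is the minimum slack to the nearest integer in the box, group, and platform constraints (all of which have integer bounds), and exhibits the vertex as the midpoint of two feasible points --- a contradiction. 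Your reduction is essentially the approach the paper explicitly attributes to the prior flow-network construction of the cited Rank-Maximal Matching work, and it is not circular, since you do not rely on \Cref{lem:integrality} (which the paper derives \emph{from} this claim) but establish full integrality independently. What your approach buys is brevity and the stronger conclusion in one step, at the cost of importing the circulation-integrality machinery and of verifying, as you correctly note, that conservation at the item, group-platform, and platform nodes introduces no spurious constraints --- a verification that genuinely requires disjointness of groups, exactly as the paper's own argument does when it writes $\sum_{a \in N(p')} x_{ap'} = \sum_{i} \sum_{a \in A_{h_i}} x_{ap'}$ without double counting. What the paper's two-step elementary argument buys is transparency about precisely where integrality of the bounds enters, and a template (perturbing one coordinate for the claim, then two coordinates within a platform for the lemma) that is reused verbatim in the proof of \Cref{lem:integrality}.
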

\begin{proof}
Let $\X$ be a basic feasible solution of $GFLP$. For an arbitrary platform, $p' \in P$, let there be $r$ groups in $C_{p'}$, say $\ch_1, \ch_2 \dots \ch_r$. 
Suppose $\displaystyle \sum_{a \in N(p')} x_{a,p'} = \sum_{i=1}^r \sum_{a \in \ch_i} x_{ap'}$ is not an integer. This implies that there exists at least one group, say $\ch_q \in C_{p'}$, such that $\displaystyle \sum_{a \in \ch_q}x_{a,p'}$ is fractional, which in turn implies that there is at least one item, say $b \in \ch_q$, such that $x_{bp'}$ is fractional. Let  
\begin{align*}
\begin{split}
    & w = \min\Big( x_{bp'}, \lceil x_{bp'} \rceil - x_{bp'}, \big\lceil \sum_{a \in \ch_q}x_{a,p'} \big\rceil - \sum_{a \in \ch_q}x_{a,p'}, \\& \big\lceil \sum_{a \in N(p')} x_{a,p'} \big\rceil - \sum_{a \in N(p')} x_{a,p'}, 
    \sum_{a \in \ch_q}x_{a,p'} - \big\lfloor \sum_{a \in \ch_q}x_{a,p'} \big\rfloor \\ & ,\sum_{a \in N(p')} x_{a,p'} - \big\lfloor \sum_{a \in N(p')} x_{a,p'} \big\rfloor \Big).
\end{split}
\end{align*}
Since $x_{a,p'}$, $\sum_{a \in \ch_q}x_{a,p'}$, and $\sum_{a \in N(p')} x_{a,p'}$ are not integers by our assumption, $w \in (0,1)$. Let us modify $\X$ by replacing $x_{bp'}$ with $x_{bp'}+w$, and let the resulting ordered set be $\cy$. By the definition of $w$ and the assumption that $\forall p \in P$, $\forall h \in [\chi]$, the $u_p$ and $\cGU$ values are integers, $\cy$ doesn't violate the constraints \ref{LP-4} to \ref{LP-8}. Similarly, since $\forall p \in P$, $\forall h \in [\chi]$, the $l_p$ and $\cGL$ values are assumed to be integers, if we modify $\X$ by replacing $x_{bp'}$ with $x_{bp'}-w$, the resulting ordered set, say $\cz$, will also not violate the constraints \ref{LP-4} to \ref{LP-8}. Hence $\cy$ and $\cz$ are feasible solutions of $GFLP$. Clearly, 
$$\X = \frac{1}{2}\cy+\frac{1}{2}\cz,$$ which is a contradiction since a basic feasible solution of any LP cannot be written as a convex combination of two other points in the polytope of the same LP.
\end{proof}

\begin{proof}[Proof of \Cref{lem:integrality}]
Let $\X$ be a basic feasible solution of $GFLP$. Let us suppose that $\X$ is fractional. From \Cref{claim:integrality}, we know that $\displaystyle \sum_{a \in N(p)} \cx$ is an integer, $\forall p \in P$, in any vertex solution of $GFLP$. Therefore, for some arbitrary platform, say $p' \in P$, if there is an edge, say $\cE{b}{p'}$ where $b \in A$, such that $x_{bp'}$ is fractional, then there must be at least one other edge $\cE{b'}{p'}$ where $b' \in A$, such that $x_{b'p'}$ is also fractional. Let $b \in \ch_1$ and $b' \in \ch_2$ where $\ch_1, \ch_2 \in C_{p'}$ and let $x_{bp'} > x_{b'p'}$ without loss of generality. 
\begin{align*}
 \begin{split}
    & w = \min \Big(1-x_{bp'}, x_{b'p'}, \sum_{a \in \ch_1}x_{ap'} - \Big\lfloor\sum_{a \in \ch_1}x_{ap'}\Big\rfloor,  \\& \sum_{a \in \ch_2}x_{ap'} - \Big\lfloor\sum_{a \in \ch_2}x_{ap'}\Big\rfloor, \Big\lceil\sum_{a \in \ch_1}x_{ap'} \Big\rceil - \sum_{a \in \ch_1}x_{ap'}, \\ &\Big\lceil\sum_{a \in \ch_2}x_{ap'}\Big\rceil - \sum_{a \in \ch_2}x_{ap'} \Big)
\end{split}
\end{align*}
Let us modify $\X$ by replacing $x_{bp'}$ and $x_{b'p'}$ with $x_{bp'}+w$ and $x_{b'p'} - w$, respectively, and let the resulting ordered set be $\cy$. By the definition of $w$ and the assumption that $\forall p \in P$, $\forall h \in [\chi]$, the $\cGU$ values are integers, $\cy$ doesn't violate the constraints \ref{LP-6} to \ref{LP-8}. Similarly, if we modify $\X$ by replacing $x_{bp'}$ and $x_{b'p'}$ with $x_{bp'}-w$ and $x_{b'p'} + w$, respectively, the resulting ordered set, say $\cz$, will also not violate the constraints \ref{LP-6} to \ref{LP-8}. It is easy to see that $$\displaystyle \sum_{a \in N(p')} y_{ap'} = \sum_{a \in N(p')} z_{ap'} = \sum_{a \in N(p')} x_{ap'}$$
Hence $\cy$ and $\cz$ also satisfy constraint \ref{LP-4} and hence are feasible solutions of $GFLP$. Clearly, 
$$\X = \frac{1}{2}\cy+\frac{1}{2}\cz,$$ which is a contradiction since a basic feasible solution of any LP cannot be written as a convex combination of two other points in the polytope of the same LP.
\end{proof}

\section{Extension to other notions of fairness}\label{appendix:extension}
Before delving into how to extend our results to the fairness notions in \Cref{subsec:fair_notions}, let us look at a more generic definition of individual fairness that provides a framework to accommodate various individual fairness settings, including the one in our problem (\Cref{def:ind_fairness}). 

\begin{definition}[{\bf Generic Probabilistic Individual Fairness}]
    Given {\em individual fairness parameters} $L_{a,S} \in [0,1]$ and $U_{a,S} \in [0,1]$ for each item $a$ and each subset $S$ of $N(a)$, where $N(a)$ denotes the neighborhood of item $a$ in $G$. A distribution $\cD$ on matchings in $G$ is {\em probabilistic individually fair} if and only if $\forall a\in A,\forall S \subseteq N(a)$,
\begin{equation}\label{eq:GenIndFairness}
\begin{aligned}
    & L_{a,S} \leq \Pr_{M \sim \cD}[\exists p\in S \textrm{ s.t. } (a,p)\in M]  \leq U_{a,S} \text{ }
\end{aligned}
\end{equation}
\end{definition}

It is easy to see how \Cref{eq:GenIndFairness} can not only capture the probabilistic individual fairness constraints in our problem (\Cref{def:ind_fairness}) but also capture the requirement that items are matched to a low-ranking platform in their preference list with low probability. This model allows users to set Individual Fairness constraints based on their requirements. All our results in Theorems \ref{thm:approx_2}, \ref{thm:approx_1}, \ref{thm:approx_3} and \ref{thm:exact_algo} can be extended to this notion of fairness by simply replacing the probabilistic individual fairness constraints in \Cref{LP_Primal_IF} and \Cref{LP_disjoint} with \Cref{eq:GenIndFairness}. Our results provide the same approximation guarantee for the {\em Generic Probabilistic Individual Fairness} as the {\em Probabilistic Individual Fairness}. The only difference is that the algorithms may not stay polynomial in the number of nodes and edges of the input graph $G$ due to the potentially exponential number of constraints in the LPs. However, the runtime will stay polynomial in the number of subsets with non-trivial bounds in the input instance. 

Next, we will look at extending our results to the fairness notions in \Cref{subsec:fair_notions}. Let the input instance also provide the lower bound on the size of the resulting matching, say $\zeta$.

\begin{algorithm}[tb]\label{alg:updateLP}
\caption{Update-LP$(LP, z, \zeta)$}
\label{algmod}
\textbf{Input} :  $LP, z, \zeta$ \\
\textbf{Output} : Modified $LP$ \\
\begin{algorithmic}[1]
\STATE Update the objective of $LP$ to $$\max z$$ \\
\STATE Add the following constraint to $LP$ $$\sum_{(a,p) \in E} x_{ap} \geq \zeta$$\\
\STATE Return $LP$
\end{algorithmic}
\end{algorithm}

\subsection{Maxmin group fairness}\label{appendix_maxmin_extension}
In order to express {\em Maxmin group fairness} into \Cref{LP_disjoint}, we first execute \hyperref[alg:updateLP]{Update-LP}(\Cref{LP_disjoint}, $\mu, \zeta$). Let us call the resulting LP, $MAXLP$. Now we update constraint \ref{LP-6} in $MAXLP$ to
    \begin{equation}\label{lp_maxmin}
    \sum_{(a,p) \in E} x_{ap} - \mu \geq 0, \forall p \in P, \forall h \in [\chi] 
    \end{equation}
Equation \ref{lp_maxmin} ensures that even the group with the minimum representation in any platform has a representation at least $\mu$ and with an objective of maximizing the value of $\mu$, $MAXLP$, returns a $\mu$ value that maximizes the representation of the least represented group subject to other constraints. 
Let $\X^*, \mu^*$ denote an optimal solution of $MAXLP$, then $\X^*$ is a feasible solution of \Cref{LP_disjoint} if we set $\cGL = \mu$, $\forall h \in [\chi], \forall p \in P$.
By replacing Step 1 in Algorithm \ref{alg1} with `Solve $MAXLP$ on $G$ and store the result in $\X, \mu$' we can obtain the results from \Cref{thm:exact_algo} in this setting but the distribution is over a set of {\em \maxmin group fair} matching. This proves part of $(1)$ in \Cref{thm:ext_disjoint}.

\subsection{Maxmin individual fairness}\label{appendix_maxmin_extension_ind}
Let \Cref{LP-2} be changed to 
$$\sum_{p \in P}\cx \geq \mu, \forall a \in A$$
Therefore, {\em Maxmin individual fairness} constraint can now be expressed into \Cref{LP_disjoint} and \Cref{LP_Primal} by executing $MMLP = $ \hyperref[alg:updateLP]{Update-LP}(\Cref{LP_disjoint}, $\mu, \zeta$) and $MMLP' =$ \hyperref[alg:updateLP]{Update-LP}(\Cref{LP_Primal}, $\mu, \zeta$), respectively. Under this setting, where \Cref{LP-2} has been updated, if $\X^*, \mu^*$ denote an optimal solution of $MMLP$, then $\X^*$ is a feasible solution of \Cref{LP_disjoint}. By replacing Step 1 in Algorithm \ref{alg1} with `Solve $MMLP$ on $G$ and store the result in $\X, \mu$' we can obtain the results from \Cref{thm:exact_algo} in this setting. This proves $(2)$ of \Cref{thm:ext_disjoint}.

Similarly if $\cy^*, z^*$ denote an optimal solution of $MMLP'$, then $\cy^*$ is a feasible solution of \Cref{LP_Primal}.
By replacing Step 1 in Algorithm \ref{alg4}, and Algorithm \ref{alg2} with `Solve $MMLP'$ on $G$ and store the result in $\X, \mu$' we can obtain the results from \Cref{thm:approx_2}, and \Cref{thm:approx_1} in this setting where any violation of individual fairness would be a violation of {\em Maxmin individual fairness}. This proves part of $(1)$ and $(2)$ in \Cref{thm:ext}

\subsection{Mindom group fairness}\label{appendix_minmax_extension}
{\em Mindom group fairness} can be expressed into \Cref{LP_disjoint} and \Cref{LP_Primal} by first executing $MINLP = $ \hyperref[alg:updateLP]{Update-LP}(\Cref{LP_disjoint}, $-\mu, \zeta$) and $MINLP' =$ \hyperref[alg:updateLP]{Update-LP}(\Cref{LP_Primal}, $-\mu, \zeta$), respectively and then by updating constraint \ref{LP-6} and \ref{LP-14} in $MINLP$ and $MINLP'$ respectively to 
\begin{equation}\label{lp:minmax}
    \sum_{(a,p) \in E} x_{ap} - \mu \leq 0, \forall p \in P, \forall h \in [\chi] 
\end{equation}
Let $\X^*, \mu^*$ denote an optimal solution of $MINLP$, then $\X^*$ is a feasible solution of \Cref{LP_disjoint} if we set $\cGU = \mu$, $\forall h \in [\chi], \forall p \in P$. Therefore, by replacing Step 1 in Algorithm \ref{alg1} with `Solve $MINLP$ on $G$ and store the result in $\X, \mu$' we can obtain the results from \Cref{thm:exact_algo} in this setting but the distribution is over a set of {\em \minmax group fair} matching. This proves the remaining parts of $(1)$ in \Cref{thm:ext_disjoint}.

Similarly, if $\cy^*, z^*$ denote an optimal solution of $MINLP'$, then $\cy^*$ is a feasible solution of \Cref{LP_Primal} if we set $\cGU = \mu$, $\forall h \in [\chi], \forall p \in P$.
Therefore, by replacing Step 1 in Algorithm \ref{alg4}, and Algorithm \ref{alg2} with `Solve $MINLP'$ on $G$ and store the result in $\X, \mu$' we can obtain the results from \Cref{thm:approx_2}, and \Cref{thm:approx_1} in this setting but the distribution is over a set of {\em \minmax group fair} matching for \Cref{thm:approx_2} and any matching in the distribution violates the {\em \minmax group fairness} condition by an additive factor of at most $(2 - \lambda)\Delta$ for \Cref{thm:approx_1}. This proves the remaining parts of $(1)$ and $(2)$ in \Cref{thm:ext}.

\end{document}